\documentclass{article}
\usepackage{official/iclr2027_conference,times}

\usepackage{amsmath,amsfonts,bm}

\def\eqref#1{equation~\ref{#1}}
\def\1{\bm{1}}

\DeclareMathAlphabet{\mathsfit}{\encodingdefault}{\sfdefault}{m}{sl}
\SetMathAlphabet{\mathsfit}{bold}{\encodingdefault}{\sfdefault}{bx}{n}

\usepackage[T1]{fontenc}
\usepackage{amsmath,amssymb,amsthm}
\newtheorem{mimicxproposition}{Proposition}[section]
\newtheorem{mimicxcorollary}[mimicxproposition]{Corollary}
\usepackage[table]{xcolor}
\usepackage{graphicx,booktabs,tabularx,multirow,makecell,array}
\usepackage{tikz,etoolbox}
\usepackage{algorithm,algpseudocode}
\usepackage{caption,subcaption}
\usepackage{placeins,needspace}
\usepackage{hyperref,url}
\usepackage{microtype}
\definecolor{FixedGray}{HTML}{7A7F87}
\definecolor{CurriculumBlue}{HTML}{3B78A8}
\definecolor{TaskTeal}{HTML}{23866B}
\definecolor{MimicCoral}{HTML}{D45B4C}
\definecolor{MimicLinkRed}{HTML}{EF3B2D}
\definecolor{SonicPurple}{HTML}{7A6FAF}
\colorlet{tablehead}{MimicCoral!12}
\colorlet{oursbg}{MimicCoral!9}
\colorlet{basebg}{FixedGray!6}
\colorlet{curriculumbg}{CurriculumBlue!5}
\colorlet{taskbg}{TaskTeal!5}
\colorlet{sonicbg}{TaskTeal!6}
\newcommand{\mimicx}{MimicX}
\newcommand{\hloop}{MimicX-HLoop}
\graphicspath{{figures/quantitative/}{figures/assets/}{figures/}}
\definecolor{TableGray}{HTML}{EFF0F2}
\definecolor{TableRedLight}{HTML}{FFF6F3}
\definecolor{TableRedMedium}{HTML}{FAE8E4}
\definecolor{TableRedDeep}{HTML}{EDBFB5}
\colorlet{tablehead}{TableRedMedium}
\colorlet{tablegroup}{TableGray}
\colorlet{basebg}{TableGray}
\colorlet{curriculumbg}{TableRedLight}
\colorlet{taskbg}{TableRedMedium}
\colorlet{oursbg}{TableRedDeep}
\colorlet{sonicbg}{TableRedLight}
\definecolor{BadgeAccept}{HTML}{C5DAD2}
\definecolor{BadgeProtect}{HTML}{DED9CF}
\definecolor{BadgeReject}{HTML}{E6BFB3}
\newcommand{\papertablestyle}{%
  \fontsize{7.0}{9.25}\selectfont
  \setlength{\tabcolsep}{3pt}%
  \renewcommand{\arraystretch}{1.13}%
  \renewcommand{\tabularxcolumn}[1]{m{##1}}%
  \setlength{\aboverulesep}{0pt}%
  \setlength{\belowrulesep}{0pt}%
  \setlength{\abovetopsep}{0pt}%
  \setlength{\belowbottomsep}{0pt}%
  \setlength{\cmidrulesep}{0pt}}
\newcommand{\papertablenote}{\fontsize{6.6}{8.35}\selectfont}
\newcommand{\tabletag}[2]{%
  \begingroup
  \ifstrequal{#2}{Accept}{\def\tagfill{BadgeAccept}}{%
    \ifstrequal{#2}{Reject}{\def\tagfill{BadgeReject}}%
      {\def\tagfill{BadgeProtect}}}%
  \tikz[baseline=(tag.base)]\node[inner xsep=1.4pt,inner ysep=.3pt,
    rounded corners=2pt,fill=\tagfill,text=black,draw=black,line width=.3pt,dash pattern=on 1.2pt off .8pt] (tag)
    {\fontsize{5.8}{6.5}\selectfont\strut #2};%
  \endgroup}
\newcommand{\pairtablecaption}[2]{%
  \caption{#1}\label{#2}}

\hypersetup{colorlinks=true,linkcolor=MimicLinkRed,citecolor=MimicLinkRed,urlcolor=MimicLinkRed}
\newcommand{\evidencefigure}[3]{%
  \begin{figure}[!htbp]
    \centering#1
    \caption{#3}\label{#2}
  \end{figure}}

\newsavebox{\finalartbox}
\DeclareCaptionFormat{artside}{%
  \begin{minipage}[t][\dimexpr\ht\finalartbox+\dp\finalartbox\relax][s]{\linewidth}%
    \vspace{0pt}\baselineskip=9.88pt #1#2#3\par
  \end{minipage}}
\newcommand{\finalart}[4]{%
  \sbox{\finalartbox}{#1}%
  \noindent\begin{minipage}[t]{0.27\textwidth}%
    \vspace{0pt}%
    \captionsetup{type=figure,format=artside,justification=justified,singlelinecheck=false,skip=0pt}%
    \def\artcaptionbody{\textbf{#3} #4}%
    \caption[#3]{\artcaptionbody}\label{#2}%
  \end{minipage}\hfill%
  \begin{minipage}[t]{0.7\textwidth}%
    \vspace{0pt}\usebox{\finalartbox}%
  \end{minipage}\par}

\hypersetup{pdftitle={MimicX: Policy-in-the-Loop Supervision Refinement for Video-Driven Humanoid Motion Tracking},pdfauthor={}}

\iclrfinalcopy
\colorlet{MimicHeading}{MimicCoral!82!black}
\newcommand{\MimicWebsiteIcon}{%
  \makebox[12pt][c]{\raisebox{-1.5pt}{%
    \includegraphics[height=10.2pt]{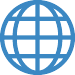}}}}
\newcommand{\MimicGitHubIcon}{%
  \makebox[12pt][c]{\raisebox{-1.6pt}{%
    \includegraphics[height=10.4pt]{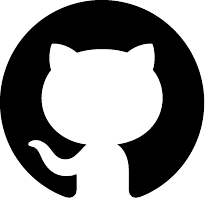}}}}
\newcommand{\MimicHFIcon}{%
  \makebox[12pt][c]{\raisebox{-1.8pt}{%
    \includegraphics[width=12pt]{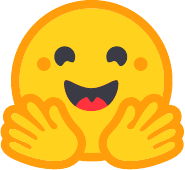}}}}

\newcommand{\MimicXResourceRow}[3]{%
  #1 & \textcolor{black}{\textbf{#2:}} & \href{#3}{\nolinkurl{#3}}\\[2pt]}
\newcommand{\MimicXResourceLinks}{%
  \par\vspace{8pt}\begingroup\normalfont\normalsize\urlstyle{same}%
  \noindent\begin{tabular}{@{}c@{\hspace{5pt}}l@{\hspace{5pt}}l@{}}
    \MimicXResourceRow{\MimicWebsiteIcon}{Project Page}{https://nebulis-lab.com/MimicX}
    \MimicXResourceRow{\MimicGitHubIcon}{GitHub}{https://github.com/NEBULIS-Lab/MimicX}
    \MimicXResourceRow{\MimicHFIcon}{Hugging Face Models}{https://huggingface.co/Shuaijun/MimicX-Policies}
    \MimicXResourceRow{\MimicHFIcon}{Hugging Face Dataset}{https://huggingface.co/datasets/Shuaijun/MimicX-Assets}
  \end{tabular}\par\endgroup}
\newcommand{\MimicXPreprintTitle}{%
  \textcolor{MimicCoral}{MimicX}: Policy-in-the-Loop Supervision Refinement\\
  for Video-Driven Humanoid Motion Tracking}
\fancypagestyle{mimicxfirst}{%
  \fancyhf{}\fancyfoot[C]{\normalfont\normalsize\thepage}%
  }
\makeatletter
\renewcommand{\@maketitle}{%
  \thispagestyle{mimicxfirst}%
  \vbox{\hsize\textwidth\parskip=0pt\centering
    {\color{MimicHeading}\hrule height .8pt}\vskip 11pt
    {\normalfont\fontsize{17}{20}\selectfont\bfseries\upshape\@title\par}%
    \vskip 11pt{\color{MimicCoral!55}\hrule height .35pt}\vskip 13pt
    {\normalfont\normalsize\@author\par}\vskip 10pt}}
\patchcmd{\section}{\large\sc\raggedright}
  {\normalfont\large\bfseries\color{MimicHeading}\raggedright}
  {}{\PackageError{mimicx}{Section style patch failed}{Check the heading macro.}}
\pretocmd{\section}{\Needspace{5\baselineskip}}{}{}
\patchcmd{\subsection}{\normalsize\sc\raggedright}
  {\normalfont\normalsize\bfseries\color{MimicHeading}\raggedright}
  {}{\PackageError{mimicx}{Subsection style patch failed}{Check the heading macro.}}
\patchcmd{\subsubsection}{\normalsize\sc\raggedright}
  {\normalfont\normalsize\bfseries\color{MimicHeading}\raggedright}
  {}{\PackageError{mimicx}{Subsubsection style patch failed}{Check the heading macro.}}
\makeatother
\newsavebox{\MimicAbstractBox}
\renewenvironment{abstract}{%
  \par\vspace{7pt}\begin{lrbox}{\MimicAbstractBox}%
  \begin{minipage}{\dimexpr\textwidth-24.7pt\relax}%
  \normalfont\normalsize
  {\centering\large\bfseries\color{MimicHeading}Abstract\par}\vspace{5pt}%
}{%
  \MimicXResourceLinks
  \par\end{minipage}\end{lrbox}%
  \noindent\begin{tikzpicture}
    \node[draw=MimicCoral!35,fill=MimicCoral!4,line width=.35pt,
      rounded corners=3pt,inner sep=12pt]{\usebox{\MimicAbstractBox}};
  \end{tikzpicture}\par\clearpage}
\hypersetup{pdfauthor={Shuaijun Liu, Chenglong Zhang, Xuhao Liu, Feiyang You, Yifan Liao, Shuyang Hao, Chaozhe Zhang, Chengyu Wu, Zhen Sun, Ningxin Su}}
\author{\parbox[t]{\textwidth}{%
\centering\normalfont\normalsize
\textbf{Shuaijun Liu, Chenglong Zhang, Xuhao Liu, Feiyang You, Yifan Liao}\\[2pt]
\textbf{Shuyang Hao, Chaozhe Zhang, Chengyu Wu, Zhen Sun, Ningxin Su\textsuperscript{*}}\\[6pt]
The Hong Kong University of Science and Technology (Guangzhou)\\[3pt]
\textsuperscript{*}Corresponding author:
\href{mailto:ningxinsu@hkust-gz.edu.cn}{ningxinsu@hkust-gz.edu.cn}}}

\title{\MimicXPreprintTitle}

\begin{document}

\maketitle
% Modern fancyhdr keeps the template's title-group assignment local.
\lhead{Preprint}

\noindent\includegraphics[width=\textwidth]{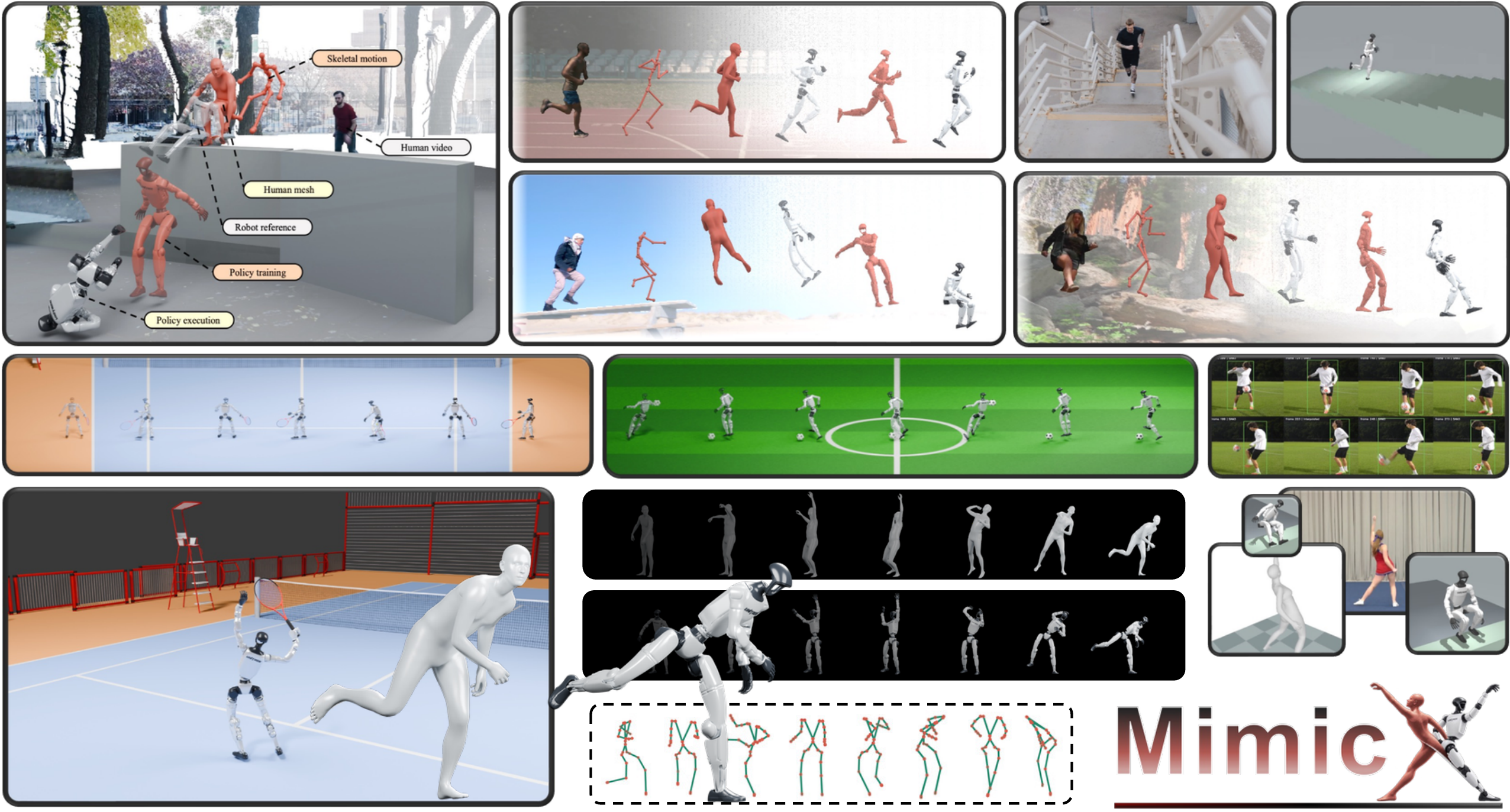}\par
\captionof{figure}{\textbf{From human video to humanoid motion.} Input videos, human reconstruction, robot references, and learned execution span athletic skills and collision-scene tasks.}
\label{fig:teaser}

\begin{abstract}
Human videos provide rich motion targets for humanoid learning, yet visually
plausible references can still produce persistent failures under physics-based
execution. These failures reveal where training supervision should change.
We present \mimicx{}, a policy-in-the-loop framework that uses execution
feedback to refine video-driven humanoid motion tracking. Starting from
reconstructed and retargeted motion, \mimicx{} localizes difficult transitions
and affected body regions, then jointly adapts tracking objectives and the
reset curriculum for policy continuation. Repeated rollout verification
selects execution-priority improvements subject to tracking guards.
Across four core video tasks, \mimicx{} consistently improves tracking
accuracy and Robust Execution Horizon relative to the Fixed Reference
baseline. Task-averaged results show a 25.7\% reduction in body-tracking error
and a 255.6\% increase in execution horizon. Additional video, motion-reference,
and collision-scene studies evaluate the method beyond the core tasks, while
\hloop{} accelerates feedback through heterogeneous execution.
Overall, \mimicx{} turns policy failure into actionable supervision for
deciding what to refine and which refinement to retain.

\end{abstract}

\section{Introduction}
\label{sec:main-introduction}

Human video provides a scalable specification of whole-body behavior.
World-grounded reconstruction, retargeting, and physics-based tracking turn
that specification into humanoid skills
\citep{gvhmr2024,gmr2025,beyondmimic2025,sonic2025}. Yet a robot must realize
the motion with different geometry and dynamics: plausible contact timing,
root motion, or limb coordination can repeatedly fail at the same transition.
Policy execution should therefore inform the supervision used to learn it.

Our central insight is that rollout failures reveal \emph{when} execution
becomes unreliable, \emph{which} body regions deviate, and \emph{how} local,
global, and dynamic errors develop together. \mimicx{} uses this structure
in a diagnosis--refinement--verification loop
(Figure~\ref{fig:main-overview}). Reconstruction and retargeting supply a
reference; diagnosis conditions objective and curriculum updates, followed
by policy continuation. A lower-body failure can jointly strengthen foot
position, height, and velocity supervision around the difficult transition,
while a root failure emphasizes anchor and torso consistency.

Reliable selection matters because a higher reward need not imply more
reliable execution: a candidate can improve reward yet shorten the worst-case
execution horizon. \mimicx{}
prioritizes completion, horizon, and tracking fidelity across repeated
rollouts, retaining the current policy when a proposal fails verification.
\hloop{} accelerates feedback through heterogeneous execution.

Across four core video skills, \mimicx{} improves both tracking error and
worst-case execution horizon relative to Fixed Reference.
Tennis achieves full-horizon success in nine of nine evaluations versus one
of nine for the baseline. Additional video, supplied-motion, direct-policy, and efficiency studies
evaluate broader motion coverage and execution behavior.
Figure~\ref{fig:scene-cases} presents recorded motions in task scenes.
Our contributions are \textbf{(1)} a policy-in-the-loop refinement framework
that converts localized, body-specific execution errors into coordinated
objective and curriculum updates; \textbf{(2)} repeated execution verification
that selects reliable improvements while protecting stronger current policies;
and \textbf{(3)} a heterogeneous backend that accelerates feedback while preserving selection outcomes.

\begin{figure}[!t]
\finalart{\includegraphics[width=0.7\textwidth]{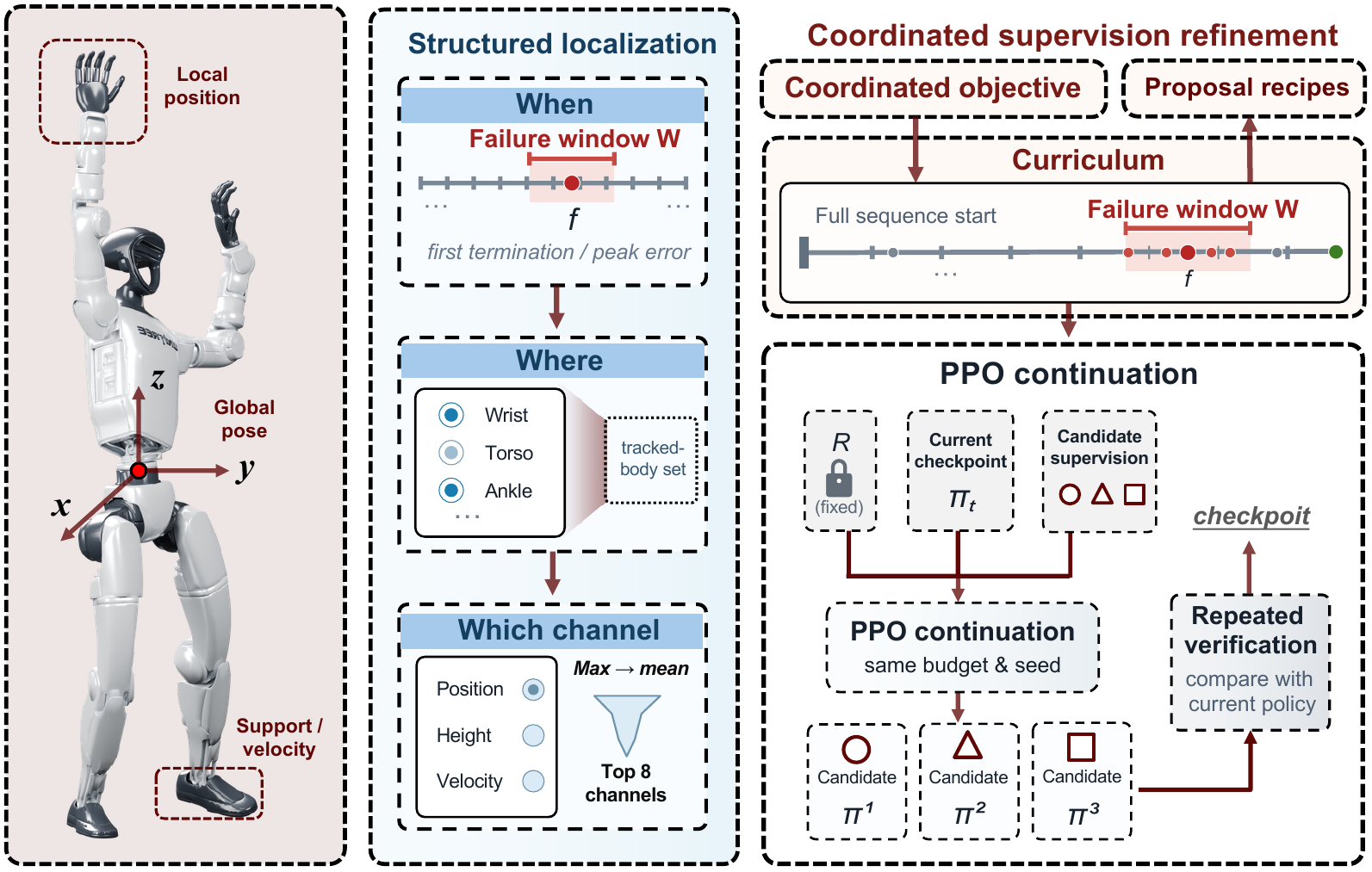}}{fig:main-overview}
{Execution feedback coordinates supervision.}
{Rollout errors localize failure windows, affected bodies, and dominant
local, global, or dynamic channels. These signals condition three proposals
that coordinate tracking rewards and reset curricula. Each candidate
continues PPO from the same checkpoint with a matched budget and seed.
Repeated verification compares candidate execution with the current policy
to select the retained update using execution-first criteria over repeated rollouts.}
\label{fig:refinement}
\end{figure}

\section{Related Work}
\label{sec:main-related-work}

\paragraph{Video reconstruction and motion tracking.}
SMPL-X, GVHMR and GMR provide human geometry, world-grounded motion and robot
references~\citep{smplx2019,gvhmr2024,gmr2025}. Physics-based imitation and
motion priors support tracking, recovery and partial-motion control
\citep{deepmimic2018,amp2021,phc2023,maskedmimic2024}. BeyondMimic emphasizes
difficult segments and guided diffusion; SONIC scales generalist tracking
\citep{beyondmimic2025,sonic2025}. MeshMimic couples reconstructed scene geometry with contact-aware motion
retargeting~\citep{meshmimic2026}. MimicX instead makes measured execution
failure an explicit input to supervision revision and validates the resulting
policy through repeated rollouts.

\paragraph{Adaptive whole-body execution.}
Contact-aware tracking, timing adaptation and dynamics alignment motivate
coordinated local, global and dynamic supervision
\citep{kungfubot2025,adamimic2025,asap2025}. Recent methods separate motor
learning from physical refinement, combine references with task goals, and
compose long-horizon or end-effector skills
\citep{omnixtreme2026,referencegoal2026,parkour2026,hiwet2026}.
MimicX uses a diagnosis--proposal--verification loop to select supervision
updates through execution. Appendix~\ref{sec:extended-related-work}
provides the detailed method-by-method comparison.

\section{Policy-in-the-Loop Supervision Refinement}
\label{sec:main-method}

The outer loop uses execution errors to revise the supervision of a tracking
policy. Its input is a motion reference and a current policy; its output is
the policy retained after repeated execution verification.
Given a monocular video, GVHMR reconstructs world-grounded SMPL-X motion and GMR
retargets it to the humanoid~\citep{gvhmr2024,smplx2019,gmr2025}. The resulting
robot reference
\begin{equation}
R=\{\bar q_t,\dot{\bar q}_t,
(\bar p_{t,b},\bar Q_{t,b},\bar v_{t,b},\bar\omega_{t,b})_{b\in\mathcal B}\}_{t=0}^{T_R-1}
\label{eq:main-reference}
\end{equation}
contains joint angles $q$, positions $p$, orientations $Q$, and linear/angular
velocities $v,\omega$ for tracked bodies $\mathcal B$; bars denote reference
values. A PPO policy $\pi_\theta(a_t\mid o_t)$~\citep{ppo2017} drives
joint-position commands in MuJoCo/MjLab~\citep{mujoco2012,mjlab2026}. The
outer loop maintains an accepted checkpoint $\theta_k$ and proposes supervision
bundles $U_{k,j}=(W_{k,j},C_{k,j})$, where $W$ parameterizes the tracking
objective and $C$ the reset curriculum. Figure~\ref{fig:registered_stages}
shows the reconstruction and execution stages on two video tasks.

\subsection{Localizing execution-critical transitions}

Each strict rollout records termination flags, reward, aligned body errors,
absolute anchor errors, height discrepancies, and linear/angular velocity
errors. Diagnosis uses the first scheduled evaluation repeat. Let $f$ be its
first flagged termination; when execution completes, we use the step with
maximum tracked-body error. For evaluation horizon $T$, diagnosis focuses on
\begin{equation}
\mathcal W_k=[\max(1,f-40),\min(T,f+40)].
\label{eq:main-window}
\end{equation}
Within this interval, channels are ranked by their maximum and mean values.
Their body identities map them to local upper/lower-limb precision, global
pelvis/torso stabilization, or dynamics and contact-related supervision. This interval supplies initialization states for practicing the transition.

\subsection{Coordinating tracking objectives and practice}

We align reference articulation to robot anchor translation and heading,
obtaining $\widetilde p_{t,b}$ and $\widetilde Q_{t,b}$, while supervising the
absolute anchor separately. For body group $G$, a representative positional
term is
\begin{equation}
\phi_p(G,\sigma)=\exp\!\left[-\frac{1}{|G|\sigma^2}
\sum_{b\in G}\|p_{t,b}-\widetilde p_{t,b}\|_2^2\right].
\label{eq:main-group-reward}
\end{equation}
The full tracking objective combines selected local limb terms, global
anchor/core terms, and dynamic velocity/contact terms, with action smoothing
and joint-limit and self-collision penalties $\mathcal P_t$:
\begin{equation}
r_t(W)=\sum_{\ell\in\mathcal L}w_\ell\phi_\ell+
\sum_{g\in\mathcal G}w_g\phi_g+
\sum_{d\in\mathcal D}w_d\phi_d-
\lambda_a\|a_t-a_{t-1}\|_2^2-\mathcal P_t .
\label{eq:main-objective}
\end{equation}
Thus a lower-body failure can strengthen position, height, velocity, and stance
terms together; a root failure jointly emphasizes anchor and core consistency.
The three groups couple task-critical precision to whole-body motion and
physical consistency.

Diagnosis conditions three bounded supervision candidates: failure-window
practice, stronger window practice with dynamics emphasis, and full-start
consolidation. They adjust reward weights and scales, action smoothing, and
the initial-state distribution. For practice interval $[s,e)$,
\begin{equation}
t_0\sim(1-\rho)\delta_0+\rho\operatorname{Unif}\{s,\ldots,e-1\},
\label{eq:main-curriculum}
\end{equation}
where $\delta_0$ starts the motion from its beginning. The window candidates
use $\rho=0.45$ and $0.65$, respectively; consolidation uses $\rho=0$.
This mixture pairs whole-sequence practice with repeated exposure to the
difficult transition. Every candidate starts from the same accepted policy
and receives the same continuation budget. A common task evaluates the resulting policies; Appendix~\ref{sec:theory-supervision}
derives the roles of objective and curriculum refinement.

\subsection{Selecting updates through repeated execution}

For candidate $j$ and evaluation seed $s$, let $n_{j,s}$ count terminations and
$h_{j,s}$ be the first terminated step, using $T+1$ when no failure occurs. Over
$S$ repeats,
\begin{equation}
Z_j=\sum_s\mathbf1[n_{j,s}=0],\qquad
H_j=\min_s h_{j,s},\qquad N_j=\sum_s n_{j,s}.
\label{eq:main-execution-statistics}
\end{equation}
The gate ranks complete candidates by
\begin{equation}
K_j=(Z_j,H_j,-N_j,-b_j,-z_j,\widetilde r_j),
\label{eq:main-key}
\end{equation}
where $b_j$ and $z_j$ are medians across repeats of the last recorded maximum
body-position and ankle-height errors, and
$\widetilde r_j$ is median rollout reward. A candidate must strictly improve
$K_j$ and satisfy $b_j\le(1+\epsilon)b_k$ with $\epsilon=0.20$. The first
passing candidate becomes $\theta_{k+1}$; if none passes, the accepted policy is
retained. Completion and reliable motion length take priority;
tracking fidelity and reward break ties.
Algorithm~\ref{alg:main-mimicx} summarizes refinement; Appendix~\ref{sec:theory-gate}
proves its selection properties.

\paragraph{Heterogeneous feedback execution.}
\hloop{} represents rollout, diagnosis, and selection as a dependency graph,
allowing completed rollouts to be diagnosed while other simulations continue.
Selection begins once all required evaluations are available, preserving
the same inputs as sequential execution (Figure~\ref{fig:verification}).
With fixed node outputs and deterministic report ordering, dependency-respecting
schedules select the same policy. For resource $r$, let $W_r$ denote total
work and $m_r$ its parallel slots.
With critical-path duration $L_{\rm crit}$, the scheduling bound is
\begin{equation}
T_{\rm sched}\ge\max\!\left\{L_{\rm crit},\max_r W_r/m_r\right\}.
\label{eq:main-scheduling-bound}
\end{equation}
Appendix~\ref{sec:theory-hloop} derives this bound and establishes scheduling invariance.

\begin{figure}[!t]
\finalart{\includegraphics[width=0.7\textwidth]{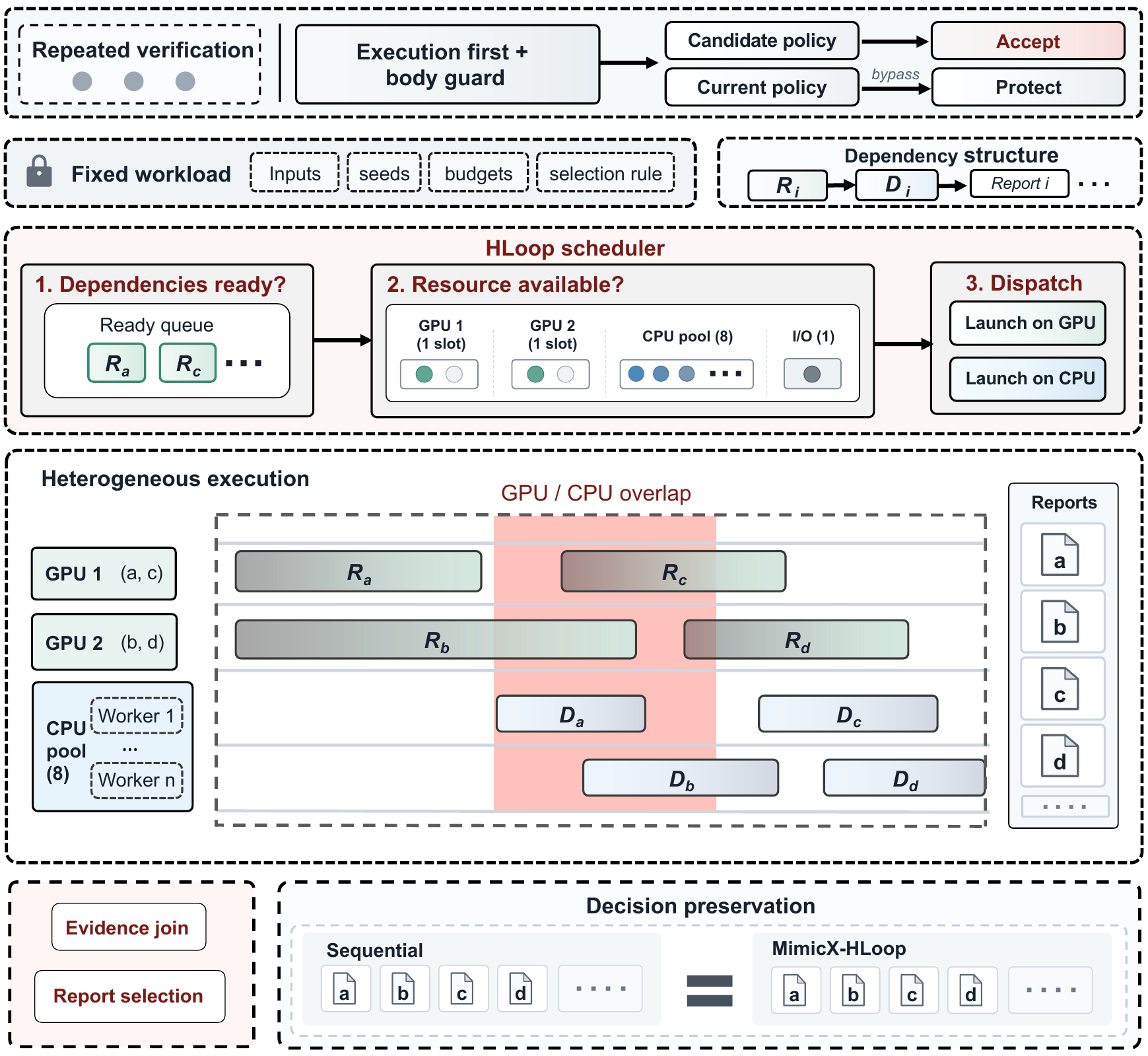}}{fig:verification}
{Verified updates with heterogeneous execution.}
{Repeated rollouts rank candidates by completion, reliable motion length,
and tracking quality. A body-error guard protects the current policy when
no candidate qualifies. The same verification workload can run sequentially
or through \hloop{}. Its scheduler dispatches ready rollouts to GPUs and
completed trajectories to CPU diagnosis workers, overlapping independent
work while respecting dependencies. Reports are joined before selection;
inputs, seeds, budgets, and the selection rule remain fixed. Execution order changes feedback latency while preserving the complete
evidence supplied to the gate and its policy decision across all repeated evaluation seeds.}
\end{figure}

\begin{algorithm}[t]
\caption{\mimicx{} closed-loop refinement}
\label{alg:main-mimicx}
\begin{algorithmic}[1]
\Require Registered $R$, accepted $\theta_0$, fixed evaluation seeds and budgets
\State $\theta\gets\theta_0$
\For{each refinement iteration}
  \State $E_0\gets\operatorname{Evaluate}(\theta,R)$; $D\gets\operatorname{Diagnose}(E_0[1])$
  \For{each validated $U_j\in\operatorname{Propose}(D)$}
    \State $\theta_j\gets\operatorname{PPOContinue}(\theta,R,U_j)$
    \State $E_j\gets\operatorname{Evaluate}(\theta_j,R)$ on all fixed seeds
  \EndFor
  \State Rank complete candidates by Eq.~\ref{eq:main-key} and apply the guard
  \State Accept the first passing candidate, or protect $\theta$ and stop
  \State Stop if every repeat of the accepted policy completes
\EndFor
\State \Return accepted policy $\theta$
\end{algorithmic}
\end{algorithm}

\begin{figure}[!tbp]
\centering
\begin{subfigure}[t]{0.49\linewidth}
\includegraphics[width=\linewidth]{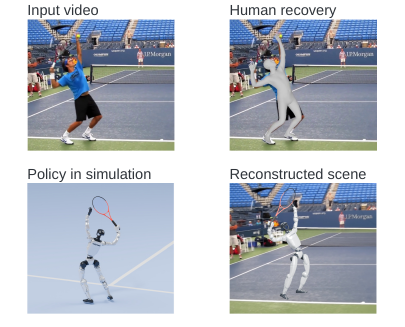}
\caption{Tennis: recovered scene context}
\end{subfigure}\hfill
\begin{subfigure}[t]{0.49\linewidth}
\includegraphics[width=\linewidth]{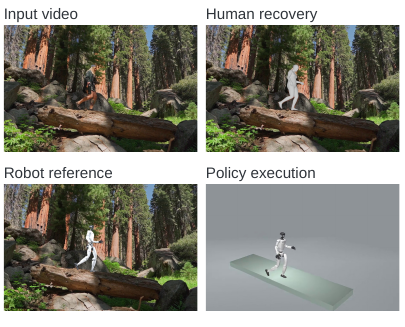}
\caption{Forest: collision-scene execution}
\end{subfigure}
\caption{\textbf{From video observations to humanoid execution.}
Tennis shows the input, recovered human motion, simulated policy rollout,
and the corresponding policy pose in the reconstructed scene at a recovered
source phase. Forest shows the input, recovered human motion, retargeted robot
reference, and policy execution in the task-equivalent collision scene,
sampled at each clip's midpoint.}
\label{fig:registered_stages}\label{fig:main-pipeline}
\end{figure}

\section{Experimental Setup}
\label{sec:main-setup}

We evaluate whether execution feedback improves tracking, how the supervision
components contribute, and whether verification and heterogeneous execution
make refinement reliable and efficient.

\paragraph{Tasks and learner.}
The core tasks are Tennis Swing, Football Juggling, Dance Sequence, and Kung
Fu Sequence on a 29-DoF Unitree G1 in MuJoCo/MjLab. Simulation runs at
200\,Hz and the PPO policy at 50\,Hz. Each task supplies one shared
warm-start checkpoint. Four methods and three continuation seeds yield
48 task--method--seed trials; three evaluation repeats per final policy yield
144 rollouts. Architecture and optimization settings appear in
Appendix~\ref{sec:setup}.

\paragraph{Compared supervision.}
\emph{Fixed Reference} continues PPO with the original reference and objective.
\emph{Failure Curriculum} concentrates half of its resets around a diagnosed
failure window. \emph{Task-Aware Refinement} also adjusts local, global, and
dynamic tracking terms. Each uses 250 continuation iterations.
Full \mimicx{} evaluates three automatically proposed 250-iteration
continuations and selects the final policy through repeated verification.
Tennis and Football share the original references across methods; Dance and
Kung Fu use prepared repaired references for \mimicx{}. Errors are measured
against each method's reference. This comparison evaluates successive supervision configurations; Appendix~\ref{sec:setup} specifies
their profiles, references, and transition budgets.

\paragraph{Execution and fidelity.}
Evaluation uses deterministic inference, full-start initialization, and fixed
termination tests. Full-horizon success requires zero flagged terminations
over the evaluation schedule. Robust Execution Horizon is the earliest
failure across repeats, with $T+1$ assigned on success; the reference restarts
at motion end. Body error is the temporal mean of
$\max_{b\in\mathcal B}\|p_{t,b}-\widetilde p_{t,b}\|_2$ across recorded
rollout steps. We also measure tail body error, anchor error, body velocities,
and ankle-height error. Task-macro effects average relative changes after within-task continuation-seed averaging. Appendix~\ref{sec:setup} details additional-video, supplied-motion, direct-policy, and timing protocols.

\paragraph{Collision-scene extension.}
Five additional video-derived motions use task-equivalent fixed collision
scenes. Each compares equal 3,000-update continuations from a shared
3,000-update checkpoint, using one training seed and three evaluation seeds
(Appendix~\ref{sec:collision-extension}).

\section{Results}
\label{sec:main-experiments}

\subsection{Tracking Quality}

\mimicx{} improves body-tracking error and Robust Execution Horizon in all
twelve task-matched comparisons against Fixed Reference
(Figure~\ref{fig:main-results}, Table~\ref{tab:main-task-summary}). Task-level averaging yields a 25.7\% reduction in body error and a
255.6\% increase in horizon. Tennis reaches full-horizon success in all
nine evaluations. These results show that execution-conditioned supervision improves
both motion fidelity and progress through demanding transitions.

\begin{figure}[!t]
\centering
\begin{subfigure}[t]{0.241\linewidth}\includegraphics[width=\linewidth]{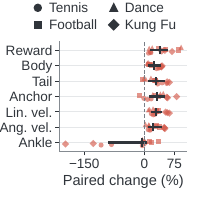}\caption{Paired effects}\end{subfigure}\hfill
\begin{subfigure}[t]{0.241\linewidth}\includegraphics[width=\linewidth]{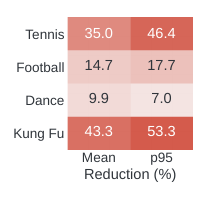}\caption{Mean and tail gains}\end{subfigure}\hfill
\begin{subfigure}[t]{0.241\linewidth}\includegraphics[width=\linewidth]{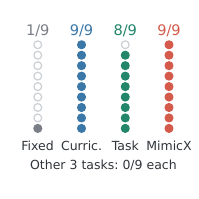}\caption{Tennis completion}\end{subfigure}\hfill
\begin{subfigure}[t]{0.241\linewidth}\includegraphics[width=\linewidth]{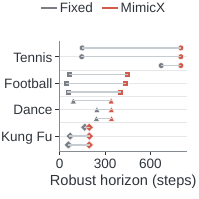}\caption{Execution horizon}\end{subfigure}
\caption{\textbf{Controlled evidence across four video-derived skills.}
(a) All twelve task-matched trials across seven reward and tracking channels.
(b) Task-wise reductions in mean and p95 tracking error, in percent. (c) Full-horizon completion across repeated evaluations. (d) Worst-case
execution horizon. Black marks in (a) denote median and interquartile range.
Fixed, Curriculum, and Task-aware abbreviate Fixed Reference, Failure
Curriculum, and Task-Aware Refinement; gray, blue, green, and coral identify
the four methods (Curric. and Task in (c)). Tail denotes body-error p95 and Ankle denotes ankle-height error.}
\label{fig:main-results}\label{fig:core_results}
\end{figure}

% Table design v3
\begin{table}[!t]
\centering
\caption{\textbf{MimicX improves tracking and execution horizon on all four tasks.}
Means are over three continuation seeds, each evaluated with three fixed rollout
seeds. Body mean and p95 summarize aligned worst-body error over time; horizon is the
earliest failure across repeats, with $T+1$ assigned on success. Reduction and
gain are relative to Fixed Reference, computed before rounding. Bold marks
improvements.}
\label{tab:main-task-summary}
\begingroup
\papertablestyle
% Keep the existing font size; devote the width to paired values and gains.
\newcommand{\taskgain}[1]{\cellcolor{oursbg}\textbf{#1}}
\begin{tabularx}{\linewidth}{l*{10}{>{\centering\arraybackslash}X}}
\toprule
\rowcolor{tablegroup}
Task & \multicolumn{3}{c}{\textbf{Body mean (m)}} & \multicolumn{2}{c}{\textbf{Body p95 (m)}} & \multicolumn{3}{c}{\textbf{Horizon (steps)}} & \multicolumn{2}{c}{\textbf{Success (\%)}} \\
\rowcolor{tablehead}
 & Fixed & \mimicx & \textit{Red. (\%)} & Fixed & \mimicx & Fixed & \mimicx & \textit{Gain (\%)} & Fixed & \mimicx \\
\midrule
\rowcolor{curriculumbg}
Tennis & 0.241 & \textbf{0.157} & \taskgain{35.0} & 0.518 & \textbf{0.278} & 322.0 & \textbf{801.0} & \taskgain{+148.8} & 11.1 & \textbf{100.0} \\
\rowcolor{curriculumbg}
Football & 0.286 & \textbf{0.244} & \taskgain{14.7} & 0.612 & \textbf{0.504} & 53.7 & \textbf{427.3} & \taskgain{+696.3} & 0.0 & 0.0 \\
\rowcolor{curriculumbg}
Dance & 0.270 & \textbf{0.244} & \taskgain{9.9} & 0.502 & \textbf{0.467} & 193.3 & \textbf{341.3} & \taskgain{+76.6} & 0.0 & 0.0 \\
\rowcolor{curriculumbg}
Kung Fu & 0.249 & \textbf{0.141} & \taskgain{43.3} & 0.634 & \textbf{0.296} & 97.7 & \textbf{196.0} & \taskgain{+100.7} & 0.0 & 0.0 \\
\bottomrule
\end{tabularx}
\endgroup
\end{table}

The paired results show that the gains extend across error channels
(Table~\ref{tab:paired}). Both linear- and angular-velocity errors
improve in all twelve comparisons, while tail body error and root/anchor
error improve in ten. These gains extend from articulation to global motion and dynamics.

Figure~\ref{fig:main-direct} directly compares the standalone MjLab
reproduction of BeyondMimic~\citep{beyondmimic2025}, released
SONIC~\citep{sonic2025}, Fixed Reference, and \mimicx{} on identical
Tennis and Football references. Relative to BeyondMimic, \mimicx{} reduces joint RMSE and root-local body
error on both tasks, with improvements in every paired continuation seed.
All recordings cover one complete source interval without failure resets;
the common metric and adaptation protocols appear in
Appendix~\ref{sec:external-results}; Table~\ref{tab:sonic} reports the Tennis values.

\begin{figure}[!t]
\centering
\begin{subfigure}[t]{0.241\linewidth}\centering
\includegraphics[width=\linewidth]{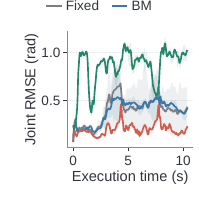}
\caption{Tennis: joint error}\end{subfigure}\hfill
\begin{subfigure}[t]{0.241\linewidth}\centering
\includegraphics[width=\linewidth]{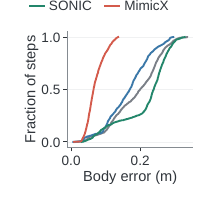}
\caption{Football body}\end{subfigure}\hfill
\begin{subfigure}[t]{0.241\linewidth}\centering
\includegraphics[width=\linewidth]{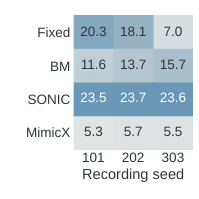}
\caption{Tennis body}\end{subfigure}\hfill
\begin{subfigure}[t]{0.241\linewidth}\centering
\includegraphics[width=\linewidth]{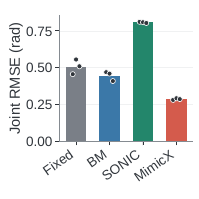}
\caption{Football joints}\end{subfigure}
\caption{\textbf{More accurate policies for the same demonstrated motion.}
Common-reference evaluation of Fixed Reference, BeyondMimic's MjLab
reproduction (BM), released SONIC, and \mimicx{}. (a) Tennis joint-error mean and range over time;
(b) Football control-step body-error distribution; (c) Tennis body error in cm
for each recording; (d) mean Football joint errors with every recording.
Lower errors are better. All methods are evaluated over the same 518-step
Tennis and 454-step Football intervals without failure resets;
body error uses root-local G1 forward kinematics. Full protocols and task outcomes appear in Table~\ref{tab:direct-all}.}
\label{fig:main-direct}
\end{figure}

% Table design v3
% Generated by scripts/build_tables.py; source hashes: tables/source_manifest.json
\begin{table}[!htb]
\centering
\caption{Direct policy comparison on the same Tennis Swing reference. MimicX improves both common articulation metrics across all three seeds relative to BeyondMimic's MjLab reproduction. Bold marks minima. Reductions use Fixed Reference as the baseline.}
\label{tab:sonic}
\begingroup
\papertablestyle
\begin{tabularx}{\linewidth}{>{\raggedright\arraybackslash}p{.24\linewidth}>{\centering\arraybackslash}p{.13\linewidth}*{4}{>{\centering\arraybackslash}X}}\toprule
\rowcolor{tablehead}
\textbf{Method} & \textbf{\makecell{Policy\\setup}} & \textbf{\makecell{Joint RMSE\\(rad, $\downarrow$)}} & \textbf{\makecell{Reduction\\(\%)}} & \textbf{\makecell{Local body error\\(m, $\downarrow$)}} & \textbf{\makecell{Reduction\\(\%)}}\\
\midrule
\rowcolor{basebg}
Fixed Reference & Fixed objective & 0.389 $\pm$ 0.166 & +0.0 & 0.151 $\pm$ 0.072 & +0.0 \\
\rowcolor{basebg}
BeyondMimic (MjLab) & Fixed objective & 0.389 $\pm$ 0.027 & +0.1 & 0.137 $\pm$ 0.021 & +9.8 \\
\rowcolor{sonicbg}
SONIC (released) & Pretrained & 0.857 $\pm$ 0.018 & -120.2 & 0.236 $\pm$ 0.001 & -55.8 \\
\rowcolor{oursbg}
\mimicx & Closed loop & \textbf{0.201} $\pm$ 0.002 & +48.3 & \textbf{0.055} $\pm$ 0.002 & +63.4 \\

\bottomrule
\end{tabularx}
\par\smallskip\noindent\begin{minipage}{\linewidth}\papertablenote Mean $\pm$ sample SD over three continuation seeds (101/202/303, evaluated with seed 1001) or three SONIC executions. All methods follow the same Tennis reference for 518 steps at 50 Hz without failure resets. Joint error is the temporal mean of per-frame joint RMSE. Local body error averages fourteen-body distances under a common G1 forward-kinematic model with root pose fixed to identity. BeyondMimic uses its standalone MjLab reproduction, continued for 250 PPO updates with 1,024 environments and configured learning rate $10^{-6}$ from the shared warmstart. SONIC uses its released controller with a five-second settling phase and target-log-synchronized playback; simulation-clock parity is checked and video is rendered offline. MimicX uses its frozen gate-selected policies; candidate-search budgets are specified in Section~\ref{sec:setup}.\par\smallskip\hrule height 0.4pt\relax\end{minipage}
\endgroup
\end{table}

Figure~\ref{fig:main-qualitative} compares actual BeyondMimic and \mimicx{}
recordings at matched source phases: refinement sustains the swing and
alternating leg motions after the standalone tracker falls. Figure
\ref{fig:registered_stages} connects human reconstruction and robot execution
across Tennis and Forest Traversal.
\begin{figure}[!htbp]
\centering
\begin{subfigure}[t]{0.49\linewidth}
\includegraphics[width=\linewidth]{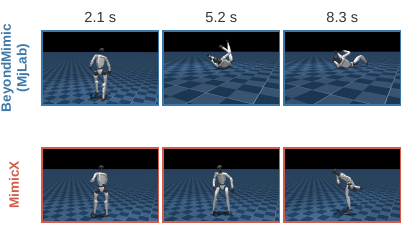}
\caption{Tennis Swing}
\end{subfigure}\hfill
\begin{subfigure}[t]{0.49\linewidth}
\includegraphics[width=\linewidth]{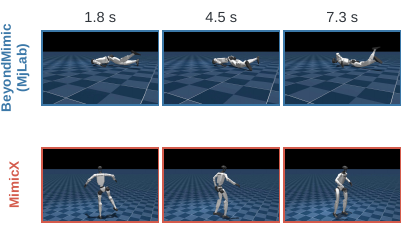}
\caption{Football Juggling}
\end{subfigure}
\caption{\textbf{Execution differences on the same motion timeline.}
Each case compares BeyondMimic's MjLab reproduction (blue) and
\mimicx{} (coral) at 20/50/80\% of the source interval. Recorded policies use
continuation seed 202 and evaluation seed 1001, with matched camera and crop settings within each task.}
\label{fig:main-qualitative}
\end{figure}

\subsection{Refinement and Verification}

Concentrating practice around difficult transitions yields a strong first
improvement. Failure Curriculum brings Tennis to full-horizon execution and
achieves the lowest task-macro velocity errors
(Table~\ref{tab:main-components}). Adding task-aware supervision strengthens
global consistency: Task-Aware Refinement attains the lowest anchor error
and highest normalized horizon. The complete loop attains the lowest mean
body error while matching the best task-macro success. The component results
thus identify distinct benefits from allocating practice, coordinating
tracking objectives, and selecting the retained policy.
The task-critical Kung Fu sequence in Figure~\ref{fig:kungfu-video-sequence}
illustrates whole-body coordination.

\begin{figure}[!htb]
\centering
\includegraphics[width=\linewidth]{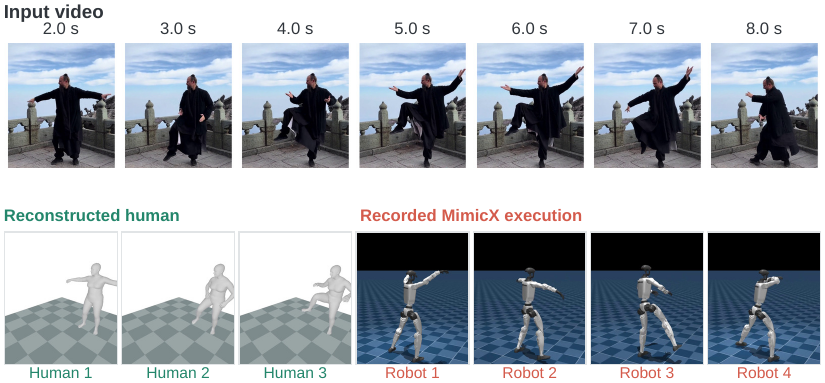}
\caption{\textbf{Kung Fu from video to recorded execution.} Seven input frames
show the selected motion interval. The lower row progresses from recovered
human motion to recorded \mimicx{} policy states. Source and rollout timestamps
identify the sampled interval; the complete-task evaluation is reported in
Table~\ref{tab:main-task-summary}.}
\label{fig:kungfu-video-sequence}
\end{figure}

% Table design v3
\begin{table}[t]
\centering
\caption{\textbf{Supervision components improve complementary tracking channels.}
Values are unweighted means across four tasks after averaging three
continuation seeds within each task. Bold marks column bests.}
\label{tab:main-components}
\begingroup
\papertablestyle
\begin{tabularx}{\linewidth}{>{\raggedright\arraybackslash}p{.16\linewidth}ccc*{6}{>{\centering\arraybackslash}X}}\toprule\rowcolor{tablehead}
\textbf{Method} &  \multicolumn{3}{c}{Supervision}  &  \multicolumn{2}{c}{Execution}  &  \multicolumn{2}{c}{Tracking}  &  \multicolumn{2}{c}{Dynamics} \\
\rowcolor{tablehead}
  & \textbf{C} & \textbf{T} & \textbf{V} & \textbf{\makecell{Success\\(\%, $\uparrow$)}} & \textbf{\makecell{Horizon\\(\%, $\uparrow$)}} & \textbf{\makecell{Body\\(m, $\downarrow$)}} & \textbf{\makecell{Anchor\\(m, $\downarrow$)}} & \textbf{\makecell{Linear vel.\\(m/s, $\downarrow$)}} & \textbf{\makecell{Angular vel.\\(rad/s, $\downarrow$)}}\\
\midrule
\rowcolor{basebg}
\makecell[l]{Fixed\\Reference} & -- & -- & -- & 2.8 & 21.5 & 0.262 & 0.219 & 0.691 & 2.955 \\
\rowcolor{curriculumbg}
\makecell[l]{Failure\\Curriculum} & $\checkmark$ & -- & -- & \textbf{25.0} & 62.7 & 0.215 & 0.135 & \textbf{0.451} & \textbf{1.935} \\
\rowcolor{taskbg}
\makecell[l]{Task-Aware\\Refinement} & $\checkmark$ & $\checkmark$ & -- & 22.2 & \textbf{70.2} & 0.222 & \textbf{0.111} & 0.485 & 2.393 \\
\rowcolor{oursbg}
\mimicx & $\checkmark$ & $\checkmark$ & $\checkmark$ & \textbf{25.0} & 64.0 & \textbf{0.196} & 0.147 & 0.481 & 2.238 \\

\bottomrule
\end{tabularx}
\par\smallskip{\papertablenote Horizon is normalized by task length. C: failure-window curriculum; T: task-aware tracking; V: repeated verification.}
\par\smallskip\hrule height .4pt

\endgroup
\end{table}

Figure~\ref{fig:main-learning} shows how reward, body error, and episode length
evolve as policies adapt to revised supervision.
The component table and learning curves distinguish the effects of practice
allocation and tracking objectives; Section~\ref{sec:main-setup} specifies
continuation and candidate-search budgets.

The gate accepts six updates and protects six current policies across the
twelve automated loops. All Tennis and Football trials accept a refinement;
Dance and Kung Fu retain the incumbent under the configured execution
ordering and body-error guard (Appendix~\ref{sec:gate-results}).
The Kung Fu case in Table~\ref{tab:rejection} illustrates the role of execution
verification: both candidates improve reward but shorten the worst-case
execution horizon, so the current policy is retained. This separates the signal used to train a candidate from the execution criterion used to adopt it. The gate therefore allows objective adaptation while retaining a stronger
verified policy when proposals regress.
Figure~\ref{fig:main-mechanisms} combines execution-first selection,
feedback efficiency and time-resolved tracking gains. Appendix
Figure~\ref{fig:state_traces} resolves orientation, acceleration and angular
velocity within a rollout for temporal diagnosis.

\begin{table}[!tbp]
\begin{minipage}[t]{.49\linewidth}\vspace{0pt}
% Generated by scripts/rebuild_table_layouts.py
\centering\pairtablecaption{\textbf{Repeated verification.} Decisions for four tasks and three continuation seeds yield six accepted refinements and six protected incumbent policies.}{tab:gate}
\begingroup\papertablestyle
\begin{tabularx}{\linewidth}{l*{3}{>{\centering\arraybackslash}X}}
\toprule
\rowcolor{tablehead}
Task & \makecell{Seed\\101} & \makecell{Seed\\202} & \makecell{Seed\\303} \\
\midrule
\rowcolor{curriculumbg}
Tennis & \tabletag{oursbg}{Accept}$^{B,K}$ & \tabletag{oursbg}{Accept}$^{B,K}$ & \tabletag{oursbg}{Accept}$^{B,K}$ \\
\rowcolor{curriculumbg}
Football & \tabletag{oursbg}{Accept}$^{B,K}$ & \tabletag{oursbg}{Accept}$^{B,K}$ & \tabletag{oursbg}{Accept}$^{B,K}$ \\
\rowcolor{curriculumbg}
Dance & \tabletag{basebg}{Protect}$^{B}$ & \tabletag{basebg}{Protect}$^{B}$ & \tabletag{basebg}{Protect}$^{B}$ \\
\rowcolor{curriculumbg}
Kung Fu & \tabletag{basebg}{Protect}$^{B,K}$ & \tabletag{basebg}{Protect}$^{B}$ & \tabletag{basebg}{Protect}$^{B,K}$ \\
\bottomrule\end{tabularx}
\par\smallskip{\papertablenote\raggedright Accept selects the refined policy; Protect retains the current policy. B: body guard; K: execution ordering. Full protocols appear in Appendix~\ref{sec:gate-results}.\par}
\par\smallskip\hrule height .4pt
\endgroup

\end{minipage}\hfill
\begin{minipage}[t]{.49\linewidth}\vspace{0pt}
% Generated by scripts/rebuild_table_layouts.py
\centering\pairtablecaption{\textbf{Execution overrides reward.} Two Kung Fu proposals improve training reward but shorten the repeated-evaluation horizon; both are rejected.}{tab:rejection}
\begingroup\papertablestyle
\begin{tabularx}{\linewidth}{l*{3}{>{\centering\arraybackslash}X}}
\toprule\rowcolor{tablehead}
Metric & Current & \makecell{Proposal\\1} & \makecell{Proposal\\2} \\
\midrule
\rowcolor{curriculumbg}
Reward & 0.5707 & 0.6275 & 0.5838 \\
$\Delta$ reward & +0.0000 & +0.0568 & +0.0131 \\
\rowcolor{curriculumbg}
Horizon & \textbf{879} & 677 & 689 \\
Gate & \tabletag{basebg}{Retain} & \tabletag{basebg}{Reject} & \tabletag{basebg}{Reject} \\
\bottomrule

\end{tabularx}
\par\smallskip{\papertablenote\raggedright Historical case: horizon is the worst repeated-evaluation failure step. Proposals lose 202/190 steps; reward changes use the current policy.\par}
\par\smallskip\hrule height .4pt
\endgroup

\end{minipage}
\end{table}

\begin{figure}[!t]
\centering
\includegraphics[width=\linewidth]{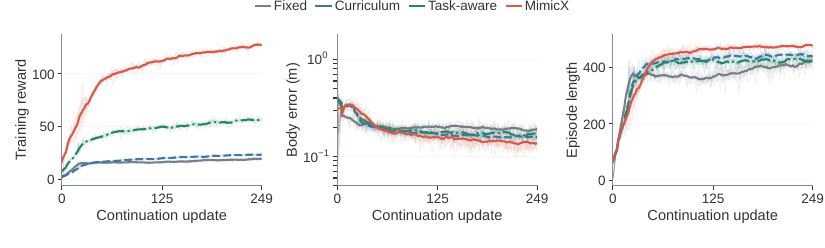}
\caption{\textbf{Coordinated supervision changes the learning trajectory.}
Reward, body error, and episode length across twelve 250-update Tennis
continuations. Thin traces retain raw values;
emphasized curves use an 11-update moving mean, with sample standard
 deviation across seeds. A single legend applies to all three panels.}
\label{fig:main-learning}\label{fig:learning}
\end{figure}

\subsection{Task Breadth and Efficiency}

Four additional videos cover Basketball, Soccer, Michael Jackson Dance,
and Uniandes Dance, with tracking improvements on the latter three and a
full-horizon success gain on Soccer. Fourteen supplied-motion cases evaluate
the reference-to-policy interface across LaFAN1 and AMASS/ACCAD motions,
with reference-averaged body-error reductions of 15.81\% and 19.84\%, respectively
(Table~\ref{tab:breadth}).
Appendix~\ref{sec:breadth-results} reports every task and its protocol.
In five collision-scene video tasks, matched-budget refinement raises strict
completion from 9/15 to 12/15 rollouts (Table~\ref{tab:collision-extension}).
Forest Traversal improves from consistent early failure to 3/3 complete
executions. Both body and root tracking improve across
all five tasks, extending the gains to scenes (Figure~\ref{fig:collision-extension}).
The scene and adapted reference remain fixed within each paired comparison.
Diagnosis reallocates resets to 50\% true-start initialization, 25\%
failure-window replay, and 25\% full-motion sampling, while revising global,
foot, and event-aware tracking objectives. The resulting gains come from
changing how the same motion is practiced and tracked in the same environment.

% Table design v3
% Generated by scripts/build_collision_extension.py from audited rollout CSVs.
\begin{table}[!htbp]
\centering
\caption{Policy-in-the-loop refinement on five collision-scene video tasks. Equal-budget continuations share the same initial policy, scene, and reference. Completion rises from 9/15 to 12/15 full executions; both p95 errors improve on all five tasks. Steps report the minimum valid prefix divided by target length; reductions compare seed-mean errors over matched prefixes before failure.}
\label{tab:collision-extension}
\begingroup
\papertablestyle
\begin{tabularx}{\linewidth}{>{\raggedright\arraybackslash}p{.23\linewidth}*{6}{>{\centering\arraybackslash}X}}\toprule
\rowcolor{tablehead}  &  \multicolumn{2}{c}{Completion}  &  \multicolumn{2}{c}{Valid steps / $T$}  &  \multicolumn{2}{c}{p95 reduction} \\\rowcolor{tablehead}
\textbf{Task} & \textbf{Fixed} &  \textbf{MimicX}  & \textbf{Fixed} &  \textbf{MimicX}  & \textbf{Body} & \textbf{Root}\\
\midrule
Track Running & 3/3 & 3/3 & 97/97 & 97/97 & 53.80\% & 24.95\% \\
Stair Ascent & 3/3 & 3/3 & 449/449 & 449/449 & 11.86\% & 7.88\% \\
Forest Traversal & 0/3 & \textbf{3/3} & 39/404 & 404/404 & 79.15\% & \textbf{89.04\%} \\
Platform Jump & 3/3 & 3/3 & 58/58 & 58/58 & 5.78\% & 50.85\% \\
Parkour & 0/3 & 0/3 & 220/255 & 220/255 & 2.04\% & 0.59\% \\

\bottomrule
\end{tabularx}
\endgroup
\end{table}

\begin{figure}[!htb]
\centering
\begin{subfigure}[t]{0.49\linewidth}
\includegraphics[width=\linewidth]{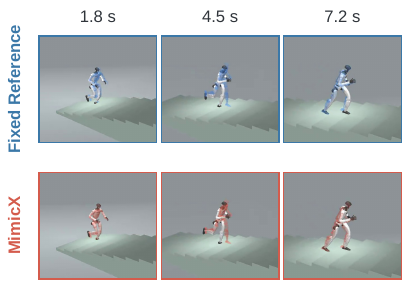}
\caption{Stair Ascent: matched policy states}
\end{subfigure}\hfill
\begin{subfigure}[t]{0.49\linewidth}
\includegraphics[width=\linewidth]{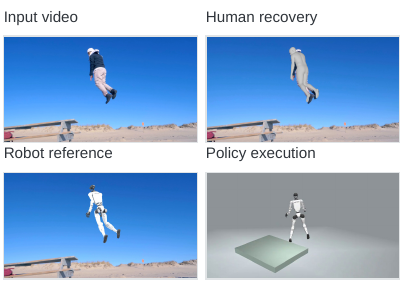}
\caption{Platform Jump: source-to-policy stages}
\end{subfigure}
\caption{\textbf{Video-driven skills in collision scenes.}
Stair Ascent compares Fixed Reference and refinement at 20/50/80\% of the
449-step schedule, using evaluation seed 2002 and time-specific close crops shared by both methods.
Both complete all three evaluations; refinement reduces body and root prefix
p95 errors by 11.86\% and 7.88\%. Platform Jump shows each clip's midpoint: input, human recovery, robot reference, and policy execution.}
\label{fig:collision-extension}
\end{figure}

\hloop{} reduces median rollout--diagnosis--selection time from 241.2\,s
to 124.0\,s, a 1.95$\times$ speedup over sequential execution
(Tables~\ref{tab:hloop} and \ref{tab:hloop-raw}). Selection
outcomes remain identical across all paired repetitions. The benchmark holds checkpoints and workload
fixed, measuring how overlapping GPU simulation with CPU diagnosis shortens
the feedback path. Appendix~\ref{sec:hloop-results} details the
matched workload and selection-parity analysis.
Bulk-synchronous execution has a median of 122.26\,s; both parallel modes
substantially shorten feedback relative to sequential rollout--diagnosis--selection.
Dependency-driven execution releases each completed rollout to CPU diagnosis
without waiting for the remaining simulations. Reports are joined before
selection, so the gate receives the same evidence despite the changed
execution order. This decouples the computational schedule from the
policy-selection rule. Across five repetitions, \hloop{} takes
119.92--128.88\,s, compared with 238.57--242.40\,s for sequential execution
of the same rollout--diagnosis--selection workload with identical policy checkpoints.

% Generated by scripts/build_tables.py
\begin{table}[!htb]
\begin{minipage}[t]{.49\linewidth}
\vspace{0pt}
% Generated by scripts/build_tables.py
\centering
\pairtablecaption{\textbf{Feedback executor comparison.} Five matched repetitions; mean, median, and speedup summarize the same rollout-feedback workload.}{tab:hloop}
\begingroup\papertablestyle
\begin{tabularx}{\linewidth}{l*{3}{>{\centering\arraybackslash}X}}
\toprule\rowcolor{tablehead}
Statistic & Sequential & \makecell{Bulk-\\sync.} & HLoop \\
\midrule
\rowcolor{curriculumbg}
Mean (s) & 240.748 & \textbf{123.423} & 123.896 \\
SD (s) & 1.608 & 2.697 & 3.596 \\
\rowcolor{curriculumbg}
Median (s) & 241.221 & \textbf{122.258} & 124.017 \\
Speedup & 1.000$\times$ & \textbf{1.973}$\times$ & 1.945$\times$ \\
\rowcolor{curriculumbg}
Selection & Same & Same & Same \\
\bottomrule

\end{tabularx}
\par\smallskip{\papertablenote\raggedright Executors preserve selection on a workload with zero policy updates. Speedup is the ratio of medians to sequential time; HLoop is $0.986\times$ as fast as bulk-sync. This measures feedback scheduling, not policy training.\par}
\par\smallskip\hrule height .4pt
\endgroup

\end{minipage}\hfill
\begin{minipage}[t]{.49\linewidth}
\vspace{0pt}
% Generated by scripts/build_tables.py
\centering
\pairtablecaption{\textbf{Paired feedback timings.} Per-run wall time in seconds and ratio to HLoop; bold marks the fastest executor in each repetition.}{tab:hloop-raw}
\begingroup\papertablestyle
\begin{tabularx}{\linewidth}{c*{5}{>{\centering\arraybackslash}X}}
\toprule\rowcolor{tablehead}
Run & \makecell{Seq.\\(s)} & \makecell{Bulk\\(s)} & \makecell{HLoop\\(s)} & \makecell{Seq. /\\HLoop} & \makecell{Bulk /\\HLoop} \\
\midrule
1 & 242.403 & \textbf{121.813} & 124.017 & 1.955 & 0.982 \\
\rowcolor{curriculumbg}
2 & 239.627 & 126.440 & \textbf{121.040} & 1.980 & 1.045 \\
3 & 238.569 & 120.474 & \textbf{119.924} & 1.989 & 1.005 \\
\rowcolor{curriculumbg}
4 & 241.221 & 126.131 & \textbf{125.624} & 1.920 & 1.004 \\
5 & 241.921 & \textbf{122.258} & 128.876 & 1.877 & 0.949 \\
\bottomrule

\end{tabularx}
\par\smallskip{\papertablenote\raggedright Each ratio divides an executor time by the HLoop time within each run; HLoop / HLoop is 1.000 in all five runs. These repetitions are summarized in Table~\ref{tab:hloop}. Headline speedup uses medians, not mean paired ratios.\par}
\par\smallskip\hrule height .4pt
\endgroup

\end{minipage}
\end{table}

\begin{figure}[!t]
\centering
\begin{subfigure}[t]{0.241\linewidth}\includegraphics[width=\linewidth]{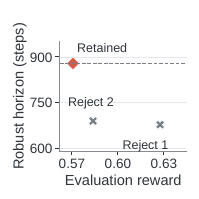}\caption{Execution-first gate}\end{subfigure}\hfill
\begin{subfigure}[t]{0.241\linewidth}\includegraphics[width=\linewidth]{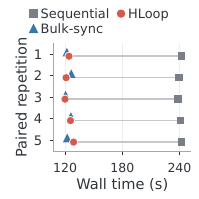}\caption{Paired wall time}\end{subfigure}\hfill
\begin{subfigure}[t]{0.241\linewidth}\includegraphics[width=\linewidth]{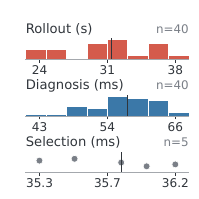}\caption{Stage distributions}\end{subfigure}\hfill
\begin{subfigure}[t]{0.241\linewidth}\includegraphics[width=\linewidth]{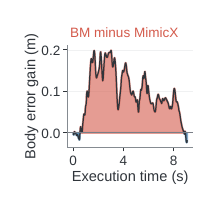}\caption{Football: body gain}\end{subfigure}
\caption{\textbf{Reliable refinement and faster feedback across motion inputs.}
(a) Execution verification rejects reward-improving regressions.
(b) Five paired wall-time measurements: connectors link sequential execution
and HLoop; triangles show bulk-synchronous execution.
(c) Per-job durations: rollout and diagnosis histograms, five selection
measurements, and median markers; each stage has its own linear axis and units. (d) BeyondMimic minus MimicX body error
over the complete 454-step Football interval, averaged over three recordings;
positive values indicate improved tracking.}
\label{fig:main-mechanisms}
\end{figure}

\section{Conclusion}

\mimicx{} turns humanoid execution failures into actionable supervision.
Temporal and body-specific diagnosis coordinates tracking objectives and
practice, while repeated verification selects policy updates. Controlled
comparisons demonstrate improved tracking and reliable motion length;
same-motion baselines and additional video, motion-reference, and collision-scene
tasks establish breadth. \hloop{} accelerates feedback while preserving
selection outcomes. Together, these components couple video-derived references with policy
learning through execution-guided refinement.

\clearpage
\bibliographystyle{official/iclr2027_conference}
\bibliography{refs}

@inproceedings{gvhmr2024,
  author    = {Shen, Zehong and Pi, Huaijin and Xia, Yan and Cen, Zhi and Peng, Sida and Hu, Zechen and Bao, Hujun and Hu, Ruizhen and Zhou, Xiaowei},
  title     = {World-Grounded Human Motion Recovery via Gravity-View Coordinates},
  booktitle = {SIGGRAPH Asia 2024 Conference Papers},
  year      = {2024},
  doi       = {10.1145/3680528.3687565}
}

@article{gmr2025,
  author        = {Araujo, Joao Pedro and Ze, Yanjie and Xu, Pei and Wu, Jiajun and Liu, C. Karen},
  title         = {Retargeting Matters: General Motion Retargeting for Humanoid Motion Tracking},
  journal       = {arXiv preprint arXiv:2510.02252},
  year          = {2025},
  eprint        = {2510.02252v1},
  archivePrefix = {arXiv}
}

@article{beyondmimic2025,
  author        = {Liao, Qiayuan and Truong, Takara E. and Huang, Xiaoyu and Gao, Yuman and Tevet, Guy and Sreenath, Koushil and Liu, C. Karen},
  title         = {{BeyondMimic}: From Motion Tracking to Versatile Humanoid Control via Guided Diffusion},
  journal       = {arXiv preprint arXiv:2508.08241},
  year          = {2025},
  eprint        = {2508.08241v4},
  archivePrefix = {arXiv}
}

@article{ppo2017,
  author        = {Schulman, John and Wolski, Filip and Dhariwal, Prafulla and Radford, Alec and Klimov, Oleg},
  title         = {Proximal Policy Optimization Algorithms},
  journal       = {arXiv preprint arXiv:1707.06347},
  year          = {2017},
  eprint        = {1707.06347v2},
  archivePrefix = {arXiv}
}

@inproceedings{mujoco2012,
  author    = {Todorov, Emanuel and Erez, Tom and Tassa, Yuval},
  title     = {{MuJoCo}: A Physics Engine for Model-Based Control},
  booktitle = {2012 IEEE/RSJ International Conference on Intelligent Robots and Systems},
  pages     = {5026--5033},
  year      = {2012},
  doi       = {10.1109/IROS.2012.6386109}
}

@inproceedings{smplx2019,
  author    = {Pavlakos, Georgios and Choutas, Vasileios and Ghorbani, Nima and Bolkart, Timo and Osman, Ahmed A. A. and Tzionas, Dimitrios and Black, Michael J.},
  title     = {Expressive Body Capture: {3D} Hands, Face, and Body from a Single Image},
  booktitle = {Proceedings of the IEEE/CVF Conference on Computer Vision and Pattern Recognition},
  pages     = {10975--10985},
  year      = {2019}
}

@article{lafan2020,
  author  = {Harvey, F{\'e}lix G. and Yurick, Mike and Nowrouzezahrai, Derek and Pal, Christopher},
  title   = {Robust Motion In-Betweening},
  journal = {ACM Transactions on Graphics},
  volume  = {39},
  number  = {4},
  year    = {2020},
  doi     = {10.1145/3386569.3392480}
}

@inproceedings{amass2019,
  author    = {Mahmood, Naureen and Ghorbani, Nima and Troje, Nikolaus F. and Pons-Moll, Gerard and Black, Michael J.},
  title     = {{AMASS}: Archive of Motion Capture as Surface Shapes},
  booktitle = {Proceedings of the IEEE/CVF International Conference on Computer Vision},
  pages     = {5442--5451},
  year      = {2019}
}

@article{sonic2025,
  author  = {Luo, Zhengyi and Yuan, Ye and Wang, Tingwu and Li, Chenran and Casta{\~n}eda, Fernando and Chen, Sirui and Cao, Zi-Ang and Li, Jiefeng and Minor, David and Ben, Qingwei and Park, Jinhyung and Sami, David and Wang, Zi and Da, Xingye and Ding, Runyu and Hogg, Cyrus and Song, Lina and Lim, Edy and Jeong, Eugene and He, Tairan and Xue, Haoru and Xiao, Wenli and Yuen, Simon and Kautz, Jan and Chang, Yan and Iqbal, Umar and Fan, Linxi ``Jim'' and Zhu, Yuke},
  title   = {{SONIC}: Supersizing Motion Tracking for Natural Humanoid Whole-Body Control},
  journal = {Science Robotics},
  volume  = {11},
  number  = {117},
  pages   = {eaed4592},
  year    = {2026},
  doi     = {10.1126/scirobotics.aed4592}
}

@article{deepmimic2018,
  author  = {Peng, Xue Bin and Abbeel, Pieter and Levine, Sergey and van de Panne, Michiel},
  title   = {{DeepMimic}: Example-Guided Deep Reinforcement Learning of Physics-Based Character Skills},
  journal = {ACM Transactions on Graphics},
  volume  = {37},
  number  = {4},
  pages   = {143:1--143:14},
  year    = {2018},
  doi     = {10.1145/3197517.3201311}
}

@article{amp2021,
  author  = {Peng, Xue Bin and Ma, Ze and Abbeel, Pieter and Levine, Sergey and Kanazawa, Angjoo},
  title   = {{AMP}: Adversarial Motion Priors for Stylized Physics-Based Character Control},
  journal = {ACM Transactions on Graphics},
  volume  = {40},
  number  = {4},
  pages   = {144:1--144:20},
  year    = {2021},
  doi     = {10.1145/3450626.3459670}
}

@inproceedings{phc2023,
  author    = {Luo, Zhengyi and Cao, Jinkun and Winkler, Alexander and Kitani, Kris and Xu, Weipeng},
  title     = {Perpetual Humanoid Control for Real-Time Simulated Avatars},
  booktitle = {Proceedings of the IEEE/CVF International Conference on Computer Vision},
  pages     = {10895--10904},
  year      = {2023}
}

@inproceedings{kungfubot2025,
  author    = {Xie, Weiji and Han, Jinrui and Zheng, Jiakun and Li, Huanyu and Liu, Xinzhe and Shi, Jiyuan and Zhang, Weinan and Bai, Chenjia and Li, Xuelong},
  title     = {{KungfuBot}: Physics-Based Humanoid Whole-Body Control for Learning Highly-Dynamic Skills},
  booktitle = {Advances in Neural Information Processing Systems},
  volume    = {38},
  year      = {2025},
  doi       = {10.52202/085713-2089}
}

@article{adamimic2025,
  author        = {Huang, Tao and Wang, Huayi and Ren, Junli and Yin, Kangning and Wang, Zirui and Chen, Xiao and Jia, Feiyu and Zhang, Wentao and Long, Junfeng and Wang, Jingbo and Pang, Jiangmiao},
  title         = {Towards Adaptable Humanoid Control via Adaptive Motion Tracking},
  journal       = {arXiv preprint arXiv:2510.14454},
  year          = {2025},
  eprint        = {2510.14454v1},
  archivePrefix = {arXiv}
}

@inproceedings{asap2025,
  author    = {He, Tairan and Gao, Jiawei and Xiao, Wenli and Zhang, Yuanhang and Wang, Zi and Wang, Jiashun and Luo, Zhengyi and He, Guanqi and Sobanbabu, Nikhil and Pan, Chaoyi and Yi, Zeji and Qu, Guannan and Kitani, Kris and Hodgins, Jessica K. and Fan, Linxi and Zhu, Yuke and Liu, Changliu and Shi, Guanya},
  title     = {{ASAP}: Aligning Simulation and Real-World Physics for Learning Agile Humanoid Whole-Body Skills},
  booktitle = {Proceedings of Robotics: Science and Systems},
  year      = {2025},
  doi       = {10.15607/RSS.2025.XXI.066}
}

@article{maskedmimic2024,
  author  = {Tessler, Chen and Guo, Yunrong and Nabati, Ofir and Chechik, Gal and Peng, Xue Bin},
  title   = {{MaskedMimic}: Unified Physics-Based Character Control through Masked Motion Inpainting},
  journal = {ACM Transactions on Graphics},
  volume  = {43},
  number  = {6},
  year    = {2024},
  doi     = {10.1145/3687951}
}

@article{mjlab2026,
  author        = {Zakka, Kevin and Liao, Qiayuan and Yi, Brent and {Le Lay}, Louis and Sreenath, Koushil and Abbeel, Pieter},
  title         = {{mjlab}: A Lightweight Framework for {GPU}-Accelerated Robot Learning},
  journal       = {arXiv preprint arXiv:2601.22074},
  year          = {2026},
  eprint        = {2601.22074v2},
  archivePrefix = {arXiv}
}

@inproceedings{referencegoal2026,
  author    = {Wang, Jiashun and Mungai, M. Eva and Li, He and Sleiman, Jean Pierre and Hodgins, Jessica K. and Farshidian, Farbod},
  title     = {Generalizing from References using a Multi-Task Reference and Goal-Driven {RL} Framework},
  booktitle = {Proceedings of Robotics: Science and Systems},
  address   = {Sydney, Australia},
  month     = jul,
  year      = {2026},
  doi       = {10.15607/RSS.2026.XXII.026}
}

@inproceedings{omnixtreme2026,
  author    = {Wang, Yunshen and Zhu, Shaohang and Zhi, Peiyuan and Li, Yuhan and Li, Jiaxin and Li, Yong-Lu and Xiao, Yuchen and Wang, Xingxing and Jia, Baoxiong and Huang, Siyuan},
  title     = {{OmniXtreme}: Breaking the Generality Barrier in High-Dynamic Humanoid Control},
  booktitle = {Proceedings of Robotics: Science and Systems},
  address   = {Sydney, Australia},
  month     = jul,
  year      = {2026},
  doi       = {10.15607/RSS.2026.XXII.031}
}

@inproceedings{parkour2026,
  author    = {Wu, Zhen and Huang, Xiaoyu and Yang, Lujie and Zhang, Yuanhang and Chen, Xi and Abbeel, Pieter and Duan, Rocky and Kanazawa, Angjoo and Sferrazza, Carmelo and Shi, Guanya and Liu, C. Karen},
  title     = {Perceptive Humanoid Parkour: Chaining Dynamic Human Skills via Motion Matching},
  booktitle = {Proceedings of Robotics: Science and Systems},
  address   = {Sydney, Australia},
  month     = jul,
  year      = {2026},
  doi       = {10.15607/RSS.2026.XXII.020}
}

@inproceedings{hiwet2026,
  author    = {Cao, Zhanxiang and Yan, Liyun and Zhang, Yang and Chen, Sirui and Ma, Jianming and Zhan, Tianyue and Fu, Shengcheng and Jia, Yufei and Lu, Cewu and Gao, Yue},
  title     = {{HiWET}: Hierarchical World-Frame End-Effector Tracking for Long-Horizon Humanoid Loco-Manipulation},
  booktitle = {Proceedings of Robotics: Science and Systems},
  address   = {Sydney, Australia},
  month     = jul,
  year      = {2026},
  doi       = {10.15607/RSS.2026.XXII.030}
}

@article{meshmimic2026,
  title = {{MeshMimic}: Geometry-Aware Humanoid Motion Learning through {3D} Scene Reconstruction},
  author = {Zhang, Qiang and Ma, Jiahao and Liu, Peiran and Shi, Shuai and Su, Zeran and Wang, Zifan and Sun, Jingkai and Cui, Wei and Yu, Jialin and Han, Gang and Zhao, Wen and Sun, Pihai and Yin, Kangning and Wang, Jiaxu and Cao, Jiahang and Zhang, Lingfeng and Cheng, Hao and Hao, Xiaoshuai and Ji, Yiding and Liang, Junwei and Tang, Jian and Xu, Renjing and Guo, Yijie},
  journal = {arXiv preprint arXiv:2602.15733},
  year = {2026}
}

@inproceedings{franceschi2018bilevel,
  author = {Franceschi, Luca and Frasconi, Paolo and Salzo, Saverio and Grazzi, Riccardo and Pontil, Massimiliano},
  title = {Bilevel Programming for Hyperparameter Optimization and Meta-Learning},
  booktitle = {Proceedings of the 35th International Conference on Machine Learning},
  series = {Proceedings of Machine Learning Research},
  volume = {80},
  pages = {1568--1577},
  year = {2018}
}

@inproceedings{sutton1999policygradient,
  author = {Sutton, Richard S. and McAllester, David A. and Singh, Satinder P. and Mansour, Yishay},
  title = {Policy Gradient Methods for Reinforcement Learning with Function Approximation},
  booktitle = {Advances in Neural Information Processing Systems},
  volume = {12},
  year = {1999}
}

\clearpage
\appendix
\section{Notation and Algorithmic Details}
\label{sec:method}

\mimicx{} uses policy execution to determine how a tracking policy should be
supervised. A video-derived robot reference supplies an initial target; policy
rollouts identify where that target is difficult to follow; structured
proposals alter the training objective and distribution of reference starts;
and repeated execution determines which policy to retain.
Figure~\ref{fig:main-overview} details the failure-conditioned feedback path.

\subsection{Reference and Policy Interface}

Given a single-person monocular video, we reconstruct world-grounded human
motion with GVHMR~\citep{gvhmr2024}, represented using
SMPL-X~\citep{smplx2019}, and retarget it to the robot with
GMR~\citep{gmr2025}. The motion adapter produces
\begin{equation}
  R=\{\bar q_t,\dot{\bar q}_t,
       (\bar p_{t,b},\bar Q_{t,b},\bar v_{t,b},\bar\omega_{t,b})_{b\in\mathcal B}
       \}_{t=0}^{T_R-1},
  \label{eq:motion-reference}
\end{equation}
where $q$ denotes actuated joint angles, $p$ body position, $Q$ orientation,
and $v,\omega$ linear and angular velocity. Bars indicate reference quantities;
the adapter also records the motion frame rate and enforces the robot's joint
ordering and quaternion convention. The reference is a kinematic target, while
the policy drives the simulated robot through joint-position commands.

The tracking backend follows the whole-body tracking formulation of
BeyondMimic~\citep{beyondmimic2025}, implemented in
MuJoCo/MjLab~\citep{mujoco2012,mjlab2026}. We train
$\pi_\theta(a_t\mid o_t)$ with PPO~\citep{ppo2017}, where the observation
contains the motion command, reference-anchor pose in the robot frame, base
velocities, joint state, and previous action. The outer loop takes a current
checkpoint $\theta_k$, reference $R$, and a fixed verification specification.
It creates candidate supervision $U_{k,j}=(W_{k,j},C_{k,j})$, comprising reward
parameters and a reset curriculum, and resumes training from $\theta_k$.
Selection links candidate checkpoints, patches, and execution records.

The reference interface also accepts registered repaired motions. In the
reported automated loop, the selected motion file is fixed across the current
policy and its candidates; reference preparation precedes this search.
Consequently, geometric repair supplies supervision to the loop, while
policy-conditioned objective and curriculum proposals are generated within it.
Section~\ref{sec:setup} identifies the tasks that use repaired references.

\subsection{Execution-Conditioned Localization}

Each verification rollout starts at the beginning of the motion and records
termination flags, reward, body-position and height discrepancies, anchor
errors, and velocity errors at every control step. For diagnosis, we use the
first scheduled evaluation repeat. Let $f$ be its first flagged termination;
when no termination occurs, choose the step with the largest tracked-body
position error. The diagnostic interval is
\begin{equation}
  \mathcal W_k=[\max(1,f-40),\min(T,f+40)],
  \label{eq:failure-window}
\end{equation}
where $T$ is the evaluation horizon. This interval provides both a local
description of the difficult transition and candidate initialization states
from which to practice it.

The miner summarizes each recorded diagnostic channel by its maximum and mean
inside $\mathcal W_k$, sorts these pairs in descending lexicographic order,
and retains eight channels. Body-specific channel names identify the affected
regions. Wrist, elbow, and shoulder discrepancies suggest upper-body precision;
hip, knee, and ankle discrepancies suggest lower-body position and height;
pelvis, torso, and anchor channels suggest global stabilization; velocity
channels suggest dynamics and smoothness. These are deterministic mappings
from measured discrepancies to reward families. Task-priority body lists and diagnosed windows supply fixed metadata for component comparisons.

\subsection{Coordinated Tracking Supervision}

\paragraph{Separating articulation from global motion.}
Let $a$ denote the torso anchor. To measure local articulation without counting
horizontal drift twice, align reference bodies with the robot's anchor
translation and heading. Define $\Delta Q_t$ as the yaw component of
$Q_{t,a}\bar Q_{t,a}^{-1}$ and
$d_t=(p_{t,a}^{x},p_{t,a}^{y},\bar p_{t,a}^{z})$. The aligned targets are
\begin{equation}
  \widetilde p_{t,b}=d_t+\Delta Q_t(\bar p_{t,b}-\bar p_{t,a}),
  \qquad \widetilde Q_{t,b}=\Delta Q_t\bar Q_{t,b}.
  \label{eq:reference-alignment}
\end{equation}
Absolute anchor tracking is supervised separately, preserving sensitivity to
global displacement and orientation even when local articulation is accurate.

\paragraph{Local, global, and dynamics terms.}
For a body group $G$, define the positional term
\begin{equation}
  \phi_p(G,\sigma)=\exp\!\left[-\frac{1}{|G|\sigma^2}
     \sum_{b\in G}\|p_{t,b}-\widetilde p_{t,b}\|_2^2\right].
  \label{eq:group-reward}
\end{equation}
Height tracking replaces the squared norm by its vertical component;
orientation tracking uses squared quaternion angular distance. Velocity terms
compare robot and reference velocities in world coordinates. The objective
combines these terms as
\begin{equation}
  r_t(W)=
    \sum_{\ell\in\mathcal L}w_\ell\phi_\ell
    +\sum_{g\in\mathcal G}w_g\phi_g
    +\sum_{d\in\mathcal D}w_d\phi_d
    -\lambda_a\|a_t-a_{t-1}\|_2^2-\mathcal P_t,
  \label{eq:tracking-objective}
\end{equation}
where $\mathcal L$ contains selected limb position and height terms,
$\mathcal G$ contains absolute anchor and core/body pose terms, and
$\mathcal D$ contains linear/angular velocity and reference-conditioned stance
terms. $\mathcal P_t$ collects joint-limit and self-collision penalties.
The full candidate profile includes a stance-slip reward: when a reference
ankle lies below a prescribed height, it penalizes the robot foot's squared
horizontal velocity plus a weighted vertical component. Thus contact-related
supervision is grounded in reference ankle height and physical foot motion.

Proposals coordinate these groups instead of substituting a single local
reward for the tracking objective. Every generated proposal strengthens the
absolute anchor and core-position terms, retains action smoothing, and adds
position, height, or velocity rewards according to the diagnostic families.
For example, lower-body emphasis adds position and height precision together
with linear-velocity tracking; a dynamics proposal additionally supervises
core angular velocity. The full profile retains its existing wrist, core,
and stance terms when these diagnostic additions are applied.

\subsection{Bounded Proposals and Policy Continuation}

The generator emits a finite set of typed patches over recognized reward
terms and tracked bodies. Nonnegative reward weights lie in $[0,10]$, exponential
scales in $[0.01,5]$, and the action-rate coefficient in $[-1,0]$. A proposed
replay interval must lie within the horizon and overlap the diagnosed window;
its probability lies in $[0.2,0.8]$. Proposal generation leaves evaluation seeds, termination criteria, and
acceptance rules fixed, keeping training adaptation separate from verification.

The heuristic generator produces three candidates: window replay with
probability $0.45$, stronger replay with probability $0.65$ and added dynamics
emphasis, and full-start consolidation. For a replay probability $\rho$, the
initial reference frame follows
\begin{equation}
  t_0\sim(1-\rho)\delta_0+
       \rho\,\operatorname{Unif}\{s,s+1,\ldots,e-1\},
  \label{eq:reset-curriculum}
\end{equation}
with $[s,e)$ taken from the diagnostic interval and clipped to the available
motion. Consolidation uses $\rho=0$. Replay therefore means reference-state
initialization followed by new on-policy simulation, not reuse of an offline
transition buffer. The mixture preserves whole-sequence practice while
increasing exposure to the difficult transition.

All candidates in an iteration start from the same accepted checkpoint and use
the same training seed and continuation budget. PPO updates the policy under
each candidate's supervision; its optimization rule is unchanged. Checkpoint resumption carries the learned policy and optimizer state forward
under the configured continuation learning rate. The resulting
policy is evaluated under the common strict tracking task, without its
candidate-specific reward patch. This makes evaluation reward comparable
within a fixed-reference comparison despite differences in training rewards.

\subsection{Repeated Execution and Protected Selection}

For candidate $j$ and evaluation seed $s$, let $n_{j,s}$ count termination
events and $h_{j,s}$ be the first terminated step, with $h_{j,s}=T+1$ when
no termination is flagged within the evaluation budget. Over $S$ repeats, define
\begin{equation}
  Z_j=\sum_{s=1}^{S}\mathbf 1[n_{j,s}=0],\qquad
  H_j=\min_s h_{j,s},\qquad N_j=\sum_s n_{j,s}.
  \label{eq:execution-statistics}
\end{equation}
$Z_j/S$ is full-horizon execution success and $H_j$ is the worst-case Robust
Execution Horizon. Let $b_j$ and $z_j$ be the medians across repeats of the
\emph{last recorded} maximum body-position and ankle-height errors, and
$\widetilde r_j$ the median rollout-average evaluation reward. The recorded
protocol maximizes the lexicographic key
\begin{equation}
  K_j=(Z_j,H_j,-N_j,-b_j,-z_j,\widetilde r_j).
  \label{eq:verification-key}
\end{equation}
Success is considered first, then the earliest failure across repeats, then
the number of resets. Tracking and reward resolve subsequent ties.

A candidate must have all $S$ repeats, strictly exceed the current policy's
key, and satisfy the configured body-error guard,
\begin{equation}
  b_j\leq (1+\epsilon)b_k,\qquad \epsilon=0.20.
  \label{eq:body-guard}
\end{equation}
Missing guard measurements reject a candidate. Candidates are tested in
descending rank order, and the first passing candidate replaces the current
policy; otherwise, the current checkpoint is retained. Reward can therefore
break an execution-and-tracking tie, but cannot reverse an earlier comparison.
Figure~\ref{fig:verification} illustrates this accept-or-protect decision.

\begin{algorithm}[t]
  \caption{\mimicx{} policy-in-the-loop supervision refinement}
  \label{alg:mimicx}
  \begin{algorithmic}[1]
    \Require Registered reference $R$, current checkpoint $\theta_0$, fixed
      evaluation seeds, iteration and candidate budgets
    \State $\theta\gets\theta_0$
    \For{each permitted refinement iteration}
      \State $E_0\gets\operatorname{Evaluate}(\theta,R)$ on all fixed seeds
      \State $D\gets\operatorname{Diagnose}(E_0[1])$
      \State $\mathcal U\gets\operatorname{Validate}(\operatorname{Propose}(D))$
      \For{each budgeted supervision patch $U_j\in\mathcal U$}
        \State $\theta_j\gets\operatorname{PPOContinue}(\theta,R,U_j)$
        \State $E_j\gets\operatorname{Evaluate}(\theta_j,R)$ on all fixed seeds
      \EndFor
      \State Rank candidates using Equation~\ref{eq:verification-key}
      \State $j^*\gets$ first fully evaluated candidate with $K_j>K_0$
        satisfying Equation~\ref{eq:body-guard}
      \If{such a candidate exists}
        \State $\theta\gets\theta_{j^*}$; record acceptance and checkpoint hash
      \Else
        \State Record current-policy protection; \textbf{break}
      \EndIf
      \If{all selected-policy repeats complete without termination}
        \State \textbf{break}
      \EndIf
    \EndFor
    \State \Return $\theta$ and the proposal/evaluation/selection record
  \end{algorithmic}
\end{algorithm}

\subsection{Heterogeneous Feedback Execution}

Each iteration retains its manifest, input hashes, candidate patches,
checkpoints, repeated metrics, and selection reasons. Accepted checkpoint
state is persisted using atomic file replacement; resumption checks the
manifest and recorded outputs. The loop stops on verified full-horizon
execution, a protected current policy, or budget exhaustion
(Algorithm~\ref{alg:mimicx}).

\hloop{} expresses independent work as a dependency graph with GPU, CPU, and
I/O jobs. Ready work starts once dependencies and resources are available, allowing
CPU diagnosis to overlap other GPU rollouts. Selection waits for all required
evaluations. Execution modes preserve inputs, seeds, and budgets, and are
compared through their outputs and selection outcomes. We measure this executor
on the fixed-checkpoint rollout--diagnosis--selection workload specified next.

\section{Analysis of Policy-in-the-Loop Refinement}
\label{sec:theory}

This section characterizes how execution feedback governs policy continuation.
We first express refinement as a finite-budget two-level problem, then derive
the protected selection property, the complementary roles of tracking rewards
and practice distributions, and the scheduling invariance of heterogeneous
feedback execution. The analysis uses the reference, verification key, and
curriculum defined in Section~\ref{sec:method}.

\subsection{Finite-Budget Two-Level Optimization}
\label{sec:theory-bilevel}

Let $\mathcal A_B$ denote $B$ PPO updates, including the checkpoint's optimizer
state, and let $\xi_k$ collect the prescribed training randomness. At iteration
$k$, diagnosis $D_k$ generates a finite proposal set $\mathcal U(D_k)$. Candidate
$j$ is obtained by
\begin{equation}
  \theta_{k,j}=\mathcal A_B(\theta_k;R,W_{k,j},C_{k,j},\xi_k),
  \qquad (W_{k,j},C_{k,j})\in\mathcal U(D_k).
  \label{eq:theory-continuation}
\end{equation}
Here $R$ remains fixed within the comparison. The inner procedure optimizes
the candidate's training objective; the outer decision evaluates its resulting
policy under the common verification specification $\mathcal V$. Writing the
actual finite optimization trajectory makes the training budget part of the
problem, as in optimization-based bilevel formulations~\citep{franceschi2018bilevel}.
Equation~\ref{eq:theory-continuation} uses the PPO continuation operator itself,
rather than replacing it by an exact inner optimum.

Let $\widehat K_{\mathcal V}(\theta)$ be the key in
Equation~\ref{eq:verification-key}, computed from the prescribed repeated
rollouts. Define $\mathcal F_k$ to contain the incumbent and all fully evaluated
candidates with available guard measurements satisfying
Equation~\ref{eq:body-guard}. The outer decision is
\begin{equation}
  \theta_{k+1}\in
  \underset{\theta\in\mathcal F_k}{\operatorname{arg\,max}_{\rm lex}}
  \widehat K_{\mathcal V}(\theta),
  \label{eq:theory-outer}
\end{equation}
with incumbent preference on an equal key and a fixed proposal order for
remaining ties. This is equivalent to testing candidates in descending key
order and selecting the first guarded strict improvement. Thus the outer
criterion acts on executed behavior, while objective and curriculum changes
remain instruments for producing better candidates. Execution defines the outer objective.

\subsection{Protected Execution-Priority Selection}
\label{sec:theory-gate}

Fix the reference, horizon, termination rules, evaluation seeds, and metric
aggregation. For the following statements, a checkpoint has a reproducible
verification record under this specification. This defines a common empirical
ordering across refinement iterations.

\begin{mimicxproposition}[Monotone verified selection]
\label{prop:verified-monotonicity}
Every completed refinement iteration satisfies
\begin{equation}
  \widehat K_{\mathcal V}(\theta_{k+1})
  \succeq_{\rm lex}\widehat K_{\mathcal V}(\theta_k).
  \label{eq:theory-monotone}
\end{equation}
The inequality is strict when a candidate is accepted. For any accepted
candidate, the guarded body statistic satisfies
$b_{k+1}\le(1+\epsilon)b_k$; otherwise $\theta_{k+1}=\theta_k$.
\end{mimicxproposition}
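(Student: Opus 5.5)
The argument is a case split on the outcome of the gate in iteration $k$. The only facts used are the reproducibility assumption, the definition of the feasible set $\mathcal F_k$, and the outer decision in Equation~\ref{eq:theory-outer}. Reproducibility fixes the reference, horizon, termination rules, seeds, and aggregation, so $\widehat K_{\mathcal V}(\theta)$ is a deterministic function of the checkpoint. The key $\widehat K_{\mathcal V}(\theta_k)$ computed for the incumbent at iteration $k$ is therefore the same value against which its successor is compared. The lexicographic order $\succeq_{\rm lex}$ on the six-tuple $K_j=(Z_j,H_j,-N_j,-b_j,-z_j,\widetilde r_j)$ is a total preorder on $\mathbb R^6$. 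This lets us compare $\theta_{k+1}$ with $\theta_k$ directly.

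\emph{Case 1: protection.} No fully evaluated candidate both strictly exceeds the incumbent key and satisfies Equation~\ref{eq:body-guard}. Algorithm~\ref{alg:mimicx} then records protection and sets $\theta_{k+1}=\theta_k$. Equality of checkpoints together with reproducibility gives $\widehat K_{\mathcal V}(\theta_{k+1})=\widehat K_{\mathcal V}(\theta_k)$, so the non-strict inequality holds with equality. This is also the ``otherwise'' clause of the proposition.

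\emph{Case 2: acceptance.} The algorithm returns the first candidate $j^*$ in descending rank order with $K_{j^*}>K_0$ that passes the guard. The strict improvement $\widehat K_{\mathcal V}(\theta_{k+1})\succ_{\rm lex}\widehat K_{\mathcal V}(\theta_k)$ is then exactly the acceptance condition. The bound $b_{k+1}\le(1+\epsilon)b_k$ is exactly Equation~\ref{eq:body-guard} evaluated at $j^*$: by definition $b_{k+1}$ is the median body statistic of the accepted checkpoint, which is $b_{j^*}$. A candidate with a missing guard measurement is rejected and so never reaches this case. It helps to note that the argmax formulation in Equation~\ref{eq:theory-outer} selects the same $\theta_{k+1}$. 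The incumbent lies in $\mathcal F_k$ and wins ties, so any argmax differing from $\theta_k$ must strictly exceed it lexicographically. Both descriptions therefore yield the same case split.

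The main obstacle is not algebraic but a consistency issue. The key of $\theta_{k+1}$ is recomputed as ``incumbent'' at iteration $k+1$, and this value must coincide with the value $K_{j^*}$ used when it was accepted. Otherwise monotonicity across iterations would fail even though each gate decision is correct. I would discharge this explicitly through the standing reproducibility hypothesis, which fixes the evaluation seeds, termination tests, and median/min/sum aggregations. Under that hypothesis, identical checkpoints produce identical records, and the chain $\widehat K(\theta_0)\preceq\widehat K(\theta_1)\preceq\cdots$ follows by induction. I would also note that ``completed iteration'' excludes runs halted by budget exhaustion before the gate, so those runs need no separate treatment.
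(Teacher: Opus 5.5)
Your proposal is correct and follows essentially the same argument as the paper. The protection branch, equivalently the incumbent's membership in $\mathcal F_k$, gives the non-strict inequality; the strict acceptance condition $K_{j^*}\succ_{\rm lex}K_0$, equivalently incumbent preference on ties, gives strictness; and the guard in Equation~\ref{eq:body-guard} gives the body bound. You differ only in arguing from the sequential first-passing-candidate rule rather than the argmax in Equation~\ref{eq:theory-outer}, and in spelling out the reproducibility requirement, which the paper absorbs into its standing assumption that each checkpoint has a reproducible verification record.
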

\begin{proof}
The incumbent belongs to $\mathcal F_k$, so the maximum in
Equation~\ref{eq:theory-outer} cannot have a smaller key. Incumbent preference
excludes replacement by an equal key, giving strict improvement whenever
replacement occurs. Membership of an accepted candidate in $\mathcal F_k$
gives the body bound. If no admissible candidate strictly improves the key,
the incumbent is retained. Applying the same argument successively yields
Equation~\ref{eq:theory-monotone} over all completed iterations.
\end{proof}

The ordering protects the specified execution priorities: $Z$ cannot decrease;
when $Z$ is unchanged, $H$ cannot decrease; when both are unchanged, $N$ cannot
increase. Lower-priority coordinates resolve the remaining ties. The body
guard additionally restricts $b$, the median of the last-record maximum body
errors, even when a higher-priority execution coordinate improves. These are
properties of the common verification records; the mean and tail tracking
statistics reported in Tables~\ref{tab:core} and~\ref{tab:paired} provide
separate empirical measurements of motion fidelity.

\begin{mimicxcorollary}[Reward cannot overturn an execution rejection]
\label{cor:reward-priority}
If a candidate's execution prefix $(Z_j,H_j,-N_j)$ is lexicographically smaller
than the incumbent's, no increase in its evaluation reward can make it
acceptable. Likewise, a violated body guard rejects a candidate independently
of reward.
\end{mimicxcorollary}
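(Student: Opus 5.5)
The corollary follows directly from the structure of the acceptance rule in Equations~\ref{eq:verification-key}--\ref{eq:body-guard} and the outer decision in Equation~\ref{eq:theory-outer}. The plan is to treat the two claims separately. Each reduces to one property of lexicographic order, or to membership in the admissible set $\mathcal F_k$.

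\textbf{First claim.} Write the key as $K_j=(P_j,\,Q_j)$. The prefix is $P_j=(Z_j,H_j,-N_j)$ and the suffix is $Q_j=(-b_j,-z_j,\widetilde r_j)$. By definition of lexicographic order, the first coordinate at which two keys differ decides the comparison. Suppose $P_j\prec_{\rm lex}P_0$, where index $0$ is the incumbent. Then there is a first index $i\le 3$ at which the two prefixes differ, and at that index the candidate's entry is smaller. This index is also the first differing coordinate of the full keys, so $K_j\prec_{\rm lex}K_0$ whatever the values of $Q_j$.

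Reward $\widetilde r_j$ enters only as the sixth coordinate of $Q_j$. Evaluation reward is also computed under the common strict task, not the candidate's patched reward, so changing it leaves $P_j$ unchanged. Hence increasing $\widetilde r_j$ cannot produce $K_j\succ_{\rm lex}K_0$. Acceptance requires a strict improvement over the incumbent key. So the candidate is never selected, whether selection is read as the first-passing rule of Algorithm~\ref{alg:mimicx} or as the argmax in Equation~\ref{eq:theory-outer}. In the argmax reading, the incumbent in $\mathcal F_k$ has a strictly larger key, so the candidate cannot be a maximizer.

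\textbf{Second claim.} A violated guard $b_j>(1+\epsilon)b_k$ excludes the candidate from $\mathcal F_k$ by definition, and the same holds if the guard measurement is missing. Neither condition depends on $\widetilde r_j$. Since the argmax ranges only over $\mathcal F_k$, the candidate cannot be selected, and Proposition~\ref{prop:verified-monotonicity} already records that accepted candidates satisfy the guard.

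\textbf{Main obstacle.} The only real subtlety is showing that reward cannot feed back into the earlier coordinates. This requires that $Z_j,H_j,N_j,b_j$ be computed from termination records and body errors only, independently of the reward signal. I would make this explicit by citing the fixed verification specification, under which terminations and errors are measured from rollouts of the frozen policy. Under that specification, changing the reported reward with the policy held fixed leaves those statistics unchanged.
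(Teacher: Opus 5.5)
Your proposal is correct and follows the paper's proof. An inferior prefix $(Z_j,H_j,-N_j)$ decides the lexicographic comparison before $\widetilde r_j$ is reached, and the body guard is a reward-independent membership condition for $\mathcal F_k$; your extra points (the argmax versus first-passing readings, and that reward cannot feed back into the prefix) are harmless, though the latter is already covered by the hypothesis, which fixes the inferior prefix, and $\widetilde r_j$ is the sixth coordinate of $K_j$, not of $Q_j$.
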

\begin{proof}
Lexicographic comparison is determined by the first differing coordinate.
The execution prefix precedes all tracking coordinates and reward, so changing
the final reward coordinate leaves an inferior execution prefix inferior.
The body guard is a separate membership condition for $\mathcal F_k$ and does
not depend on reward.
\end{proof}

This explains the Kung Fu case in Table~\ref{tab:rejection}: both proposals
increase reward but shorten the worst execution horizon, so the stronger
execution record is retained.

\subsection{Complementarity of Objective and Curriculum}
\label{sec:theory-supervision}

\paragraph{Practice changes which states contribute to learning.}
Let $\mu_0$ be the reset-state distribution for a full-start rollout and
$\mu_{\mathcal W}$ the distribution induced by reference-state initialization
inside the practice window. Both include the prescribed reset randomization.
For fixed $W$ and window, define
\begin{equation}
  J_{\mu,W}(\theta)=\mathbb E_{x_0\sim\mu,\,\tau\sim\pi_\theta}
  \left[\sum_{t=0}^{L-1}\gamma^t r_t(W)\right],
  \qquad \mu_\rho=(1-\rho)\mu_0+\rho\mu_{\mathcal W},
  \label{eq:theory-mixture-return}
\end{equation}
where $L$ is the rollout budget, rewards after termination are zero, and
$\gamma\in[0,1]$. The episode clock $t$ is distinct from its reference phase.

\begin{mimicxproposition}[Mixture objective and window exposure]
\label{prop:mixture}
For a fixed policy and supervision, the expected return satisfies
\begin{equation}
  J_{\mu_\rho,W}=(1-\rho)J_{\mu_0,W}+\rho J_{\mu_{\mathcal W},W}.
  \label{eq:theory-return-decomposition}
\end{equation}
Where these objectives are differentiable and $\rho$, $W$, and the reset
distributions are held fixed, their gradients satisfy the same mixture
identity. If $p_{\mathcal W}$ is the probability that a full-start rollout
visits the practice window, counting initialization inside it as a visit, then
\begin{equation}
  p_\rho=\rho+(1-\rho)p_{\mathcal W},\qquad
  p_\rho-p_{\mathcal W}=\rho(1-p_{\mathcal W})\ge0.
  \label{eq:theory-window-exposure}
\end{equation}
\end{mimicxproposition}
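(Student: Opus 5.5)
The plan is to treat the curriculum as a two-stage sampling procedure. First a Bernoulli($\rho$) coin $\beta$ is drawn. Then $x_0\sim\mu_0$ if $\beta=0$ and $x_0\sim\mu_{\mathcal W}$ if $\beta=1$. After that the trajectory is generated by $\pi_\theta$ and the fixed simulator dynamics. Every claim should then follow from the law of total expectation over $\beta$, because nothing downstream of $x_0$ depends on $\beta$. The policy, the reward $r_t(W)$, the termination rule, the rollout budget $L$ and the discount $\gamma$ are all identical across the two branches.

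For the return identity (\ref{eq:theory-return-decomposition}), write $G(\tau)=\sum_{t=0}^{L-1}\gamma^t r_t(W)$ with rewards after termination set to zero. Conditioning on $x_0$ gives $J_{\mu,W}(\theta)=\int V_\theta(x_0)\,d\mu(x_0)$, where $V_\theta(x_0)=\mathbb E_{\tau\sim\pi_\theta\mid x_0}[G(\tau)]$. The map $\mu\mapsto J_{\mu,W}$ is therefore linear in the reset measure. Substituting $\mu_\rho=(1-\rho)\mu_0+\rho\mu_{\mathcal W}$ gives the decomposition directly. Integrability is automatic here: the horizon is finite and each reward term is bounded (the $\phi$ terms lie in $[0,1]$, and the penalties are bounded on the compact joint and action ranges). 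For the gradient statement, $\rho$ and the reset measures do not depend on $\theta$. Differentiating both sides of the identity is then just linearity of differentiation, applied where both component objectives are differentiable. No score-function argument is needed, because the identity already holds as functions of $\theta$.

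For the exposure statement, let $E$ be the event that the rollout visits the practice window, with initialization inside it counted as a visit. By definition $\mu_{\mathcal W}$ initializes inside the window, so $P(E\mid\beta=1)=1$. Also $P(E\mid\beta=0)=p_{\mathcal W}$, since the $\beta=0$ branch is exactly a full-start rollout. Total probability gives $p_\rho=\rho\cdot1+(1-\rho)p_{\mathcal W}$. Rearranging gives $p_\rho-p_{\mathcal W}=\rho(1-p_{\mathcal W})$, which is nonnegative because $\rho\in[0,1]$ and $p_{\mathcal W}\le1$.

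The only delicate step is justifying $P(E\mid\beta=1)=1$ together with the claim that the conditional law of the rollout given $x_0$ is the same in both branches. The issue is that the episode clock and the reference phase differ: a window start sets the phase to $t_0\in\{s,\dots,e-1\}$ while the clock is $0$. I would make the convention explicit that \emph{visit} refers to reference phase, and that phase is part of the state $x_0$. Then the transition kernel depends only on the state, not on how it was sampled, and the reset randomization is contained in $\mu$ itself. With that convention fixed, the rest is routine.
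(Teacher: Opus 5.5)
Your proposal is correct and follows the paper's own argument: condition on the Bernoulli choice of reset distribution, use the law of total expectation (linearity of $J_{\mu,W}$ in the reset measure) for the return identity, differentiate the fixed-weight sum for the gradient identity, and marginalize the visit event (probability one under window initialization, $p_{\mathcal W}$ under a full start) for the exposure formula. Your extra care about integrability, and about treating reference phase as part of $x_0$, goes slightly beyond the paper, which only notes that the episode clock differs from the reference phase. One small nit: the action-rate penalty need not be bounded under unclipped Gaussian exploration, but finite second moments of the actions already give integrability over the finite horizon.
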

\begin{proof}
Condition on the Bernoulli choice of reset distribution. The law of total
expectation gives Equation~\ref{eq:theory-return-decomposition}; differentiation
of this finite weighted sum gives the gradient identity. The visit probability is one under window initialization; under a full start,
it is $p_{\mathcal W}$. Marginalization yields Equation~\ref{eq:theory-window-exposure}.
\end{proof}

The exposure gain is largest when the current policy seldom reaches the
diagnosed transition. For example, with $p_{\mathcal W}=0.1$, the two proposal
probabilities $\rho=0.45$ and $0.65$ give visit probabilities $0.505$ and
$0.685$, respectively. This calculation concerns opportunities to practice;
successful execution through the window is measured by the subsequent
full-start verification. The gradient identity characterizes the objective
sampled by continuation, with PPO supplying its finite-budget update rule.

\paragraph{Reward parameters change sensitivity within visited states.}
For one positional group, write
$E_G=|G|^{-1}\sum_{b\in G}\|e_b\|^2$ and
$r_G=w_G\exp(-E_G/\sigma_G^2)$, where
$e_b=p_b-\widetilde p_b$. Holding the error coordinates independent gives
\begin{equation}
  \frac{\partial r_G}{\partial e_b}
  =-\frac{2w_G}{|G|\sigma_G^2}
  \exp(-E_G/\sigma_G^2)e_b.
  \label{eq:theory-reward-sensitivity}
\end{equation}
This is the reward's sensitivity to tracking error. Policy optimization uses
sampled returns and policy gradients~\citep{sutton1999policygradient}, rather
than differentiating this expression through the simulator. Increasing $w_G$
scales the sensitivity, while changing $\sigma_G$ changes which error
magnitudes receive it. For fixed $E_G>0$, the scale-dependent factor
$u^{-1}\exp(-E_G/u)$, with $u=\sigma_G^2$, has derivative
$\exp(-E_G/u)(E_G-u)/u^3$ and is maximized at $u=E_G$. Thus reducing a scale
indefinitely does not indefinitely strengthen supervision at a nonzero error.
This motivates the bounded joint search over weights and scales in
Section~\ref{sec:method}.

For the nonnegative tracking terms indexed by
$\mathcal I=\mathcal L\cup\mathcal G\cup\mathcal D$, Taylor's theorem yields
the explicit quadratic approximation
\begin{equation}
  \sum_{i\in\mathcal I} w_i e^{-E_i/\sigma_i^2}
  =\sum_i w_i-\sum_i\frac{w_i}{\sigma_i^2}E_i+\mathcal R,
  \qquad
  0\le\mathcal R\le\frac12\sum_i w_i
           \left(\frac{E_i}{\sigma_i^2}\right)^2.
  \label{eq:theory-quadratic}
\end{equation}
Indeed, $e^{-x}=1-x+e^{-\zeta}x^2/2$ for some $\zeta\in[0,x]$ when $x\ge0$;
summing with $w_i\ge0$ gives the bound. Locally, the tracking objective
therefore assigns effective coefficients $w_i/\sigma_i^2$ to local,
global, and dynamic squared discrepancies. Action smoothing and physical
penalties remain additional terms as in Equation~\ref{eq:tracking-objective}.
Curriculum modifies exposure to these discrepancies, while objective
refinement modifies their relative sensitivity. The two operations act on
different parts of the learning problem, providing a rationale for their
combination in the component comparison of Table~\ref{tab:components}.

\subsection{Decision-Preserving Heterogeneous Execution}
\label{sec:theory-hloop}

Represent the fixed-checkpoint feedback workload by a directed acyclic graph
$\mathcal G=(V,E)$. Each node $v$ consumes immutable external inputs $a_v$,
outputs from its predecessors, and a node-local random stream $\xi_v$:
\begin{equation}
  y_v=F_v\!\left(a_v,\{y_u:u\in\operatorname{pred}(v)\},\xi_v\right).
  \label{eq:theory-node}
\end{equation}
Assume these functions produce schedule-independent semantic outputs for the
same inputs and random streams, with no shared mutable state between
independent nodes. Timing and log-order metadata are not part of $y_v$.
The selector consumes all required reports in a fixed identity order and uses
a deterministic tie rule. These conditions specify the reproducible workload
whose execution is rescheduled; no policy update is inserted into it.

\begin{samepage}
\begin{mimicxproposition}[Scheduling invariance]
\label{prop:schedule-invariance}
Two dependency-respecting schedules completing the same graph under these conditions produce identical semantic outputs and decisions.
\end{mimicxproposition}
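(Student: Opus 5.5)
The plan is to argue by induction over a topological order of the finite DAG $\mathcal G$. The claim to establish is that every node $v$ receives the same semantic output $y_v$ under any two dependency-respecting schedules $\sigma_1,\sigma_2$ that complete the graph.

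Fix a topological order $v_1,\dots,v_n$ of $\mathcal G$; one exists because the graph is acyclic. The induction hypothesis is that $y_{v_i}^{(\sigma_1)}=y_{v_i}^{(\sigma_2)}$ for all $i<m$. For $v_m$, a dependency-respecting schedule starts $v_m$ only after every $u\in\operatorname{pred}(v_m)$ has completed, and every such predecessor precedes $v_m$ in the topological order. So by the hypothesis, both schedules feed $F_{v_m}$ the same predecessor outputs. The external input $a_{v_m}$ is immutable and the random stream $\xi_{v_m}$ is node-local, so these are the same under both schedules as well. Equation~\ref{eq:theory-node} together with the assumption of schedule-independent semantic outputs then gives $y_{v_m}^{(\sigma_1)}=y_{v_m}^{(\sigma_2)}$. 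The base case consists of source nodes, which have no predecessors, and follows by the same reasoning.

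The step I expect to need the most care is justifying that a node's inputs really consist only of $a_v$, $\{y_u\}$ and $\xi_v$, regardless of interleaving. Two things could break this. First, a node running concurrently with $v$ might alter what $v$ reads. This is excluded by the no-shared-mutable-state hypothesis, so concurrent execution is observationally equivalent to some sequential execution. Second, completion order might leak into the inputs. This is excluded because timing and log-order metadata are not part of $y_v$. I will state these two points explicitly, as the precise place where the hypotheses are used. I will also note that random streams are indexed by node rather than drawn from a shared generator, so the order in which nodes draw randomness cannot change $\xi_v$.

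Finally, the selector is itself a node whose predecessors are all required reports. By the induction, it receives the same multiset of reports under both schedules. Because it reads them in a fixed identity order, it sees identical ordered inputs. The lexicographic key of Equation~\ref{eq:verification-key}, the guard of Equation~\ref{eq:body-guard}, and the deterministic tie rule are functions of these inputs alone. Hence the accepted checkpoint and the decision coincide, which proves the proposition.
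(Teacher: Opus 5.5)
Your proposal is correct and follows the paper's proof essentially exactly. Both use topological induction from the source nodes through Equation~\ref{eq:theory-node}, rely on finiteness and acyclicity to cover every node, and conclude at the selector from the fixed identity ordering and the deterministic tie rule. Your explicit accounting of where the no-shared-mutable-state, node-local randomness, and metadata-exclusion hypotheses enter is a faithful elaboration of conditions that the paper folds into its assumption that each node's output is a function of $(a_v,\{y_u\},\xi_v)$ alone.
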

\begin{proof}
Proceed by topological induction. Source nodes receive identical external
inputs and random streams, so their outputs agree. If every predecessor of a
node has matching outputs, Equation~\ref{eq:theory-node} supplies the same
arguments to its function in both schedules, and its output also agrees.
The graph is finite and acyclic, so this covers all nodes. In particular, the
selector receives the same complete ordered report collection; deterministic
selection then gives the same decision.
\end{proof}
\end{samepage}

The assumption concerns reproducible node outputs, not merely equal seed
integers. It can be checked independently of elapsed time by comparing
semantic outputs and decisions. Table~\ref{tab:hloop} reports identical selection outcomes for the measured feedback workload,
alongside a median $1.945\times$ speedup over sequential execution.

\paragraph{Where the time savings arise.}
For fixed node service times $d_v$, suppose each node occupies one slot of
its assigned resource class $r\in\{\mathrm{GPU},\mathrm{CPU},\mathrm{I/O}\}$,
with $m_r$ available slots. Write $W_r=\sum_{v:r(v)=r}d_v$, and let
$L_{\rm crit}=\max_{P}\sum_{v\in P}d_v$ be the longest dependency-path duration.
Every valid schedule has makespan
\begin{equation}
  T_{\rm sched}\ge
  \max\!\left\{L_{\rm crit},\max_r\frac{W_r}{m_r}\right\}.
  \label{eq:theory-runtime-bound}
\end{equation}
Nodes on one path must run in sequence, giving the first bound; a class with
$m_r$ slots can supply at most $m_rT_{\rm sched}$ slot-time, giving the second.
Sequential execution pays $\sum_vd_v$ before orchestration overhead, whereas
heterogeneous execution can overlap independent rollout and diagnosis work.
The bound identifies the critical-path and resource-load constraints; the
measured wall times include actual scheduling overhead. HLoop thus accelerates
feedback by changing when independent work executes while preserving what
information reaches selection.

\section{Complete Experimental Protocol}
\label{sec:setup}

\subsection{Tasks, Robot, and Learning Backend}

The core study comprises Tennis Swing, Football Juggling, Dance Sequence, and
Kung Fu Sequence, with evaluation horizons of 800, 455, 784, and 1,062 control
steps, respectively. The corresponding reference lengths are 519, 455, 784,
and 1,062 frames at 50\,Hz. All use a 29-DoF Unitree G1 on flat terrain in
MuJoCo/MjLab~\citep{mujoco2012}. Simulation advances at 200\,Hz and the policy
at 50\,Hz. Fourteen bodies are tracked, with the torso as the anchor. The
actor has 160 input dimensions and 29 joint-position outputs; actor and critic
use multilayer perceptrons with hidden widths $(512,256,128)$, ELU activations,
and observation normalization.

PPO~\citep{ppo2017} collects 24 steps from each of 1,024 environments per
iteration, followed by five optimization epochs and four minibatches. Its
clipping parameter is $0.2$, discount $0.99$, advantage decay $0.95$, entropy
coefficient $0.005$, and target KL $0.01$. Task-ordered continuation learning-rate
settings are $(10^{-6},7\times10^{-7},9\times10^{-7},5\times10^{-7})$, with an
adaptive schedule. The tracking configuration disables observation
corruption, push perturbations, joint initialization noise, and the configured
friction, encoder-bias, and center-of-mass randomizations.

\subsection{Controlled Continuation Protocol}

Each method uses continuation seeds $101$, $202$, and $303$ on every core task. Each task has one registered
warmstart checkpoint shared by all methods and continuation seeds.

\paragraph{Fixed Reference.}
We resume PPO for 250 iterations using the original retargeted motion and
fixed tracking objective. The backend's existing adaptive motion-start
sampling remains active.

\paragraph{Failure Curriculum.}
We use the same reference and reward profile, replacing start sampling by a
$50\%$ mixture of full-start and window initialization. Previously diagnosed
windows are $[340,430)$ for Tennis Swing, $[380,430)$ for Football Juggling,
$[500,545)$ for Dance Sequence, and $[165,225)$ for Kung Fu Sequence.

\paragraph{Task-Aware Refinement.}
We retain this curriculum and add task-priority position, height, and
linear-velocity rewards, pelvis/torso position and angular-velocity rewards,
stronger anchor tracking, and adjusted action smoothing. Priority groups are
the arms for tennis, both legs for football, pelvis/torso and ankles for dance,
and pelvis/torso and both legs for kung fu.

\paragraph{\mimicx{}.}
We run one automated refinement iteration with three heuristic candidates,
each receiving 250 PPO iterations from the shared warmstart. Its full
supervision profile includes inherited wrist, core, and stance-slip terms
before diagnostic patches are added. Thus the first three methods each use
$250\times24\times1{,}024=6{,}144{,}000$ new transitions; \mimicx{} uses three
such candidate budgets, or 18,432,000 transitions, plus verification.
Dance Sequence and Kung Fu Sequence use registered repaired motion files for
\mimicx{}, including its current-policy evaluation; the other methods use the
original references. Tennis Swing and Football Juggling share the original
reference across methods. Body errors are evaluated against each method's
selected reference, not re-evaluated against a common original target.

\subsection{Verification and Metric Reduction}

All reported core checkpoints are evaluated with seeds $1001$, $2002$, and
$3003$, yielding nine rollouts per task--method cell. Each rollout uses one
environment, deterministic policy inference, full-start initialization, and
active termination tests. These tests flag anchor-height error above
$0.35$\,m, either ankle's reference-relative height error above $0.25$\,m,
or an absolute difference above $0.8$ between the vertical components of
reference and robot anchor-frame gravity. The orientation threshold is a
projected-gravity discrepancy. \mimicx{} reports the selected policy's
verification repeats; protected trials report the shared warmstart again.

Success and Robust Execution Horizon follow
Equation~\ref{eq:execution-statistics}: tennis success uses the sentinel
$800+1=801$, not the reference length. At motion end, the backend restarts the
reference and reinitializes robot state. Success thus means no strict
termination over the evaluation schedule. The evaluator also continues after
termination resets. The instantaneous whole-body scalar is
\begin{equation}
  e_t^{\mathrm{body}}=
    \max_{b\in\mathcal B}\|p_{t,b}-\widetilde p_{t,b}\|_2.
  \label{eq:reported-body-error}
\end{equation}
Mean, nearest-rank 95th percentile, and peak pool all steps across three
repeats; the mean is a temporal mean of worst-body error. Anchor error uses
world-position distance, velocity errors average
bodywise Euclidean discrepancies, and the end-effector height scalar takes
the maximum over the two ankles. These summaries differ
from the final-record medians used by the guard. Task summaries
average training-seed values, while success counts all nine repeats.

\subsection{Breadth and External Evaluation}

Four additional videos cover Basketball, Soccer, Michael Jackson dance, and
Uniandes Dance. Their Fixed Reference/\mimicx{} comparisons use three training
seeds, 300 iterations per continuation, and three evaluation repeats;
\mimicx{} allows two iterations with one candidate each. Reference-input
breadth covers 11 LaFAN1 references~\citep{lafan2020} and three AMASS/ACCAD
references~\citep{amass2019}, including two retargetings of the Form~1
source motion, using training seed $101$, three repeats, and the
250-iteration candidate budget.

The direct tracker study compares the standalone MjLab reproduction of
BeyondMimic~\citep{beyondmimic2025}, released SONIC~\citep{sonic2025},
Fixed Reference, and \mimicx{} on Tennis and Football. Dance and Kung Fu
add common-reference comparisons of Fixed Reference, SONIC, and \mimicx{}.
Each task uses identical target coordinates across its compared policies and
one full interval at 50\,Hz without failure resets (Table~\ref{tab:direct-all}).
BeyondMimic is continued from the shared warmstart
for 250 PPO updates with 1,024 environments, continuation seeds
101/202/303, and configured learning rate $10^{-6}$. Evaluation uses seed
1001. SONIC is evaluated in three independent executions, following a
five-second controller settling phase; logged target frames synchronize
the motion, and offline rendering preserves real-time simulation.
Joint RMSE and fourteen-body root-local forward-kinematic error
measure articulation under common definitions (Table~\ref{tab:sonic}).

\subsection{Heterogeneous Execution Benchmark}

The \hloop{} benchmark freezes the four Fixed Reference checkpoints trained
with seed $101$. Two evaluation seeds per checkpoint produce eight GPU rollout
jobs, eight dependent CPU diagnosis jobs, and one selection job, with zero policy
updates. The selector retains diagnoses whose frontier reaches the requested horizon
and uses a deterministic ordering in every execution mode. Sequential,
bulk-synchronous, and dependency-driven execution share this 17-job graph;
parallel modes use two GPU slots and at most eight CPU workers. After one
warmup per mode, five repetitions rotate execution order. We report median
wall time, its ratio to sequential execution, and identical selection outcomes for this timed workload.
Table~\ref{tab:coverage} summarizes the sampling units across the core,
breadth, direct-policy, and timing studies.
% Table design v3
% Generated by scripts/build_tables.py; source hashes: tables/source_manifest.json
\begin{table}[!htb]
\centering
\caption{\textbf{Experimental coverage and independent sampling units.} Training seeds, repeated evaluations and task aggregates are distinguished explicitly.}
\label{tab:coverage}
\begingroup
\papertablestyle
\begin{tabularx}{\linewidth}{>{\raggedright\arraybackslash}p{.25\linewidth}>{\raggedright\arraybackslash}p{.30\linewidth}*{4}{>{\centering\arraybackslash}X}}

\toprule
\rowcolor{tablehead}
\textbf{Study}  &  \textbf{Recorded evidence} &  \textbf{Tasks}  &  \textbf{Methods}  &  \textbf{\makecell{Train\\seeds}}  &  \textbf{\makecell{Eval.\\repeats}} \\
\midrule
\rowcolor{curriculumbg}
Core tracking  &  48 policies; 144 rollouts  &  4  &  4  &  3  &  3 \\
Repeated verification  &  12 decisions; 6 accept, 6 protect  &  4  &  --  &  3  &  3 \\
Same-motion trackers  &  42 eligible recordings  &  4  &  3--4  &  3\textsuperscript{a}  &  1 \\
Collision scenes  &  30 final rollouts  &  5  &  2  &  1  &  3 \\
\midrule
\rowcolor{curriculumbg}
Additional videos  &  4 paired task aggregates  &  4  &  2  &  --  &  -- \\
Supplied motions  &  11 LaFAN1; 3 ACCAD references  &  14  &  2  &  --  &  -- \\
\rowcolor{curriculumbg}
Learning trajectories  &  3,000 updates; 250 per curve  &  1  &  4  &  3  &  -- \\
Milestone evaluation  &  48 curves; 240 checkpoints  &  4  &  4  &  3  &  1 \\
\midrule
\rowcolor{curriculumbg}
Heterogeneous execution  &  15 wall times; report parity  &  --  &  3  &  --  &  5 \\

\bottomrule

\end{tabularx}
\par\smallskip\noindent\begin{minipage}{\linewidth}\papertablenote \textsuperscript{a} Three continuation seeds or released-controller repetitions. Dashes denote an inapplicable axis or an aggregate-only dataset, not zero trials. Milestones evaluate one fixed seed; final core policies use three repeated evaluations.\par\smallskip\hrule height 0.4pt\relax\end{minipage}
\endgroup
\end{table}

\section{Full Core Results}
\label{sec:results}

\subsection{Closing the loop improves execution and fidelity}

Table~\ref{tab:core} and Figure~\ref{fig:main-results} evaluate the four
component configurations on every core task. Relative to Fixed Reference,
\mimicx{} improves both reported body-tracking error and worst-case
execution horizon for all twelve task-matched continuation-trial pairs.
The task-macro effects are a 25.72\% reduction in body error and a
255.57\% increase in execution horizon, using the reference and metric
definitions in Appendix~\ref{sec:setup}. Table~\ref{tab:core} reports task-level
values and Table~\ref{tab:paired} reports paired effects across metrics.

% Table design v3
% Generated by scripts/build_tables.py; source hashes: tables/source_manifest.json
\begin{table}[!htb]
\centering
\caption{Four-task continuation comparison. Entries show mean $\pm$ sample SD over three continuation-seed trials, each aggregating three evaluation rollouts. Bold marks each task--metric best mean, including ties.}
\label{tab:core}
\begingroup
\papertablestyle
\begin{tabularx}{\linewidth}{>{\raggedright\arraybackslash}p{.18\linewidth}*{7}{>{\centering\arraybackslash}X}}
\toprule
\rowcolor{tablehead}
\textbf{Method} & \textbf{\makecell{Horizon\\(steps, $\uparrow$)}} & \textbf{\makecell{Reward\\($\uparrow$)}} & \textbf{\makecell{Body mean\\(m, $\downarrow$)}} & \textbf{\makecell{Body P95\\(m, $\downarrow$)}} & \textbf{\makecell{Anchor error\\(m, $\downarrow$)}} & \textbf{\makecell{Linear vel.\\(m/s, $\downarrow$)}} & \textbf{\makecell{Angular vel.\\(rad/s, $\downarrow$)}}\\
\midrule
\rowcolor{tablegroup}
\multicolumn{8}{l}{\textit{Tennis Swing ($H=800$ steps)}} \\
\rowcolor{basebg}
\makecell[l]{Fixed\\Reference} & \makecell[c]{322.0\\{\color{FixedGray}$\pm$ 302.2}} & \makecell[c]{0.0657\\{\color{FixedGray}$\pm$ 0.0023}} & \makecell[c]{0.241\\{\color{FixedGray}$\pm$ 0.020}} & \makecell[c]{0.518\\{\color{FixedGray}$\pm$ 0.125}} & \makecell[c]{0.301\\{\color{FixedGray}$\pm$ 0.023}} & \makecell[c]{0.566\\{\color{FixedGray}$\pm$ 0.053}} & \makecell[c]{2.287\\{\color{FixedGray}$\pm$ 0.031}} \\
\rowcolor{curriculumbg}
\makecell[l]{Failure\\Curriculum} & \makecell[c]{\textbf{801.0}\\{\color{FixedGray}$\pm$ 0.0}} & \makecell[c]{\textbf{0.0816}\\{\color{FixedGray}$\pm$ 0.0008}} & \makecell[c]{0.184\\{\color{FixedGray}$\pm$ 0.002}} & \makecell[c]{0.282\\{\color{FixedGray}$\pm$ 0.007}} & \makecell[c]{0.208\\{\color{FixedGray}$\pm$ 0.014}} & \makecell[c]{\textbf{0.426}\\{\color{FixedGray}$\pm$ 0.004}} & \makecell[c]{\textbf{1.798}\\{\color{FixedGray}$\pm$ 0.012}} \\
\rowcolor{taskbg}
\makecell[l]{Task-Aware\\Refinement} & \makecell[c]{768.0\\{\color{FixedGray}$\pm$ 57.2}} & \makecell[c]{0.0788\\{\color{FixedGray}$\pm$ 0.0014}} & \makecell[c]{0.178\\{\color{FixedGray}$\pm$ 0.011}} & \makecell[c]{0.293\\{\color{FixedGray}$\pm$ 0.012}} & \makecell[c]{\textbf{0.177}\\{\color{FixedGray}$\pm$ 0.023}} & \makecell[c]{0.454\\{\color{FixedGray}$\pm$ 0.003}} & \makecell[c]{2.074\\{\color{FixedGray}$\pm$ 0.027}} \\
\rowcolor{oursbg}
\mimicx & \makecell[c]{\textbf{801.0}\\{\color{FixedGray}$\pm$ 0.0}} & \makecell[c]{0.0750\\{\color{FixedGray}$\pm$ 0.0012}} & \makecell[c]{\textbf{0.157}\\{\color{FixedGray}$\pm$ 0.008}} & \makecell[c]{\textbf{0.278}\\{\color{FixedGray}$\pm$ 0.009}} & \makecell[c]{0.276\\{\color{FixedGray}$\pm$ 0.006}} & \makecell[c]{0.439\\{\color{FixedGray}$\pm$ 0.017}} & \makecell[c]{1.979\\{\color{FixedGray}$\pm$ 0.064}} \\
\midrule
\rowcolor{tablegroup}
\multicolumn{8}{l}{\textit{Football Juggling ($H=455$ steps)}} \\
\rowcolor{basebg}
\makecell[l]{Fixed\\Reference} & \makecell[c]{53.7\\{\color{FixedGray}$\pm$ 10.5}} & \makecell[c]{0.0437\\{\color{FixedGray}$\pm$ 0.0043}} & \makecell[c]{0.286\\{\color{FixedGray}$\pm$ 0.017}} & \makecell[c]{0.612\\{\color{FixedGray}$\pm$ 0.096}} & \makecell[c]{0.130\\{\color{FixedGray}$\pm$ 0.009}} & \makecell[c]{0.840\\{\color{FixedGray}$\pm$ 0.056}} & \makecell[c]{3.669\\{\color{FixedGray}$\pm$ 0.241}} \\
\rowcolor{curriculumbg}
\makecell[l]{Failure\\Curriculum} & \makecell[c]{264.3\\{\color{FixedGray}$\pm$ 2.1}} & \makecell[c]{\textbf{0.0785}\\{\color{FixedGray}$\pm$ 0.0007}} & \makecell[c]{\textbf{0.237}\\{\color{FixedGray}$\pm$ 0.007}} & \makecell[c]{\textbf{0.395}\\{\color{FixedGray}$\pm$ 0.005}} & \makecell[c]{0.093\\{\color{FixedGray}$\pm$ 0.006}} & \makecell[c]{\textbf{0.535}\\{\color{FixedGray}$\pm$ 0.006}} & \makecell[c]{\textbf{2.051}\\{\color{FixedGray}$\pm$ 0.025}} \\
\rowcolor{taskbg}
\makecell[l]{Task-Aware\\Refinement} & \makecell[c]{360.0\\{\color{FixedGray}$\pm$ 39.8}} & \makecell[c]{0.0672\\{\color{FixedGray}$\pm$ 0.0011}} & \makecell[c]{0.246\\{\color{FixedGray}$\pm$ 0.007}} & \makecell[c]{0.448\\{\color{FixedGray}$\pm$ 0.002}} & \makecell[c]{\textbf{0.083}\\{\color{FixedGray}$\pm$ 0.014}} & \makecell[c]{0.599\\{\color{FixedGray}$\pm$ 0.019}} & \makecell[c]{2.765\\{\color{FixedGray}$\pm$ 0.076}} \\
\rowcolor{oursbg}
\mimicx & \makecell[c]{\textbf{427.3}\\{\color{FixedGray}$\pm$ 22.4}} & \makecell[c]{0.0683\\{\color{FixedGray}$\pm$ 0.0041}} & \makecell[c]{0.244\\{\color{FixedGray}$\pm$ 0.017}} & \makecell[c]{0.504\\{\color{FixedGray}$\pm$ 0.049}} & \makecell[c]{0.108\\{\color{FixedGray}$\pm$ 0.026}} & \makecell[c]{0.568\\{\color{FixedGray}$\pm$ 0.021}} & \makecell[c]{2.489\\{\color{FixedGray}$\pm$ 0.032}} \\
\midrule
\rowcolor{tablegroup}
\multicolumn{8}{l}{\textit{Dance Sequence ($H=784$ steps)}} \\
\rowcolor{basebg}
\makecell[l]{Fixed\\Reference} & \makecell[c]{193.3\\{\color{FixedGray}$\pm$ 88.6}} & \makecell[c]{0.0435\\{\color{FixedGray}$\pm$ 0.0115}} & \makecell[c]{0.270\\{\color{FixedGray}$\pm$ 0.008}} & \makecell[c]{0.502\\{\color{FixedGray}$\pm$ 0.051}} & \makecell[c]{0.215\\{\color{FixedGray}$\pm$ 0.042}} & \makecell[c]{0.811\\{\color{FixedGray}$\pm$ 0.056}} & \makecell[c]{3.779\\{\color{FixedGray}$\pm$ 0.216}} \\
\rowcolor{curriculumbg}
\makecell[l]{Failure\\Curriculum} & \makecell[c]{506.3\\{\color{FixedGray}$\pm$ 14.6}} & \makecell[c]{\textbf{0.0679}\\{\color{FixedGray}$\pm$ 0.0007}} & \makecell[c]{\textbf{0.243}\\{\color{FixedGray}$\pm$ 0.006}} & \makecell[c]{0.488\\{\color{FixedGray}$\pm$ 0.035}} & \makecell[c]{0.100\\{\color{FixedGray}$\pm$ 0.010}} & \makecell[c]{\textbf{0.622}\\{\color{FixedGray}$\pm$ 0.007}} & \makecell[c]{\textbf{2.892}\\{\color{FixedGray}$\pm$ 0.025}} \\
\rowcolor{taskbg}
\makecell[l]{Task-Aware\\Refinement} & \makecell[c]{\textbf{522.3}\\{\color{FixedGray}$\pm$ 0.6}} & \makecell[c]{0.0614\\{\color{FixedGray}$\pm$ 0.0004}} & \makecell[c]{0.256\\{\color{FixedGray}$\pm$ 0.008}} & \makecell[c]{0.482\\{\color{FixedGray}$\pm$ 0.016}} & \makecell[c]{\textbf{0.077}\\{\color{FixedGray}$\pm$ 0.003}} & \makecell[c]{0.631\\{\color{FixedGray}$\pm$ 0.003}} & \makecell[c]{3.484\\{\color{FixedGray}$\pm$ 0.032}} \\
\rowcolor{oursbg}
\mimicx & \makecell[c]{341.3\\{\color{FixedGray}$\pm$ 1.2}} & \makecell[c]{0.0581\\{\color{FixedGray}$\pm$ 0.0001}} & \makecell[c]{0.244\\{\color{FixedGray}$\pm$ 0.001}} & \makecell[c]{\textbf{0.467}\\{\color{FixedGray}$\pm$ 0.000}} & \makecell[c]{0.133\\{\color{FixedGray}$\pm$ 0.000}} & \makecell[c]{0.686\\{\color{FixedGray}$\pm$ 0.004}} & \makecell[c]{3.447\\{\color{FixedGray}$\pm$ 0.010}} \\
\midrule
\rowcolor{tablegroup}
\multicolumn{8}{l}{\textit{Kung Fu Sequence ($H=1062$ steps)}} \\
\rowcolor{basebg}
\makecell[l]{Fixed\\Reference} & \makecell[c]{97.7\\{\color{FixedGray}$\pm$ 59.4}} & \makecell[c]{0.0565\\{\color{FixedGray}$\pm$ 0.0046}} & \makecell[c]{0.249\\{\color{FixedGray}$\pm$ 0.005}} & \makecell[c]{0.634\\{\color{FixedGray}$\pm$ 0.189}} & \makecell[c]{0.229\\{\color{FixedGray}$\pm$ 0.115}} & \makecell[c]{0.548\\{\color{FixedGray}$\pm$ 0.062}} & \makecell[c]{2.084\\{\color{FixedGray}$\pm$ 0.038}} \\
\rowcolor{curriculumbg}
\makecell[l]{Failure\\Curriculum} & \makecell[c]{298.7\\{\color{FixedGray}$\pm$ 144.1}} & \makecell[c]{0.0832\\{\color{FixedGray}$\pm$ 0.0070}} & \makecell[c]{0.194\\{\color{FixedGray}$\pm$ 0.044}} & \makecell[c]{0.362\\{\color{FixedGray}$\pm$ 0.084}} & \makecell[c]{0.140\\{\color{FixedGray}$\pm$ 0.037}} & \makecell[c]{\textbf{0.220}\\{\color{FixedGray}$\pm$ 0.041}} & \makecell[c]{\textbf{0.998}\\{\color{FixedGray}$\pm$ 0.155}} \\
\rowcolor{taskbg}
\makecell[l]{Task-Aware\\Refinement} & \makecell[c]{\textbf{415.0}\\{\color{FixedGray}$\pm$ 94.0}} & \makecell[c]{0.0780\\{\color{FixedGray}$\pm$ 0.0020}} & \makecell[c]{0.209\\{\color{FixedGray}$\pm$ 0.011}} & \makecell[c]{0.358\\{\color{FixedGray}$\pm$ 0.029}} & \makecell[c]{0.108\\{\color{FixedGray}$\pm$ 0.007}} & \makecell[c]{0.255\\{\color{FixedGray}$\pm$ 0.006}} & \makecell[c]{1.249\\{\color{FixedGray}$\pm$ 0.093}} \\
\rowcolor{oursbg}
\mimicx & \makecell[c]{196.0\\{\color{FixedGray}$\pm$ 0.0}} & \makecell[c]{\textbf{0.0871}\\{\color{FixedGray}$\pm$ 0.0000}} & \makecell[c]{\textbf{0.141}\\{\color{FixedGray}$\pm$ 0.000}} & \makecell[c]{\textbf{0.296}\\{\color{FixedGray}$\pm$ 0.000}} & \makecell[c]{\textbf{0.070}\\{\color{FixedGray}$\pm$ 0.000}} & \makecell[c]{0.230\\{\color{FixedGray}$\pm$ 0.000}} & \makecell[c]{1.038\\{\color{FixedGray}$\pm$ 0.000}} \\

\bottomrule

\end{tabularx}
\par\smallskip\noindent\begin{minipage}{\linewidth}\papertablenote Body error is the temporal mean of the maximum aligned tracked-body error, pooled over rollout samples. P95 is the temporal 95th percentile of the same aligned tracked-body maximum, pooled across three rollouts per continuation seed. Anchor denotes world-coordinate torso-anchor error. Horizon is the earliest first-failure step across three rollouts ($H+1$ if none). Reward is dimensionless. Continuation seeds share a task-specific warmstart; protected Dance/Kung Fu trials retain the same checkpoint (Section~\ref{sec:setup}).\par\smallskip\hrule height 0.4pt\relax\end{minipage}
\endgroup
\end{table}

\mimicx{} achieves full-horizon success in all nine Tennis evaluations;
on the other three tasks, where every method records 0/9 full-horizon
completions, it improves both execution horizon and mean body error
over Fixed Reference.

Table~\ref{tab:core} combines robust execution horizon with mean and
95th-percentile tracking errors, capturing both sustained execution and
the magnitude of large tracking deviations.
Improvements extend beyond mean body error. Both velocity metrics improve
in all twelve pairs, while tail body and root-and-anchor errors improve in
ten (Table~\ref{tab:paired}).

% Table design v3
% Generated by scripts/build_tables.py; source hashes: tables/source_manifest.json
\begin{table}[!htb]
\centering
\caption{Paired improvements of \mimicx{} over Fixed Reference. All eight recorded metrics are retained. Relative changes are percentages, with positive values denoting reward/horizon increases or error reductions; wins count strictly positive task--continuation-seed pairs.}
\label{tab:paired}
\begingroup
\papertablestyle
\begin{tabularx}{\linewidth}{>{\raggedright\arraybackslash}p{.29\linewidth}*{6}{>{\centering\arraybackslash}X}}\toprule
\rowcolor{tablehead}
\textbf{Metric} & \textbf{\makecell{Task macro\\(\%, $\uparrow$)}} & \textbf{\makecell{Paired mean\\(\%, $\uparrow$)}} & \textbf{Wins} & \textbf{\makecell{Paired\\median\\(\%, $\uparrow$)}} & \textbf{\makecell{Minimum\\(\%)}} & \textbf{\makecell{Maximum\\(\%)}}\\
\midrule
Reward & +39.44 & +41.96 & 12/12 & +39.97 & +10.58 & +91.90 \\
Robust execution horizon & +255.57 & +320.70 & 12/12 & +255.78 & +18.07 & +904.65 \\
Mean aligned worst-body error & +25.72 & +25.65 & 12/12 & +24.26 & +7.35 & +44.58 \\
95th-percentile aligned worst-body error & +31.13 & +29.25 & 10/12 & +28.54 & -5.51 & +62.52 \\
Root-and-anchor error & +33.28 & +31.38 & 10/12 & +30.66 & -11.79 & +80.65 \\
Body linear-velocity error & +32.07 & +31.81 & 12/12 & +28.49 & +10.17 & +62.60 \\
Body angular-velocity error & +26.14 & +26.03 & 12/12 & +22.34 & +2.72 & +51.20 \\
End-effector height error (ankles) & -24.47 & -38.27 & 5/12 & -4.66 & -197.04 & +34.99 \\

\bottomrule
\end{tabularx}
\par\smallskip\noindent\begin{minipage}{\linewidth}\papertablenote For baseline $b$ and refined value $a$, let $r(b,a)=100s(a-b)/b$, with $s=+1$ for reward/horizon and $s=-1$ for errors. Task macro is $\frac14\sum_t r(\bar b_t,\bar a_t)$; paired mean is $\frac1{12}\sum_{t,k}r(b_{tk},a_{tk})$. Neither is the ratio of pooled means. Body error is the temporal mean of the maximum aligned tracked-body error, pooled over rollout samples. Body p95 is the temporal 95th percentile of the same spatial maximum. Shared-warmstart and protected-policy interpretation: Section~\ref{sec:setup}.\par\smallskip\hrule height 0.4pt\relax\end{minipage}
\endgroup
\end{table}

\subsection{Cumulative components address different execution errors}

The component comparison in Table~\ref{tab:components} identifies complementary effects of the supervision components. Failure-conditioned sampling makes
difficult segments more influential in learning. It already brings Tennis
to full-horizon execution and substantially extends the other tasks'
execution horizons. Task-aware supervision further changes the allocation
of tracking effort: on Football, for example, it further extends execution horizon while
reducing root-and-anchor error.

The full loop selects a different operating point by combining proposals
with repeated execution guards. It provides the lowest mean body error
among the four methods on Tennis and Kung Fu, and the longest Football
execution horizon. Intermediate variants lead on some other channels,
including Dance execution horizon and several root-tracking values.
These results show how coordinated refinement and explicit acceptance
criteria balance complementary execution and tracking objectives. Table~\ref{tab:channels}
keeps local height, global tracking, and velocity channels visible together.

\begin{table}[!tbp]
\begin{minipage}[t]{.49\linewidth}\vspace{0pt}
% Generated by scripts/rebuild_table_layouts.py
\centering\caption{\textbf{Complementary components.} Task-balanced means over four motions.}
\label{tab:components}
\begingroup\papertablestyle
\begin{tabularx}{\linewidth}{>{\raggedright\arraybackslash}p{.30\linewidth}*{4}{>{\centering\arraybackslash}X}}
\toprule
\rowcolor{tablehead}
Metric & \makecell{Fixed\\Ref.} & FC & TA & \mimicx \\
\midrule
\rowcolor{curriculumbg}
Success (\%) & 2.8 & \textbf{25.0} & 22.2 & \cellcolor{oursbg}\textbf{25.0} \\
Horizon (\%) & 21.5 & 62.7 & \textbf{70.2} & \cellcolor{oursbg}64.0 \\
\rowcolor{curriculumbg}
Body (m) & 0.262 & 0.215 & 0.222 & \cellcolor{oursbg}\textbf{0.196} \\
Anchor (m) & 0.219 & 0.135 & \textbf{0.111} & \cellcolor{oursbg}0.147 \\
\rowcolor{curriculumbg}
Linear (m/s) & 0.691 & \textbf{0.451} & 0.485 & \cellcolor{oursbg}0.481 \\
Angular (rad/s) & 2.955 & \textbf{1.935} & 2.393 & \cellcolor{oursbg}2.238 \\
\bottomrule\end{tabularx}
\par\smallskip{\papertablenote\raggedright FC: Failure Curriculum; TA: Task-Aware Refinement. Higher success/horizon and lower errors are better. Bold marks metric bests. Horizon is $100\min(h,H)/H$.\par}
\par\smallskip\hrule height .4pt
\endgroup

\end{minipage}\hfill
\begin{minipage}[t]{.49\linewidth}\vspace{0pt}
% Generated by scripts/rebuild_table_layouts.py
\centering\caption{\textbf{Task-critical channels.} Strongest selected event for training seed 101, evaluation seed 1001.}
\label{tab:channels}
\begingroup\papertablestyle
\begin{tabularx}{\linewidth}{l*{4}{>{\centering\arraybackslash}X}}
\toprule\rowcolor{tablehead}
Channel & Tennis & Football & Dance & \makecell{Kung\\Fu} \\
\midrule
\rowcolor{curriculumbg}
Phase (\%) & 25.1 & 100.0 & 87.6 & 37.4 \\
Local & \textbf{0.432} & \textbf{0.740} & 0.678 & 0.706 \\
\rowcolor{curriculumbg}
Global & 0.248 & 0.630 & \textbf{0.765} & \textbf{0.770} \\
Dynamics & 0.225 & 0.193 & 0.527 & 0.671 \\
\rowcolor{curriculumbg}
Dominant & Local & Local & Global & Global \\
Gate & \tabletag{oursbg}{Accept} & \tabletag{oursbg}{Accept} & \tabletag{basebg}{Protect} & \tabletag{basebg}{Protect} \\
\bottomrule

\end{tabularx}
\par\smallskip{\papertablenote\raggedright Bold identifies the dominant channel. Tennis/Football accept refinement; Dance/Kung Fu protect the current policy. Scores are dimensionless improvements; phase locates the selected event.\par}
\par\smallskip\hrule height .4pt
\endgroup

\end{minipage}
\end{table}

% Original table definitions retained in the appendix.
Body error is the temporal mean of the maximum aligned tracked-body error, pooled over rollout samples. Anchor is world-coordinate torso-anchor error; velocities average world-coordinate errors across bodies and time. Normalized horizon is $100\min(h,H)/H$ per trial. Continuation, reference, and retained-checkpoint protocols are in Section~\ref{sec:setup}.

Scores are dimensionless ($\uparrow$): each channel averages $\mathrm{clip}((b-a)/(|b|+10^{-6}),-2,2)$ across its error terms. Local: wrists, ankles, and maximum ankle-height error. Global: world anchor position plus aligned bodywise mean and maximum errors. Dynamics: anchor/body linear and angular velocity. Progress is event step divided by task horizon.

\subsection{Learning dynamics under selected supervision}

Figure~\ref{fig:main-learning} and Table~\ref{tab:learning} summarize twelve
250-update Tennis continuations.
These traces resolve the initial body-error transient, subsequent tracking
improvement, reward development, and episode-length evolution across the
full continuation.

% Table design v3
% Generated by scripts/build_tables.py; source hashes: tables/source_manifest.json
\begin{table}[!htb]
\centering
\caption{Dense Tennis learning-log summaries from all 3,000 records: four methods, three continuation seeds, and 250 recorded updates per trajectory. Entries show mean $\pm$ sample SD, computed across seeds after obtaining each trajectory's endpoint or integral. Training reward retains its logged scale and differs from evaluation reward.}
\label{tab:learning}
\begingroup
\papertablestyle
\begin{tabularx}{\linewidth}{>{\raggedright\arraybackslash}p{.16\linewidth}*{9}{>{\centering\arraybackslash}X}}
\toprule
\rowcolor{tablegroup}
Method & \multicolumn{3}{c}{\textbf{Training reward}} & \multicolumn{3}{c}{\textbf{Body error (m)}} & \multicolumn{3}{c}{\textbf{Anchor error (m)}} \\
\rowcolor{tablehead}
 & Initial & Final & AUC & Initial & Final & AUC & Initial & Final & AUC \\
\midrule
\rowcolor{basebg}
\makecell[l]{Fixed\\Reference} & \makecell[c]{0.29\\{\color{FixedGray}$\pm$ 0.03}} & \makecell[c]{19.30\\{\color{FixedGray}$\pm$ 1.29}} & \makecell[c]{15.96\\{\color{FixedGray}$\pm$ 0.55}} & \makecell[c]{1.032\\{\color{FixedGray}$\pm$ 0.021}} & \makecell[c]{0.189\\{\color{FixedGray}$\pm$ 0.008}} & \makecell[c]{0.205\\{\color{FixedGray}$\pm$ 0.002}} & \makecell[c]{0.080\\{\color{FixedGray}$\pm$ 0.015}} & \makecell[c]{0.351\\{\color{FixedGray}$\pm$ 0.015}} & \makecell[c]{0.388\\{\color{FixedGray}$\pm$ 0.004}} \\
\rowcolor{curriculumbg}
\makecell[l]{Failure\\Curriculum} & \makecell[c]{0.23\\{\color{FixedGray}$\pm$ 0.01}} & \makecell[c]{23.70\\{\color{FixedGray}$\pm$ 1.91}} & \makecell[c]{18.45\\{\color{FixedGray}$\pm$ 0.18}} & \makecell[c]{1.159\\{\color{FixedGray}$\pm$ 0.355}} & \makecell[c]{0.152\\{\color{FixedGray}$\pm$ 0.013}} & \makecell[c]{0.188\\{\color{FixedGray}$\pm$ 0.001}} & \makecell[c]{0.095\\{\color{FixedGray}$\pm$ 0.004}} & \makecell[c]{0.261\\{\color{FixedGray}$\pm$ 0.023}} & \makecell[c]{0.377\\{\color{FixedGray}$\pm$ 0.002}} \\
\rowcolor{taskbg}
\makecell[l]{Task-Aware\\Refinement} & \makecell[c]{1.12\\{\color{FixedGray}$\pm$ 0.03}} & \makecell[c]{56.33\\{\color{FixedGray}$\pm$ 1.07}} & \makecell[c]{45.89\\{\color{FixedGray}$\pm$ 0.58}} & \makecell[c]{1.159\\{\color{FixedGray}$\pm$ 0.355}} & \makecell[c]{0.163\\{\color{FixedGray}$\pm$ 0.007}} & \makecell[c]{0.197\\{\color{FixedGray}$\pm$ 0.005}} & \makecell[c]{0.095\\{\color{FixedGray}$\pm$ 0.004}} & \makecell[c]{0.332\\{\color{FixedGray}$\pm$ 0.036}} & \makecell[c]{0.436\\{\color{FixedGray}$\pm$ 0.011}} \\
\rowcolor{oursbg}
\mimicx & \makecell[c]{4.36\\{\color{FixedGray}$\pm$ 0.35}} & \makecell[c]{126.25\\{\color{FixedGray}$\pm$ 1.59}} & \makecell[c]{103.74\\{\color{FixedGray}$\pm$ 0.88}} & \makecell[c]{0.800\\{\color{FixedGray}$\pm$ 0.741}} & \makecell[c]{0.143\\{\color{FixedGray}$\pm$ 0.018}} & \makecell[c]{0.181\\{\color{FixedGray}$\pm$ 0.019}} & \makecell[c]{0.056\\{\color{FixedGray}$\pm$ 0.009}} & \makecell[c]{0.359\\{\color{FixedGray}$\pm$ 0.035}} & \makecell[c]{0.457\\{\color{FixedGray}$\pm$ 0.023}} \\
\bottomrule
\end{tabularx}
\par\smallskip\noindent\begin{minipage}{\linewidth}\papertablenote The logged aligned bodywise mean is distinct from the evaluation maximum: it averages aligned position errors across tracked bodies. Initial/final refer to updates 0/249. Normalized AUC uses trapezoidal integration divided by 249, preserving units. Shared-warmstart continuation: Section~\ref{sec:setup}.\par\smallskip\hrule height 0.4pt\relax\end{minipage}
\endgroup
\end{table}

For \mimicx{}, the curves follow the continuation selected by the execution
gate. All curves share a 250-update continuation axis; the total candidate
budgets are specified in Appendix~\ref{sec:setup}. Jointly reporting reward,
body error, and episode length reveals how fidelity and sustained execution
develop under each supervision configuration.

\subsection{Evaluation milestones across all core tasks}

Table~\ref{tab:milestones} follows 48 selected policy trajectories through
240 milestone evaluations. Task-Aware Refinement provides the highest macro horizon AUC, while
MimicX provides the lowest final body error. This complements the dense
training logs with checkpoint-level evidence on actual execution.

% Table design v3
\begin{table}[!htb]
\centering
\caption{\textbf{Execution quality throughout policy continuation.} Every cell averages three continuation seeds. The normalized horizon AUC integrates five checkpoints from 20\% to 100\% of the recorded continuation; the checkpoint columns report horizon fractions, followed by final body error. The four-task, four-method, three-seed design gives 48 curves and 240 evaluations.}
\label{tab:milestones}
\begingroup\papertablestyle
\begin{tabularx}{\linewidth}{l>{\raggedright\arraybackslash}p{.24\linewidth}*{7}{>{\centering\arraybackslash}X}}\toprule\rowcolor{tablehead}
\textbf{Task} & \textbf{Method} & \textbf{AUC} & \textbf{20\%} & \textbf{40\%} & \textbf{60\%} & \textbf{80\%} & \textbf{100\%} & \textbf{Body (m)}\\
\midrule
Tennis & Fixed Reference & 0.580 & 0.463 & 0.674 & 0.725 & 0.462 & 0.460 & 0.239 \\
\rowcolor{curriculumbg}
 & Failure Curriculum & \textbf{0.963} & 0.705 & 1.000 & 1.000 & 1.000 & \textbf{1.000} & 0.183 \\
\rowcolor{taskbg}
 & Task-Aware Refinement & 0.941 & 0.845 & 0.861 & 1.000 & 1.000 & 0.959 & 0.183 \\
\rowcolor{oursbg} & \mimicx & 0.921 & 1.000 & 1.000 & 0.842 & 0.842 & \textbf{1.000} & \textbf{0.156} \\
\midrule
Football & Fixed Reference & 0.170 & 0.566 & 0.113 & 0.097 & 0.130 & 0.118 & 0.286 \\
\rowcolor{curriculumbg}
 & Failure Curriculum & 0.574 & 0.566 & 0.570 & 0.576 & 0.578 & 0.581 & \textbf{0.237} \\
\rowcolor{taskbg}
 & Task-Aware Refinement & 0.630 & 0.568 & 0.572 & 0.577 & 0.692 & 0.791 & 0.246 \\
\rowcolor{oursbg} & \mimicx & \textbf{0.744} & 0.569 & 0.599 & 0.739 & 0.886 & \textbf{0.939} & 0.244 \\
\midrule
Dance & Fixed Reference & 0.329 & 0.307 & 0.398 & 0.318 & 0.323 & 0.247 & 0.268 \\
\rowcolor{curriculumbg}
 & Failure Curriculum & 0.550 & 0.479 & 0.478 & 0.545 & 0.613 & 0.647 & 0.243 \\
\rowcolor{taskbg}
 & Task-Aware Refinement & \textbf{0.622} & 0.471 & 0.585 & 0.667 & 0.666 & \textbf{0.666} & 0.256 \\
\rowcolor{oursbg} & \mimicx & 0.435 & 0.435 & 0.435 & 0.435 & 0.435 & 0.435 & \textbf{0.241} \\
\midrule
Kung Fu & Fixed Reference & 0.114 & 0.176 & 0.167 & 0.065 & 0.090 & 0.092 & 0.249 \\
\rowcolor{curriculumbg}
 & Failure Curriculum & 0.257 & 0.187 & 0.214 & 0.270 & 0.308 & 0.288 & 0.194 \\
\rowcolor{taskbg}
 & Task-Aware Refinement & \textbf{0.344} & 0.238 & 0.297 & 0.425 & 0.340 & \textbf{0.392} & 0.211 \\
\rowcolor{oursbg} & \mimicx & 0.185 & 0.185 & 0.185 & 0.185 & 0.185 & 0.185 & \textbf{0.141} \\

\bottomrule
\end{tabularx}
\par\smallskip{\papertablenote Milestones use evaluation seed 1001. The three retained Kung Fu policies remain constant across their milestones. The continuation axis follows the selected trajectory; total candidate-search budgets are specified in Section~\ref{sec:setup}. Final core results separately aggregate three evaluation seeds.}
\par\smallskip\hrule height .4pt
\endgroup\end{table}

\subsection{Temporal diagnostics connect fidelity to dynamics}

\begin{figure}[!htb]
\centering
\includegraphics[width=\linewidth]{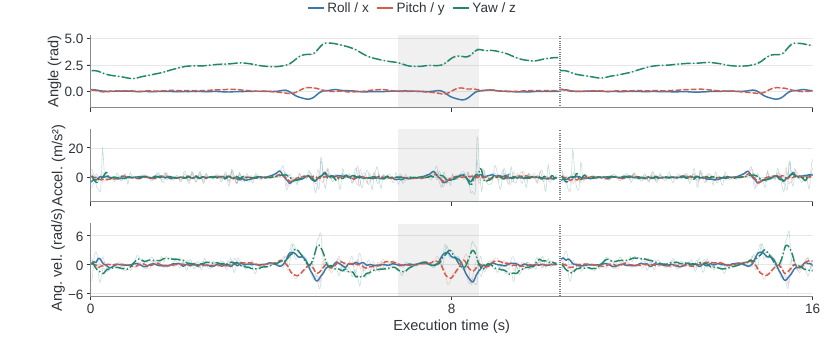}
\caption{\textbf{Aligned motion-state traces.} Orientation, acceleration and
angular velocity share a clock across 800 recorded Tennis states. Acceleration
uses position differences; angular velocity uses quaternion increments.
Derivative stencils crossing resets are excluded. Raw values and 11-sample
means distinguish short transients from the motion trend.}
\label{fig:state_traces}\label{fig:execution_dynamics}
\end{figure}

Figure~\ref{fig:execution_dynamics} aligns root orientation, acceleration,
and angular velocity on a common execution clock. These time-resolved signals
complement the episode-level gate statistics by locating rapid changes in
orientation and motion dynamics; restart-crossing derivatives are excluded.

\section{Gate and Rejection Case Studies}
\label{sec:gate-results}

Across twelve full-loop trials, six proposals are accepted and six current
policies are protected (Table~\ref{tab:gate}). Every Tennis and Football
trial accepts a proposal. Tennis's final policies complete all repeated
evaluations; Football improves its worst-case execution horizon. On Dance
and Kung Fu, the body-error guard and execution ordering retain the current
policy. Candidate exploration therefore preserves the stronger verified
execution whenever a proposed continuation regresses.

% Original table definitions retained in the appendix.
B: body-error guard; K: execution ordering. Superscripts identify the criteria determining each decision. The body guard uses the last recorded aligned worst-body error, not its temporal mean. Continuation seeds share a task-specific warmstart; protected Dance/Kung Fu trials retain the same checkpoint (Section~\ref{sec:setup}).

Reward is dimensionless; changes are relative to the current policy. Bold identifies the longest horizon, the execution criterion relevant to this decision.

The Kung Fu case in Table~\ref{tab:rejection} isolates the
reason for this separation. The current policy has reward 0.5707 and
worst-case horizon 879. Two proposals increase reward to 0.6275 and
0.5838 but reduce that horizon to 677 and 689. Both are rejected.
This case study uses its own repeated-verification protocol. The increased
reward accompanies earlier failure for both candidates, directly explaining
why the execution ordering preserves the incumbent.

\section{Breadth and Direct Policy Evaluation}
\label{sec:breadth-results}
\subsection{Direct same-motion tracker comparison}
\label{sec:external-results}

Table~\ref{tab:sonic} compares policies on the identical Tennis reference.
The standalone BeyondMimic tracker uses the MjLab reproduction linked by
the original implementation~\citep{beyondmimic2025}. \mimicx{} improves joint RMSE and root-local body error in every seed pair. Figure~\ref{fig:main-direct}(a,c) shows the common articulation
comparison, including Fixed Reference and the released SONIC controller
\citep{sonic2025}. The same metric definitions apply to SONIC. All twelve recordings cover the same
518-step motion interval without failure resets.

The two articulation metrics use a common joint order and a common G1
kinematic model. Fixing the root pose for the body-position calculation
isolates local configuration, complementing the global anchor and execution
statistics in Appendix~\ref{sec:results}. The comparison uses the shared
warmstart and per-candidate continuation budget; Table~\ref{tab:sonic}
specifies the adaptation and evaluation settings.

Football Juggling provides a second matched-reference comparison.
\mimicx{} improves both articulation metrics over BeyondMimic in every
seed pair (Table~\ref{tab:direct-all}).
Figures~\ref{fig:main-direct}(a) and \ref{fig:main-mechanisms}(d) resolve
Tennis joint error and Football body-error gains across their complete
intervals. Figure~\ref{fig:direct-temporal}(a--c) adds recording-level
means, body-error distributions, and joint-error phases.

Table~\ref{tab:direct-all} reports all common-reference task outcomes.
On Dance, SONIC attains the lowest local articulation errors. On Kung Fu,
the best joint and body means belong to Fixed Reference and SONIC,
respectively. This non-reset, full-interval evaluation measures articulation
through the complete clip; the core experiment measures reliable execution under its termination schedule and supervision.

% Table design v3
\begin{table}[!htbp]
\centering
\caption{Same-reference policy evaluation across motions. Mean $\pm$ sample SD; lower is better. Bold marks the lowest task-wise mean.}
\label{tab:direct-all}
\begingroup
\papertablestyle
\begin{tabularx}{\linewidth}{>{\raggedright\arraybackslash}p{.17\linewidth}>{\raggedright\arraybackslash}p{.29\linewidth}c*{2}{>{\centering\arraybackslash}X}}\toprule
\rowcolor{tablehead}
\textbf{Task} & \textbf{Method} & \textbf{Runs} & \textbf{\makecell{Joint RMSE\\(rad)}} & \textbf{\makecell{Local body error\\(m)}}\\
\midrule
\rowcolor{basebg}
Tennis & Fixed Reference & 3 & 0.389 $\pm$ 0.166 & 0.151 $\pm$ 0.072 \\
\rowcolor{basebg} & BeyondMimic (MjLab) & 3 & 0.389 $\pm$ 0.027 & 0.137 $\pm$ 0.021 \\
\rowcolor{sonicbg}
 & SONIC (released) & 3 & 0.857 $\pm$ 0.018 & 0.236 $\pm$ 0.001 \\
\rowcolor{oursbg} & MimicX & 3 & \textbf{0.201} $\pm$ 0.002 & \textbf{0.055} $\pm$ 0.002 \\
\midrule
\rowcolor{basebg}
Football Juggling & Fixed Reference & 3 & 0.507 $\pm$ 0.050 & 0.190 $\pm$ 0.023 \\
\rowcolor{basebg} & BeyondMimic (MjLab) & 3 & 0.446 $\pm$ 0.032 & 0.166 $\pm$ 0.007 \\
\rowcolor{sonicbg}
 & SONIC (released) & 3 & 0.808 $\pm$ 0.004 & 0.216 $\pm$ 0.001 \\
\rowcolor{oursbg} & MimicX & 3 & \textbf{0.286} $\pm$ 0.006 & \textbf{0.070} $\pm$ 0.003 \\
\midrule
\rowcolor{basebg}
Dance & Fixed Reference & 3 & 0.517 $\pm$ 0.047 & 0.167 $\pm$ 0.026 \\
\rowcolor{sonicbg}
 & SONIC (released) & 3 & \textbf{0.489} $\pm$ 0.004 & \textbf{0.134} $\pm$ 0.002 \\
\rowcolor{oursbg} & MimicX & 3 & 0.530 $\pm$ 0.001 & 0.1695 $\pm$ 0.0004 \\
\midrule
\rowcolor{basebg}
Kung Fu & Fixed Reference & 3 & \textbf{0.603} $\pm$ 0.027 & 0.205 $\pm$ 0.021 \\
\rowcolor{sonicbg}
 & SONIC (released) & 3 & 0.725 $\pm$ 0.003 & \textbf{0.189} $\pm$ 0.003 \\
\rowcolor{oursbg} & MimicX & 3 & 0.624 $\pm$ 0.015 & 0.224 $\pm$ 0.005 \\

\bottomrule
\end{tabularx}
\par\smallskip\noindent\begin{minipage}{\linewidth}\papertablenote
All rows within a task share one reference hash and a full, non-reset playback interval: 518/454/783/1,061 steps for Tennis/Football/Dance/Kung Fu. Joint error is the temporal mean of per-frame joint RMSE. Local body error uses the same fourteen-body G1 forward kinematics with root pose fixed to identity. BeyondMimic denotes the MjLab reproduction; its original-reference Dance and Kung Fu continuations are not pooled with the repaired-reference replays. Seeds and continuation budgets follow Table~\ref{tab:sonic}.
\par\smallskip\hrule height 0.4pt\end{minipage}
\endgroup
\end{table}

% Table design v3
% Generated by scripts/build_tables.py; source hashes: tables/source_manifest.json
\begin{table}[!htbp]
\centering
\caption{Additional video-task outcomes. Body error is reported in meters ($\downarrow$); bold marks the lower error within each task. Values are recorded task aggregates. Success change is in percentage points (pp).}
\label{tab:additional}
\begingroup
\papertablestyle
\begin{tabularx}{\linewidth}{>{\raggedright\arraybackslash}p{.31\linewidth}*{4}{>{\centering\arraybackslash}X}}\toprule
\rowcolor{tablehead}
\textbf{Video task} & \textbf{Fixed Reference} & \textbf{\mimicx} & \textbf{\makecell{Error reduction\\(\%, $\uparrow$)}} & \textbf{\makecell{$\Delta$ success\\(pp, $\uparrow$)}}\\
\midrule
Basketball & \textbf{0.204} & 0.217 & -6.22 & +0.0 \\
Soccer & 0.279 & \textbf{0.248} & +11.03 & +33.3 \\
Michael Jackson Dance & 0.413 & \textbf{0.326} & +21.04 & +0.0 \\
Uniandes Dance & 0.289 & \textbf{0.183} & +36.68 & +0.0 \\

\bottomrule
\end{tabularx}
\par\smallskip\noindent\begin{minipage}{\linewidth}\papertablenote Body error is the temporal mean of the maximum aligned tracked-body error, pooled over rollout samples.\par\smallskip\hrule height 0.4pt\relax\end{minipage}
\endgroup
\end{table}

\begin{figure}[!htbp]
\centering
\begin{subfigure}[t]{0.241\linewidth}\centering
\includegraphics[width=\linewidth]{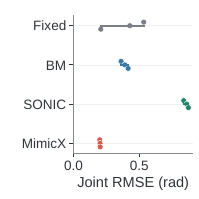}
\caption{Tennis: run means}\end{subfigure}\hfill
\begin{subfigure}[t]{0.241\linewidth}\centering
\includegraphics[width=\linewidth]{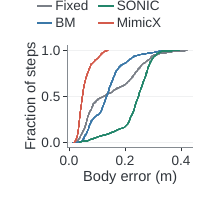}
\caption{Tennis: body CDF}\end{subfigure}\hfill
\begin{subfigure}[t]{0.241\linewidth}\centering
\includegraphics[width=\linewidth]{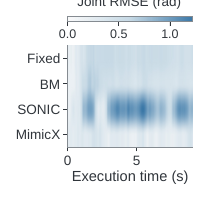}
\caption{Football: joint phases}\end{subfigure}\hfill
\begin{subfigure}[t]{0.241\linewidth}\centering
\includegraphics[width=\linewidth]{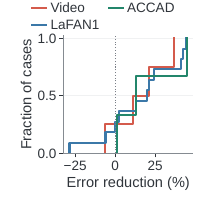}
\caption{Cross-task gains}\end{subfigure}
\caption{\textbf{Complementary tracking diagnostics across recordings and tasks.}
(a) Tennis joint-error means for all three recordings and their range.
(b) Distribution of all $3\times518$ Tennis body-error measurements per method.
(c) Football joint-error means at each of 454 control steps.
(d) Body-error reductions on 18 additional video and supplied-motion cases,
grouped by input source; the full distribution includes negative changes.
BM denotes the standalone BeyondMimic MjLab tracker; ACCAD includes the
Form~1 sequence. Panels (a--c) use the common-reference comparison, while
(d) compares MimicX with Fixed Reference on additional tasks.}
\label{fig:direct-temporal}
\end{figure}

\subsection{Additional videos and motion-reference inputs}

The additional video study extends the framework to Basketball, Soccer,
Michael Jackson Dance, and Uniandes Dance (Table~\ref{tab:additional}).
Soccer improves both body tracking and full-horizon success, while Michael
Jackson Dance and Uniandes Dance also reduce body error. Basketball retains
the same success rate with higher body error; Table~\ref{tab:additional}
reports the task-level values.

% Table design v3
% Generated by scripts/build_tables.py; source hashes: tables/source_manifest.json
\begin{table}[!htb]
\centering
\caption{Motion-reference breadth: fourteen evaluated references grouped by source dataset. Body error is reported in meters ($\downarrow$); bold compares methods within a row. Cohort reductions average per-reference relative reductions rather than taking the ratio of cohort-mean errors.}
\label{tab:breadth}
\begingroup
\papertablestyle
\begin{tabularx}{\linewidth}{>{\raggedright\arraybackslash}p{.22\linewidth}*{3}{>{\centering\arraybackslash}X}>{\raggedright\arraybackslash}p{.22\linewidth}*{3}{>{\centering\arraybackslash}X}}\toprule

\rowcolor{tablegroup}
\multicolumn{8}{l}{\textit{LaFAN1: 11 evaluated references}}\\\rowcolor{tablehead}
\textbf{Motion reference} & \textbf{Fixed} & \textbf{\mimicx} & \textbf{Red. (\%)} & \textbf{Motion reference} & \textbf{Fixed} & \textbf{\mimicx} & \textbf{Red. (\%)}\\
\midrule
Dance 1 & 0.2866 & \textbf{0.2499} & +12.81 & Jumps 1 & 0.2871 & \textbf{0.1606} & +44.06 \\
Dance 2 & 0.2961 & \textbf{0.2885} & +2.55 & Run 1 & 0.2523 & \textbf{0.1985} & +21.30 \\
Fall and Get Up 1 & \textbf{0.3746} & 0.3972 & -6.03 & Sprint 1 & 0.2861 & \textbf{0.2171} & +24.12 \\
Fall and Get Up 2 & \textbf{0.3540} & 0.3544 & -0.13 & Walk 1 & 0.2462 & \textbf{0.1419} & +42.36 \\
Fight 1 & 0.2988 & \textbf{0.2385} & +20.18 & Walk 4 & \textbf{0.2341} & 0.3008 & -28.51 \\
Fight and Sports 1 & 0.3099 & \textbf{0.1821} & +41.24 & Reference-weighted mean & 0.2932 & \textbf{0.2481} & +15.81 \\
\midrule
\rowcolor{tablegroup}
\multicolumn{8}{l}{\textit{AMASS/ACCAD: 3 evaluated references}}\\\rowcolor{tablehead}
\textbf{Motion reference} & \textbf{Fixed} & \textbf{\mimicx} & \textbf{Red. (\%)} & \textbf{Motion reference} & \textbf{Fixed} & \textbf{\mimicx} & \textbf{Red. (\%)}\\
\midrule
Form 1 (provided retargeting) & 0.2140 & \textbf{0.1858} & +13.17 & Run, Change Direction & 0.4376 & \textbf{0.4313} & +1.43 \\
Form 1 (new retargeting) & 0.3728 & \textbf{0.2054} & +44.90 & Reference-weighted mean & 0.3415 & \textbf{0.2742} & +19.84 \\
\bottomrule
\end{tabularx}
\par\smallskip\noindent\begin{minipage}{\linewidth}\papertablenote Body error is the temporal mean of the maximum aligned tracked-body error, pooled over rollout samples. Form 1 originates from AMASS/ACCAD. AMASS/ACCAD contains three references from two source motions, including two Form 1 retargetings. References are equally weighted.\par\smallskip\hrule height 0.4pt\relax\end{minipage}
\endgroup
\end{table}

Fourteen motion-reference cases provide a complementary test after the
reference-construction interface (Table~\ref{tab:breadth}): eleven LaFAN1
references and three AMASS/ACCAD references, including two retargetings of
the same Form~1 source motion. They include
locomotion, dance, fighting and recovery motions. The results show sizeable
body-error reductions for several skills, including jumps, walking,
fight-and-sports and the ACCAD form sequence. Figure~\ref{fig:direct-temporal}(d)
retains every additional video and supplied-motion case in the gain distributions.
The motion-input cases test the policy-side loop with an existing reference;
the eight video tasks exercise the complete reconstruction-to-policy path.

\subsection{Collision-scene video extension}
\label{sec:collision-extension}

Track Running, Stair Ascent, Forest Traversal, Platform Jump, and Parkour
pair video-derived motions with fixed task-equivalent collision scenes.
Each task uses a shared 3,000-update PPO checkpoint followed by equal
3,000-update Fixed Reference and Policy-in-the-Loop Refinement continuations,
with training seed 101 and 1,024 parallel environments. Evaluation seeds
1001, 2002, and 3003 yield 30 final rollouts, with terminations enabled.
The reset mixture and coordinated objective updates are described in
Section~\ref{sec:main-experiments}. Execution and tracking guards govern
acceptance independently of reward.

\begin{figure}[!htbp]
\centering
\begin{subfigure}[t]{.49\linewidth}
\includegraphics[width=\linewidth]{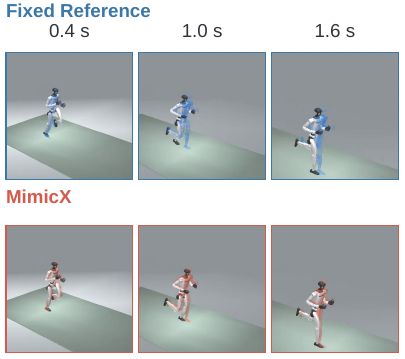}
\caption{Track Running}\end{subfigure}\hfill
\begin{subfigure}[t]{.49\linewidth}
\includegraphics[width=\linewidth]{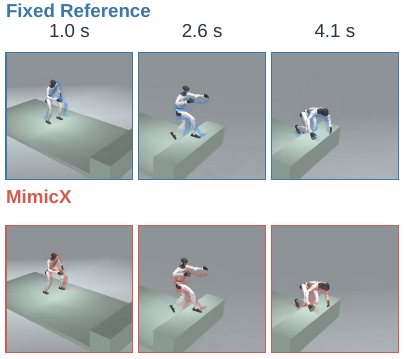}
\caption{Parkour transitions}\end{subfigure}
\caption{\textbf{Execution on additional task-equivalent collision scenes.}
Fixed Reference and MimicX at matched 20/50/80\% source phases, evaluation
seed 2002. Crops are identical between methods at each time. Track Running
completes all repeated evaluations with lower body and root errors after
refinement. Parkour panels depict the shared pre-failure interval; its
full-horizon outcome is reported in Table~\ref{tab:collision-extension}.}
\label{fig:collision-additional}
\end{figure}

Strict completion requires the full scheduled interval without a flagged
termination. Table~\ref{tab:collision-extension} reports the minimum number
of valid pre-failure steps across evaluation repeats, divided by the target
length $T$; completed executions are recorded as $T/T$. Tracking reductions
use the common pre-first-failure prefix of each paired evaluation. We compute
the temporal p95 of maximum body-position error and world-root position error,
average each over the three repeats, and report the relative reduction.
Thus completion measures the full execution, while prefix errors compare
tracking at the same motion phases before either policy resets.

Refinement increases aggregate strict completion from 60\% to 80\%.
Forest Traversal changes from 0/3 to 3/3 completions while reducing body and
root prefix errors by 79.15\% and 89.04\%, respectively. Track Running,
Stair Ascent, and Platform Jump preserve 3/3 completion with improved tracking.
Parkour improves both prefix errors and reaches 220 of 255 steps in both
conditions. Figure~\ref{fig:collision-extension} complements these results with
close views of Stair Ascent and the source-to-policy stages of Platform Jump.
Figure~\ref{fig:collision-additional} adds matched Track Running and Parkour
transitions from policy evaluations.

\subsection{Developmental refinement cases}
\label{sec:historical}

The developmental runs illustrate how execution feedback targets different
stages of refinement (Table~\ref{tab:historical}). Initialization and root-focused
practice remove the observed Tennis terminations; subsequent articulation
refinement reduces wrist errors while preserving zero terminations.
Separate Football and Dance cases illustrate task-specific continuation and
local reference repair, respectively.

% Table design v3
\begin{table}[!tbp]
\centering
\caption{\textbf{Execution-guided refinement in developmental case studies.} Source-verified before/after transitions show complementary improvements in stability, task-critical articulation and motion completion. These are selected single-rollout mechanism cases, separate from the controlled multi-seed results.}
\label{tab:historical}
\begingroup\papertablestyle
\begin{tabularx}{\linewidth}{>{\raggedright\arraybackslash}p{.23\linewidth}>{\raggedright\arraybackslash}Xrrl}\toprule\rowcolor{tablehead}
\textbf{Case} & \textbf{Measurement} & \textbf{Before} & \textbf{After} & \textbf{Change}\\
\midrule
Tennis initialization & Terminations in 800 steps & 7 & 0 & Stable rollout \\
Tennis articulation & Mean max-body error (m) & 0.1411 & 0.1202 & 14.81\% lower \\
Tennis articulation & Right-wrist error (m) & 0.0977 & 0.0497 & 49.13\% lower \\
Tennis articulation & Left-wrist error (m) & 0.0932 & 0.0453 & 51.39\% lower \\
\midrule
Football execution & Terminations in 455 steps & 1 & 0 & Complete interval \\
Dance execution & Terminations in 784 steps & 1 & 0 & Complete interval \\

\bottomrule
\end{tabularx}
\par\smallskip{\papertablenote The articulation pair preserves zero terminations. Dance combines local reference repair with continued policy learning; the rows characterize their respective recorded supervision configurations.}
\par\smallskip\hrule height .4pt
\endgroup\end{table}

\section{HLoop Execution Analysis}
\label{sec:hloop-results}

\hloop{} reduces median wall time from 241.22\,s to 124.02\,s, a
1.945$\times$ speedup over sequential execution (Table~\ref{tab:hloop}),
while preserving the same selection outcome across repetitions.
Figure~\ref{fig:main-mechanisms}(b,c) shows paired timings for all three schedulers and their stage-duration distributions.

Table~\ref{tab:hloop-raw} gives each paired repetition. All execution modes use the same fixed-policy rollout--diagnosis--selection
workload and selection criterion. Bulk-synchronous
execution has a median time of 122.26\,s. Both parallel modes substantially
shorten the feedback workload relative to sequential processing.
Dependency-driven execution exposes each ready diagnosis as its rollout
finishes, providing a scheduling interface for the policy feedback stage.

\section{Extended Related Work}
\label{sec:extended-related-work}

\paragraph{Video-derived humanoid supervision.}
SMPL-X provides an expressive human representation~\citep{smplx2019}, GVHMR
recovers world-grounded motion from monocular video~\citep{gvhmr2024}, and GMR
shows that retargeting quality materially affects downstream humanoid tracking
\citep{gmr2025}. These methods construct the motion consumed by a controller.
\mimicx{} adds an execution-feedback layer: rollout failures determine where
the reference-conditioned objective and training distribution should change.

\paragraph{Physics-based motion tracking.}
DeepMimic, AMP, PHC, and MaskedMimic establish powerful mechanisms for
reference imitation, reusable motion priors, recovery, and partial-motion
conditioning~\citep{deepmimic2018,amp2021,phc2023,maskedmimic2024}.
BeyondMimic prioritizes empirically difficult motion segments and extends a
tracker with guided diffusion~\citep{beyondmimic2025}; SONIC scales model, data,
and compute for generalist whole-body tracking~\citep{sonic2025}. In 2026,
OmniXtreme separates general motor learning from actuation-aware physical
refinement~\citep{omnixtreme2026}, and reference-and-goal multi-task RL uses
human motion as a prior for behavior that generalizes beyond the reference
distribution~\citep{referencegoal2026}. \mimicx{} uses measured policy failure to revise supervision and verifies continuation through repeated execution.

\paragraph{Adaptive execution and long-horizon skills.}
KungfuBot adapts tracking tolerances and processes contact-aware
motions~\citep{kungfubot2025}; AdaMimic adapts timing and actions from sparse
keyframes~\citep{adamimic2025}; ASAP uses real trajectories to align simulation
and robot dynamics~\citep{asap2025}. Perceptive Humanoid Parkour composes
retargeted human skills into long-horizon trajectories before tracking and
distillation~\citep{parkour2026}, while HiWET couples world-frame end-effector
accuracy to whole-body stability~\citep{hiwet2026}. These works reinforce the
importance of coordinating local goals, global motion, and dynamics. Our method
uses this structure inside a diagnosis--proposal--verification loop and applies
an execution-first gate whenever the supervision bundle changes.

\paragraph{Scene-aware video imitation.}
MeshMimic reconstructs human trajectories and scene geometry, then combines
kinematic consistency optimization with contact-invariant retargeting to
learn motion--terrain interactions from monocular video~\citep{meshmimic2026}.
MimicX complements geometric grounding by using policy execution feedback
to revise tracking objectives and practice distributions, with repeated
verification determining which continued policy is retained.

\section{Reproducibility and Artifact Organization}
\label{sec:reproducibility}

The refinement state links each motion reference, supervision proposal,
policy continuation and repeated evaluation to a versioned source record.
The accepted policy is updated only after complete candidate verification. Interrupted work resumes from the last accepted state and candidate set.

Table~\ref{tab:coverage} summarizes the evidence units. The controlled
analysis retains all tasks, training seeds and repeated evaluations;
the broader video and motion-input studies have their own denominators.
The systems comparison uses a fixed workload and checks selection outcomes
across execution modes. Each study reports its own sampling units
and evaluation protocol.

Figure provenance distinguishes input RGB, reconstructed human geometry,
robot reference, dynamic policy execution, diagnostic views and scene
presentation. The paper uses vector numerical plots and experimental images
derived from preserved masters. Quantitative results come from numerical
tables and logged trajectories; scene renderings provide spatial context
for the recorded motion.
The accompanying video collection retains input, intermediate and final
execution views so that temporal behavior can be inspected beyond the
selected stills.

\section{Qualitative Evidence and Video Correspondence}
\label{sec:qualitative}

\subsection{Human reconstruction and learned execution}

Figure~\ref{fig:registered_stages} shows distinct pipeline
stages for two tasks. Figures~\ref{fig:tennis_events} and
\ref{fig:football_events} add Fixed Reference comparisons at task-critical phases.
The sequences below connect original video frames to recovered human motion
and recorded policy states. Each upper row contains seven input frames;
the lower row progresses from three human states to four robot states.
Tennis and Football use the recovered source-phase mapping. Dance uses
nominal source and rollout timestamps over a selected interval.
Figure~\ref{fig:kungfu-video-sequence} gives the Kung Fu example in the main text;
Figure~\ref{fig:track-stairs-correspondence} adds Track Running and Stair Ascent.

\evidencefigure{\includegraphics[width=\linewidth]{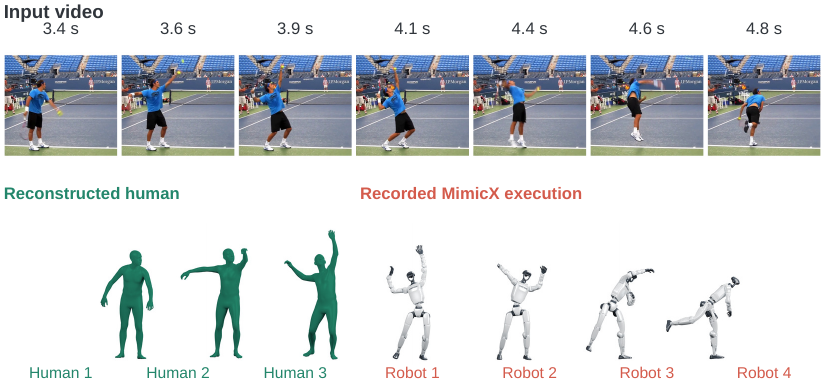}}{fig:procession_tennis}{\textbf{Tennis across time and embodiment.} Seven original input frames accompany three reconstructed human states and four policy states. Source phases associate the two rows.}
\evidencefigure{\includegraphics[width=\linewidth]{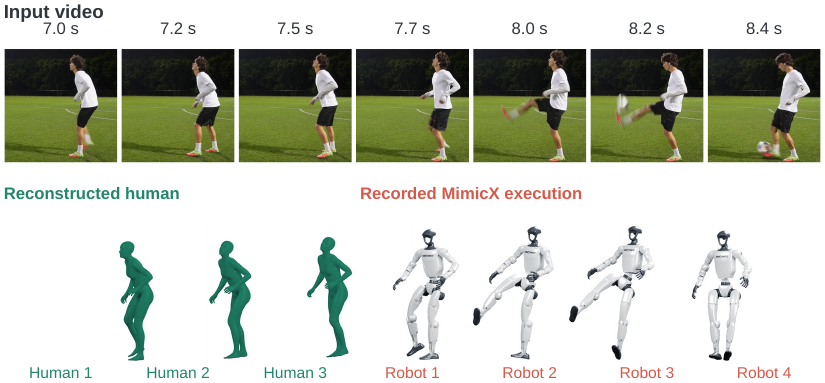}}{fig:procession_football}{\textbf{Football Juggling across time and embodiment.} Original-video frames provide the scene and action context for the reconstructed human motion and policy sequence below.}
\evidencefigure{\includegraphics[width=\linewidth]{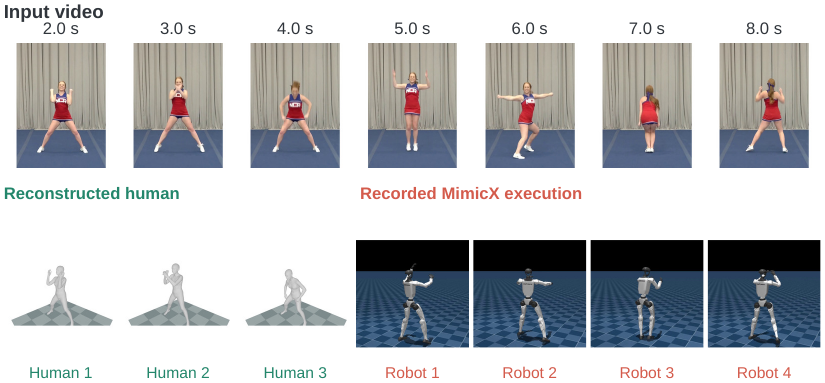}}{fig:procession_dance}{\textbf{Dance across time and embodiment.} Seven original-video frames show the selected interval; the lower row follows human reconstruction and policy execution.}

\subsection{Task-critical events reveal execution differences}

The task-event plates show Fixed Reference and \mimicx{} at selected
execution moments for Tennis, Football Juggling, Dance and Kung Fu
(Figures~\ref{fig:tennis_events}--\ref{fig:kungfu_events}). In these
plates the solid robot is the policy execution and the translucent
method-colored robot is its reference. Fixed Reference uses gray and
\mimicx{} uses coral. The comparison focuses on visible pose and
reference alignment; the accompanying numerical tables summarize every
declared trial.

\evidencefigure{\includegraphics[width=\linewidth]{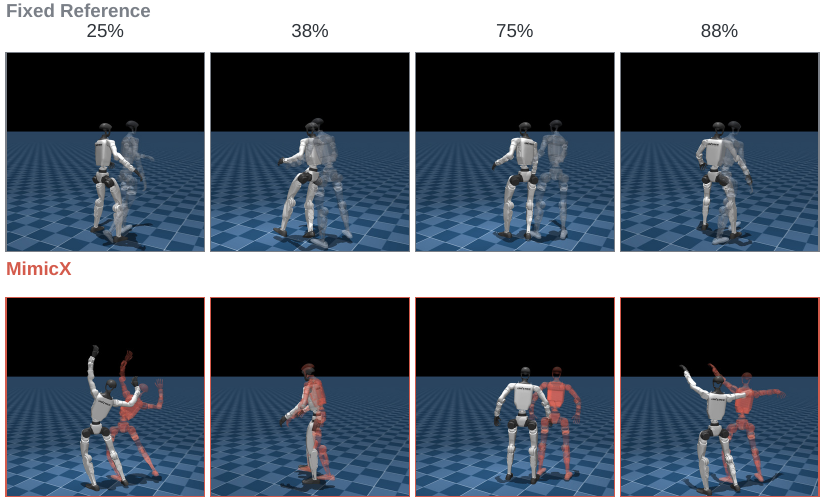}}{fig:tennis_events}{\textbf{Tennis task-critical execution.} Fixed Reference and \mimicx{} at four recorded event times. The neutral robot is the policy and the translucent method-colored robot is the reference overlay. Gray identifies Fixed Reference and coral identifies \mimicx{}.}
\evidencefigure{\includegraphics[width=\linewidth]{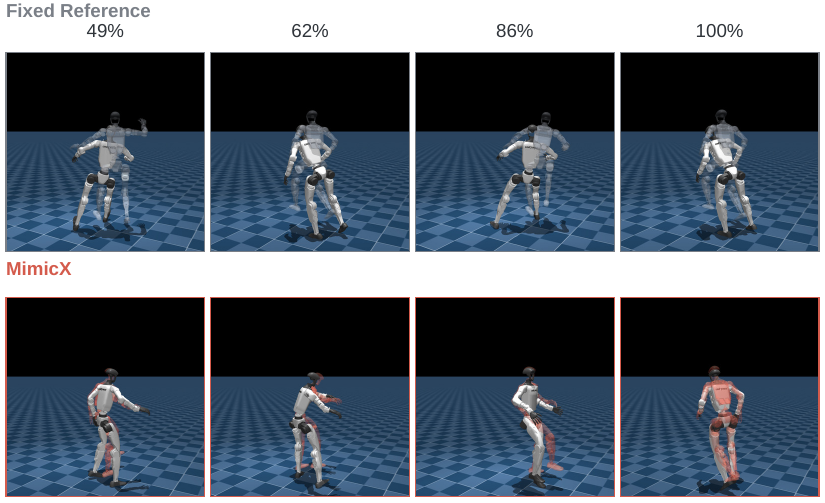}}{fig:football_events}{\textbf{Football Juggling task-critical execution.} Four recorded event times compare policy alignment with the reference motion. Alternating support and raised legs reveal whole-body juggling coordination.}
\begin{figure}[!htbp]
\centering
\begin{subfigure}[t]{.49\linewidth}\centering
\includegraphics[width=\linewidth]{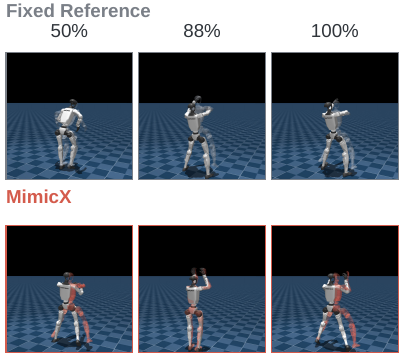}
\caption{Dance: three task-critical events}\label{fig:dance_events}
\end{subfigure}\hfill
\begin{subfigure}[t]{.49\linewidth}\centering
\includegraphics[width=\linewidth]{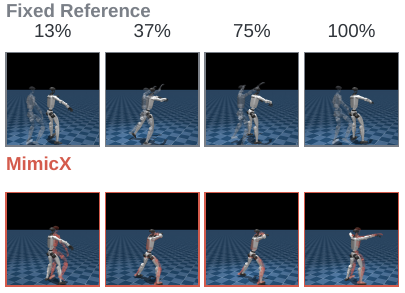}
\caption{Kung Fu: four task-critical events}\label{fig:kungfu_events}
\end{subfigure}
\caption{\textbf{Task-critical execution on articulated whole-body motions.}
Fixed Reference and MimicX are compared at matched events.
Solid robots show execution; translucent robots show method-colored references.}
\end{figure}

\begin{figure}[!htb]
\centering
\begin{minipage}[t]{0.49\linewidth}
\centering
\includegraphics[width=\linewidth]{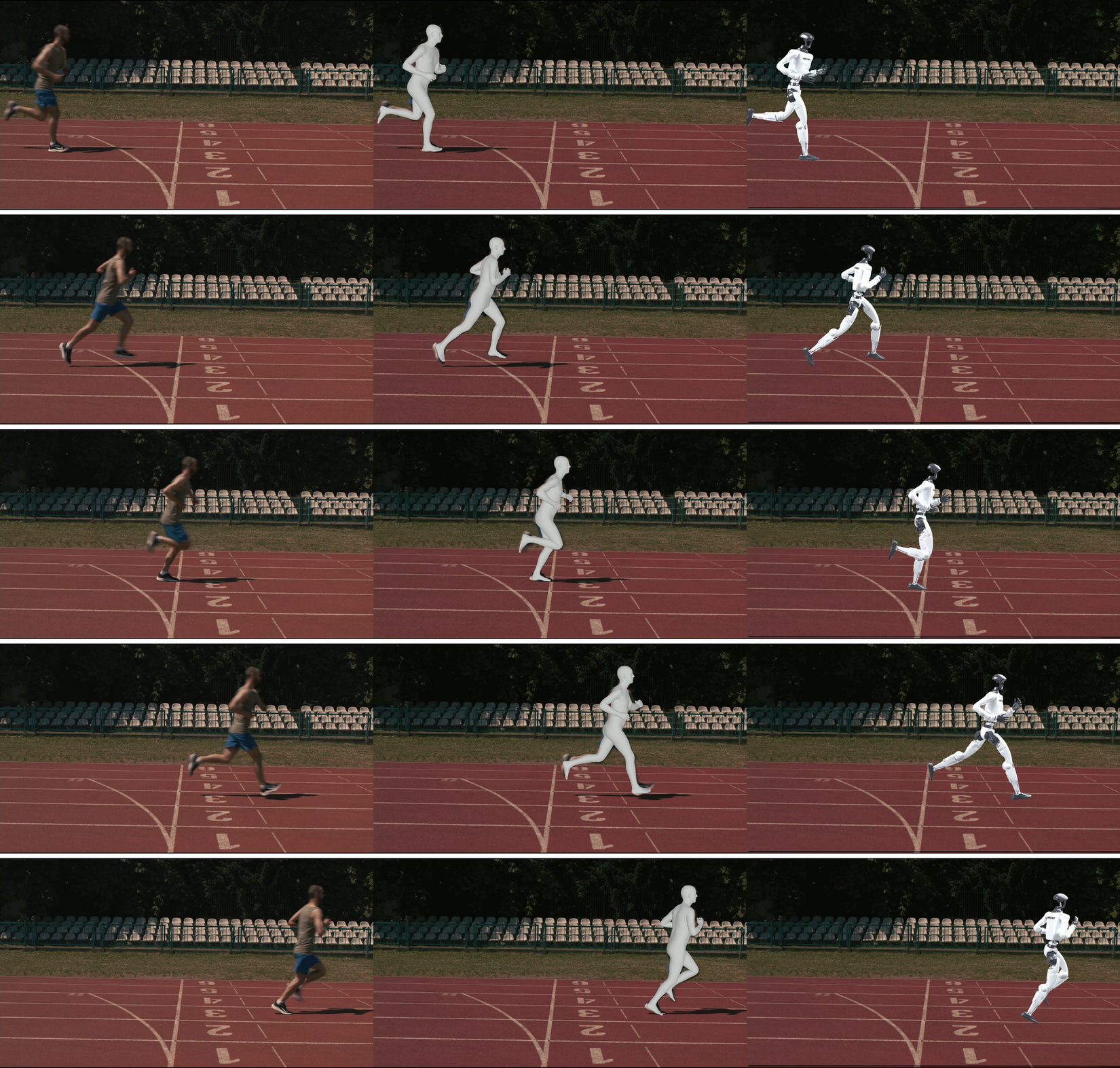}
\par\smallskip{\small (a) Track Running}
\end{minipage}\hfill
\begin{minipage}[t]{0.49\linewidth}
\centering
\includegraphics[width=\linewidth]{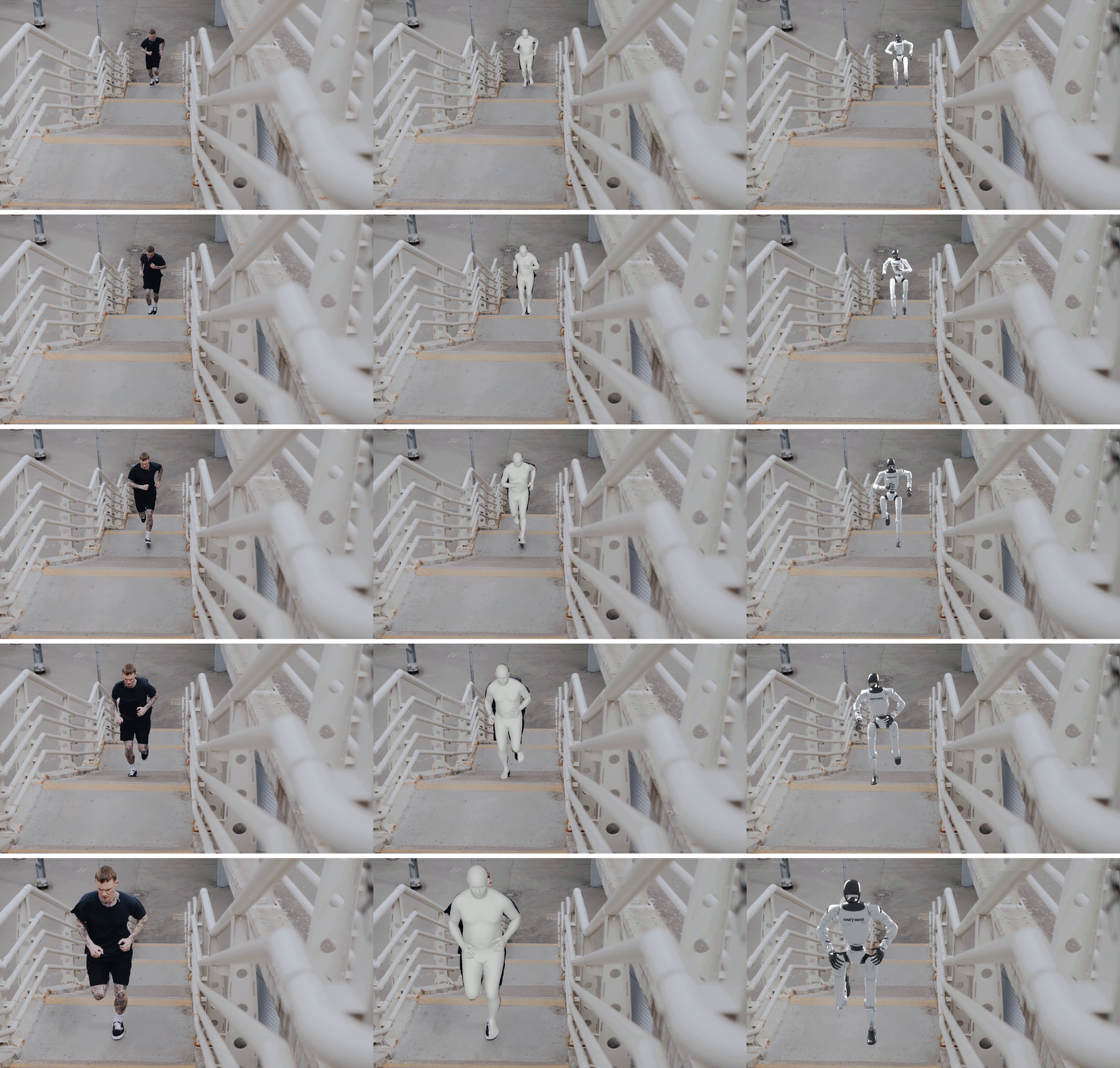}
\par\smallskip{\small (b) Stair Ascent}
\end{minipage}
\caption{\textbf{Source-view correspondence in Track Running and Stair Ascent.}
Columns show input video, reconstructed human, and G1 reference; rows follow
five synchronized instants.}
\label{fig:track-stairs-correspondence}
\end{figure}

\subsection{A continuous scene makes motion progression readable}

\begin{figure}[!tbp]
\centering
\includegraphics[width=\linewidth]{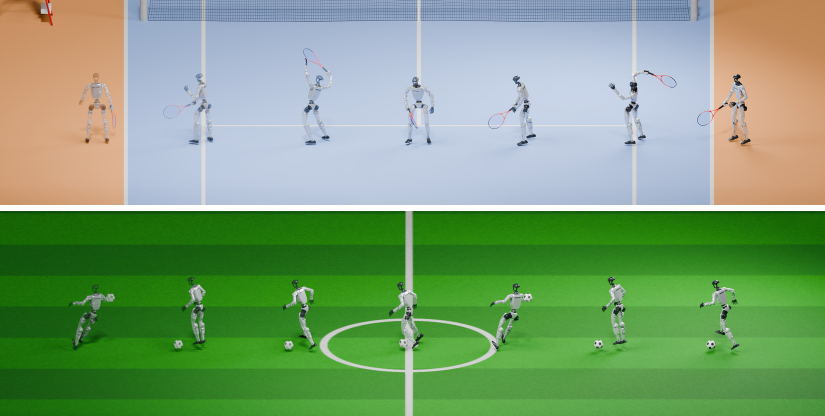}
\caption{\textbf{Video-driven motion in task scenes.} Tennis (top) and Football
Juggling (bottom), each shown as seven recorded robot states. Spatial offsets
separate the poses and increasing opacity encodes chronology. The selected
Tennis sequence is MimicX; the Football sequence is Fixed Reference. Scene
objects provide presentation context for the recorded motion.}
\label{fig:scene-cases}\label{fig:tennis_panorama}\label{fig:football_panorama}
\end{figure}

Figure~\ref{fig:scene-cases} shows two seven-state sequences
in task scenes. The Tennis view includes the racket and registered ball;
Football ball placement follows observations at the recovered source phases. Spatial offsets separate the
recorded states and increasing opacity encodes chronology. Here translucent
robots denote earlier policy states, not reference overlays.

\subsection{Scene reconstruction connects the rendered policy to the input}

Figure~\ref{fig:registered_stages} links the observed Tennis frame,
camera-space human reconstruction, policy in simulation, and the same
policy presented in a reconstructed scene. The stage panels use a common layout. The final presentation registers the recovered camera
to a Gaussian background while preserving the robot's recorded pose.
This gives the reader both an unobstructed simulation view and the
original scene context for the recorded tracking experiment.

The Tennis presentation video contains 260 frames at 25\,Hz. The foreground is
rendered at $1624\times1440$, and the source-view Gaussian background
at $2436\times2160$ from an $812\times720$ source image. Camera registration
has a median feature reprojection error of 0.647 source pixels, and all
foreground contours remain inside the output canvas. These measurements
quantify camera-registration accuracy and foreground coverage in the
reconstructed-scene presentation.

\subsection{Visual breadth beyond the four core tasks}

Figures~\ref{fig:soccer_events} and \ref{fig:uniandes_events} show two
additional video-derived tasks with positive tracking results. The
motion-reference plates in Figures~\ref{fig:lafan_events} and
\ref{fig:amass_events} illustrate the reference-input pathway. Together
with the numerical breadth table, these examples show how the same
diagnosis, proposal and verification interfaces apply after different
reference-construction routes.

\evidencefigure{\includegraphics[width=\linewidth]{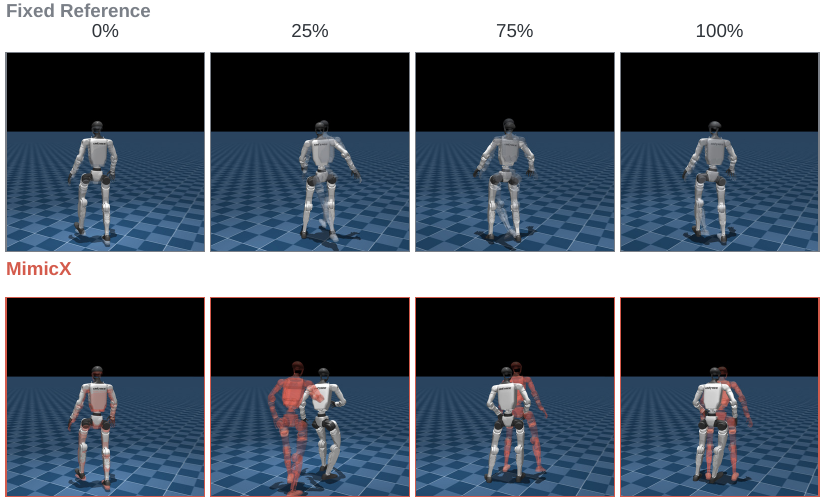}}{fig:soccer_events}{\textbf{Additional Soccer video task.} Fixed Reference and \mimicx{} at four recorded events, with method-colored reference overlays. This task is separate from the core Football Juggling video.}
\begin{figure}[!htbp]
\centering\includegraphics[width=\linewidth]{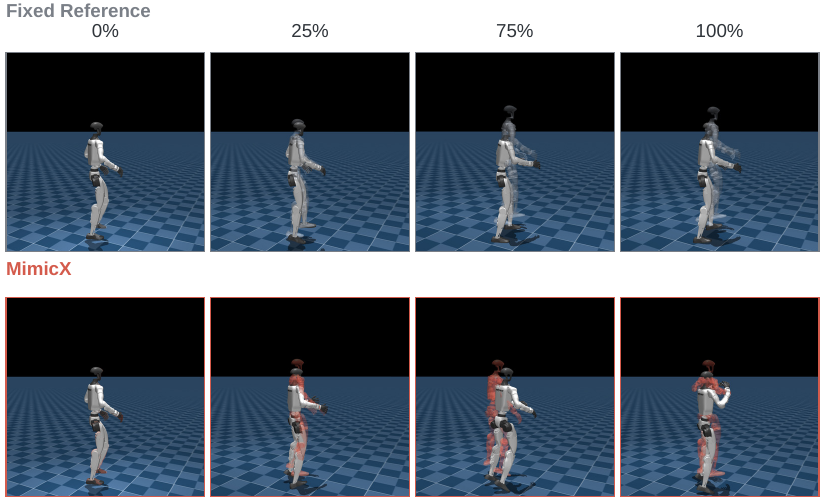}
\caption{\textbf{Uniandes Dance video task.} Four task-critical events show policy--reference alignment.}
\label{fig:uniandes_events}
\end{figure}
\evidencefigure{\includegraphics[width=\linewidth]{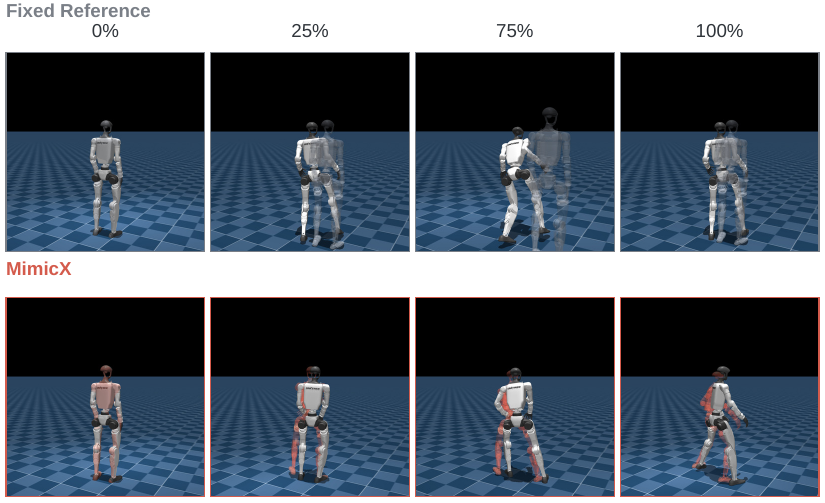}}{fig:lafan_events}{\textbf{AMASS/ACCAD Form 1: provided retargeting.} Fixed Reference and \mimicx{} on the provided robot-motion reference, with recorded policy states and method-colored reference overlays. Source attribution follows the motion audit in Table~\ref{tab:breadth}. This pathway starts from an existing motion reference.}
\evidencefigure{\includegraphics[width=\linewidth]{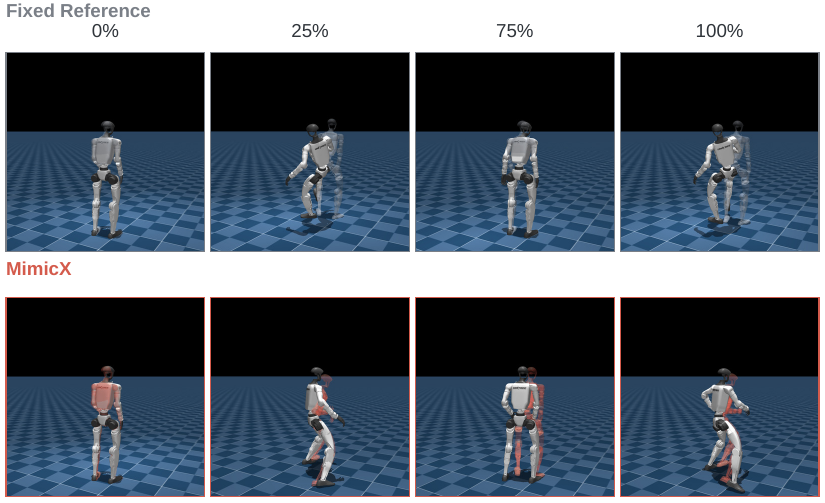}}{fig:amass_events}{\textbf{AMASS/ACCAD Form 1: new retargeting.} A second retargeting of the same human source motion, evaluated through the shared policy-refinement interface. The two plates compare executions from different robot references derived from this common human motion.}

\end{document}